\definecolor{sexybrown}{HTML}{8B4513}
\definecolor{pierCite}{HTML}{00008B}
\newcolumntype{C}[1]{>{\centering\arraybackslash}m{#1}}
\newcolumntype{R}[1]{>{\raggedright\arraybackslash}m{#1}}
\newcommand{\indep}{\raisebox{0.08em}{\rotatebox[origin=c]{90}{$\models$}}}
\newcommand*{\QED}{\hfill\ensuremath{\blacksquare}}
\newcommand{\largesquare}{\scalebox{1.5}{$\square$}}
\DeclareMathOperator*{\argmax}{arg\,max}
\DeclareMathOperator*{\argmin}{arg\,min}
\Crefname{equation}{Eq.}{Eqs.}
\Crefname{figure}{Fig.}{Figs.}
\Crefname{tabular}{Tab.}{Tabs.}
\Crefname{theorem}{Thm.}{Thms.}
\Crefname{lemma}{Lem.}{Lems.}
\Crefname{proposition}{Prop.}{Props.}
\Crefname{definition}{Def.}{Defs.}
\Crefname{algorithm}{Alg.}{Algs.}
\Crefname{corollary}{Corol.}{Corol.}
\Crefname{section}{Sec.}{Sec.}
\Crefname{condition}{Cond.}{Conds.}
\newtheorem{lemma}{Lemma}
\newtheorem{theorem}{Theorem}
\newtheorem{proposition}{Proposition}
\theoremstyle{definition}
\newtheorem{definition}{Definition}
\newtheorem{example}{Example}
\newcommand{\I}{\mathbbm{1}}
\newcommand{\D}{\Omega}
\newcommand{\G}{\mathcal{G}}
\newcommand{\M}{\mathcal{M}}
\newcommand{\N}{\mathcal{N}}
\def\*#1{\mathbf{#1}}
\def\1#1{\mathcal{#1}}
\def\2#1{\mathscr{#1}}
\def\3#1{\mathbb{#1}}
\DeclareBoldMathCommand{\u}{u}
\DeclareBoldMathCommand{\U}{U}
\DeclareBoldMathCommand{\v}{v}
\DeclareBoldMathCommand{\V}{V}
\DeclareBoldMathCommand{\x}{x}
\DeclareBoldMathCommand{\X}{X}
\DeclareBoldMathCommand{\y}{y}
\DeclareBoldMathCommand{\Y}{Y}
\DeclareBoldMathCommand{\z}{z}
\DeclareBoldMathCommand{\Z}{Z}
\DeclareBoldMathCommand{\c}{c}
\DeclareBoldMathCommand{\C}{C}
\DeclareBoldMathCommand{\r}{r}
\DeclareBoldMathCommand{\R}{R}
\DeclareBoldMathCommand{\s}{s}
\DeclareBoldMathCommand{\S}{S}
\DeclareBoldMathCommand{\w}{w}
\DeclareBoldMathCommand{\W}{W}
\DeclareBoldMathCommand{\t}{t}
\DeclareBoldMathCommand{\T}{T}
\DeclareBoldMathCommand{\A}{A}
\DeclareBoldMathCommand{\a}{a}
\DeclareBoldMathCommand{\B}{B}
\DeclareBoldMathCommand{\b}{b}
\newcommand{\supp}{\mathrm{supp}}
\newcommand{\del}{{\mathbf{\Delta}}}
\newcommand{\Pa}{\mathit{Pa}}
\newcommand{\pa}{\mathit{pa}}
\newcommand{\PA}{\mathbf{Pa}}
\newcommand{\union}{%
  \text{\raisebox{2pt}{$\hspace{0.05cm}{\scriptstyle \bigcup}\hspace{0.05cm}$}}%
}
\newcommand{\inter}{%
  \text{\raisebox{2pt}{$\hspace{0.05cm}{\scriptstyle \bigcap}\hspace{0.05cm}$}}%
}
\tikzstyle{SCM}=[>={Stealth},
\title{Partial Transportability for Domain Generalization}
\author{%
  Kasra Jalaldoust\thanks{Equal Contribution.} \hspace{0.5cm} Alexis Bellot$^{*}$\thanks{Now at Google DeepMind.} \hspace{0.5cm} Elias Bareinboim\\
  Causal Artificial Intelligence Lab\\
  Columbia University \\
  \texttt{\{kasra, eb\}@cs.columbia.edu, abellot95@gmail.com} \\
  % examples of more authors
  % \And
  % Coauthor \\
  % Affiliation \\
  % Address \\
  % \texttt{email} \\
  % \AND
  % Coauthor \\
  % Affiliation \\
  % Address \\
  % \texttt{email} \\
  % \And
  % Coauthor \\
  % Affiliation \\
  % Address \\
  % \texttt{email} \\
  % \And
  % Coauthor \\
  % Affiliation \\
  % Address \\
  % \texttt{email} \\
}
\begin{document}
\hypersetup{
    colorlinks,
   linkcolor={pierCite},
    citecolor={pierCite},
    urlcolor={pierCite}
}

\doparttoc % Tell to minitoc to generate a toc for the parts
\faketableofcontents % Run a fake tableofcontents command for the partocs

\part{} % Start the document part

\maketitle
\begin{abstract}
A fundamental task in AI is providing performance guarantees for predictions made in unseen domains. In practice, there can be substantial uncertainty about the distribution of new data, and corresponding variability in the performance of existing predictors. Building on the theory of partial identification and transportability, this paper introduces new results for bounding the value of a functional of the target distribution, such as the generalization error of a classifier, given data from source domains and assumptions about the data generating mechanisms, encoded in causal diagrams. Our contribution is to provide the first general estimation technique for transportability problems, adapting existing parameterization schemes such Neural Causal Models to encode the structural constraints necessary for cross-population inference. We demonstrate the expressiveness and consistency of this procedure and further propose a gradient-based optimization scheme for making scalable inferences in practice. Our results are corroborated with experiments.
\end{abstract}

\section{Introduction}
In the empirical sciences, the value of scientific theories arguably depends on their ability to make predictions in a domain different from where the theory was initially learned. Understanding when and how a conclusion in one domain, such as a statistical association, can be generalized to a novel, unseen domain has taken a fundamental role in the philosophy of biological and social sciences in the early 21st century. As Campbell and Stanley \cite[p. 17]{campbell2015experimental} observed in an early discussion on the interpretation of statistical inferences, ``Generalization always turns out to involve extrapolation into a realm not represented in one's sample'' where, in particular, statistical associations and distributions might differ, presenting a fundamental challenge.

As society transitions to become more AI centric, many of the every-day tasks based on predictions are increasingly delegated to automated systems. Such developments make various parts of society more efficient, but also require a notion of performance guarantee that is critical for the safety of AI, in which the problem of generalization appears under different forms. For instance, one critical task in the field is domain generalization, where one tries to learn a model (e.g. classifier, regressor) on data sampled from a distribution that differs in several aspects from that expected when deploying the model in practice. In this context, generalization guarantees must build on knowledge or assumptions on the ``relatedness'' of different training and testing domains; for instance, if training and testing domains are arbitrarily different, no generalization guarantees can be expected from any predictor \cite{david2010impossibility,wang2022generalizing}. The question becomes how to link the domains of data that are used to train a model (a.k.a., the source domains) to the domain where this model is deployed in practice (a.k.a., the target domain).

% recognised that predictions, trends, or laws occur in a particular context that has to be acknowledged, inferences are therefore contingent on the similarities and differences between the underlying distributions, or mechanisms that define them. 

To begin to answer this question, a popular type of assumption that relates source and target domains is \textit{statistical} in nature: invariances in the marginal or conditional distribution of some variables across the source and target distributions. Examples include assumptions of covariate shift and label shift (among  others) \cite{sugiyama2007covariate,storkey2008training}. Notably, generalization is justified by the stability and invariance of the causal mechanisms shared across the domains \cite{glennan1996mechanisms,machamer2000thinking}, since the distributional/statistical invariances across the domains are consequences of mechanistic/structural invariances governing the underlying data generating process. Although the induced statistical invariances, once exploited correctly, can be used as bases for generalizability. Broadly, invariance-based approaches to domain generalization \cite{peters2016causal,rojas2018invariant,arjovsky2019invariant,wang2022generalizing,muandet2013domain,li2018domain,ben-david-2006,ben2010theory,ganin2016domain} search for predictors that not only achieves small error on the source data but also maintain certain notions of distributional invariance across the source domains. Since these statistical invariances can be viewed as proxies to structural invariances, in certain instances generalization guarantees can be provided through causal reasoning \cite{Jalaldoust_Bareinboim_2024,rothenhausler2021anchor,wald2021on}. This idea can be illustrated in \Cref{fig:partial_tr_intro}. The value of variables $\{C,Y,W,Z\}$ are determined as a stochastic function of variables pointing to it, while these functions may differ across domains. The challenge is to evaluate the generalization risk of a model, e.g. $R_{P^*}(h):=\3E_{P^{*}}[(Y - h)^2]$ for $h := h(C,W,Z)=\3E_{P^1}[Y\mid C,W,Z]$, without observations from the target $P^*$. General instances of this challenge have been studied under the rubric of the theory of causal transportability, where qualitative assumptions regarding the underlying structural causal models are encoded in a graphical object, and algorithms are designed to leverage these assumptions and compute certain statistical queries in the target domain in terms of the existing source data \citep{pearl2011transportability,bareinboim2013transportability,bareinboim2014transportability,lee2020generalized,correa2019statistical,Jalaldoust_Bareinboim_2024}. Concurrent to our work, \cite{kostin2024achievable}, used a notion of partially identifiable robust risk to derive a tight lower bound for risk in the linear SCM where the expert knowledge characterizes the magnitude of the shifts in causal mechanisms.

\begin{figure*}
    \centering
    \includegraphics[width = 11cm]{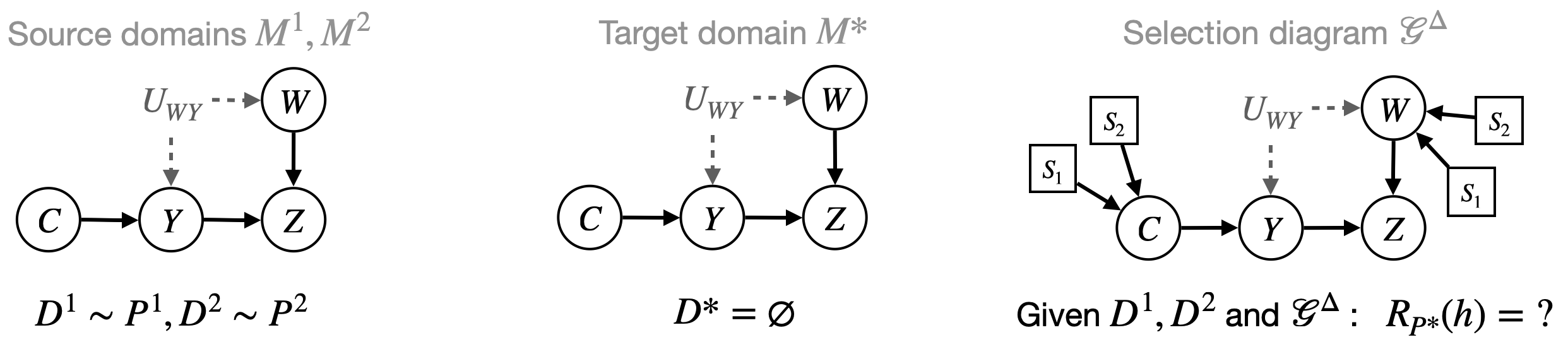}
    \caption{Illustration of the task of evaluating the generalization error of a model $h$. The mechanisms for $C$ and $W$ vary across domains.}
    \label{fig:partial_tr_intro}
\end{figure*}

Despite these advances, in practice, the combination of source data and graphical assumptions is not always sufficient to identify (uniquely evaluate) the desired statistical query, e.g., the average loss of a given predictor in the target domain. In this case, the query is said to be non-transportable\footnote{The notion of non-transportability formalizes a type of aleatoric uncertainty \cite{hullermeier2021aleatoric} arising from the inherent variability within compatible data generating systems for the target domain. In particular, it cannot be explained away with increasing sample size from the source domains.}. For example, given \Cref{fig:partial_tr_intro}, $R_{P^*}(h)$ is non-transportable for the classifier $h := h(C,W,Z)$. %In this case, the researchers resort to computing bounds for the average loss and use these bounds as proxies to the true generalization performance of the models. 
In this paper, we study the fundamental task of computing tight upper-bounds for statistical queries in a new unseen domain. This allows us to assess worst-case performance of prediction models for the domain generalization task. Our contributions are as follows:
\begin{itemize}[left=0cm]
    \item \textbf{Sections 2 \& 3.} We develop the first general estimation technique for bounding the value of queries across multiple domains (e.g., the generalization risk) in non-transportable settings (\Cref{def:partial_transportability}). Specifically, we extend the formulation of canonical models \cite{balke1997bounds,zhang2021partial} to encode the constraints necessary for solving the transportability task, and demonstrate their expressiveness for generating distributions entailed by the underlying Structural Causal Models (SCMs) (\Cref{thm:partial-TR-canonical}).
    \item \textbf{Section 4.} We adapt Neural Causal Models (NCMs) \cite{xia2021causal} for the transportability task via a parameter sharing scheme (\Cref{thm:partial-TR-neural}), similarly demonstrating their expressiveness and consistency for solving the partial transportability task. We then leverage the theoretical findings in sections 2 \& 3 to implement a gradient-based optimization algorithm for making scalable inferences (\Cref{alg:partialTR}), as well as a Bayesian inference procedure. Finally, we introduce Causal Robust Optimization (CRO) (\Cref{alg:CRO}), an iterative method to find a predictor with the best worst-case risk. %Moreover, we take an algorithmic approach to the bounding problem that allows the researchers to substitute components of the parametrized SCM models with pre-trained probabilistic models (\Cref{thm:algorithmic-partial-TR}); this  gives the practitioners more freedom in choice of the architecture, reducing the computational cost. 
    % \item  \textbf{Section 5.} We leverage the theoretical findings in sections 2 \& 3 to implement a gradient-based optimization algorithm for the partial transportability task (as well as a Bayesian inference procedure in \Cref{sec:Bayesian}). Our experiments corroborate our theoretical results.
\end{itemize}

%We adapt parameterized causal models such as  and neural causal models (NCMs) \cite{xia2021causal}  Recent work has shown the promise of proxy causal models to evaluate causal and counterfact ual queries, implemented as . For instance, Zhang et al. \cite{zhang2021partial} extended the canonical parameterization of Balke and Pearl \cite{balke1997bounds} to approximate the tight bounds on causal queries by inspecting the credible intervals returned by a Bayesian inference algorithm, while NCMs may also be used for the same task. Despite this progress, no practical method exists for estimation across different domains, also known as the transportability problem. Our main contribution is to provide 

%\textcolor{red}{Should we make a claim on deciding transportability using canonical and neural proxy models as they are equivalent to symbolic approaches, and therefore propose a general way of deciding transportability in finite samples?} \textcolor{blue}{Kasra: I can't understand your point maybe we shall discuss later?}

\textbf{Preliminaries.} We use capital letters to denote variables ($X$), small letters for their values ($x$), bold letters for sets of variables ($\X$) and their values ($\x$), and use $\supp$ to denote their domains of definition ($x\in\supp_X$). A conditional independence statement in distribution $P$ is written as $(\X \indep \Y \mid \*Z)_P$. A $d$-separation statement in some graph $\mathcal G$ is written as $(\X \indep_d \Y \mid  \Z)$. To denote $P(\Y = \y \mid \X = \x)$, we use the shorthand $P(\y\mid \x)$.  %We use $\I \{\cdot\}$ for the indicator function equal to $1$ if the statement in $\{\cdot\}$ evaluates to true, and equal to $0$ otherwise.
The basic semantic framework of our analysis relies on Structural Causal Models (SCMs) \citep[Definition 7.1.1]{pearl2009causality}, which are defined below.

\begin{definition}
     \textit{An SCM $\M$ is a tuple $M = \langle \V , \U, \1F, P \rangle$ where each observed variable $V \in \V$ is a deterministic function of a subset of variables $\boldsymbol{Pa}_V \subset \V$ and latent variables $\U_V \subset \U$, i.e., $v := f_V (\boldsymbol{pa}_V, \u_V), f_V \in \1F$. Each latent variable $U \in\U$ is distributed according to a probability measure $P(u)$. We assume the model to be recursive, i.e. that there are no cyclic dependencies among the variables.} \hfill $\square$
\end{definition} 

SCM $\M$ entails a probability distribution $P^\M(\v)$ over the set of observed variables $\V$ such that 
\begin{equation} \label{eq:truncated-prod}
    P^\M(\v) = \int_{\supp_\U} \prod_{V \in \V} P^\M( v\mid \pa_V, \u_V)\cdot P(\u) \cdot d\u,
\end{equation}
where the term $P(v\mid \pa_V, \u_V)$ corresponds to the function $f_V \in \1F$ in the underlying structural causal model $\M$. It also induces a causal diagram $\G_{\M}$ in which each $V\in\V$ is associated with a vertex, and we draw a directed edge between two variables $V_i \rightarrow V_j$ if $V_i$ appears as an argument of $f_{V_j}$ in the SCM, and a bi-directed edge $V_i \leftrightarrow V_j$ if $\U_{V_i} \cap \U_{V_j} \neq \emptyset$, that is $V_i$ and $V_j$ share an unobserved confounder. %We will use a clustering of the nodes in $\V$ called $c$-components \citep{tian2002general}: two nodes are in the same $c$-component $\C \subseteq \V$ if they are connected by a path of bi-directed edges. The collection of maximal $c$-components form a partition over $\V$ in $\G$, denoted $\3C(\G)$. 
Throughout this paper, we assume the observational distributions entailed by the SCMs satisfy the positivity assumption, that is, $P^M(\v) > 0$, for every $\v$. We will also operate non-parametrically, i.e., making no assumption about the particular functional form or the distribution of the unobserved variables.

\begin{comment}
Our approach is based on Neural Causal Models (NCMs, for short), which is a class of SCMs constrained to neural network functions and fixed distributions over the exogenous variables \cite{xia2021causal}. 

%proxy causal models to evaluate generalization error via direct evaluation of a parameterized neural model of the target environment.

\begin{definition}[Neural Causal Model constrained by $\G$]
    \label{def:ncm}
    Given a causal diagram $\G$, a $\G$-constrained Neural Causal Model ($\G$-NCM) $\1M_\theta$ is an SCM $\langle \V , \U, \1F, P \rangle$ such that $\U = \{U_{\C} : \C \in \3C(\G)\}$, and $\D_{U_{\C}}= [0, 1]$ for all $U_{\C}\in\U$; $\1F = \{f_V: V \in \V\}$, where each $f_V$ is a feedforward neural network parameterized by $\theta_V \in \boldsymbol\theta$ mapping values of $U_{V} \union \Pa_V$ to values of $V$ for $U_V = \{U_\C : U_\C \in \U, V \in \C\}$ and $\Pa_V = Pa_\G(V)$; $P(U)$ is defined s.t. $U \sim \text{Unif}(0, 1)$ for each $U\in\U$.
\end{definition}

%$c$-components form a partition over exogenous variables: a $c$-component $\C\subseteq \V$ is said to cover an exogenous variable $U$ if $U \in \bigcup_{V\in\*C} \*U_V$. We denote with $\*C_U$ the $c$-component covering $U$. As an example, the diagram in \Cref{fig:examples} has $c$-components $\{X_1,X_2\}$ and $\{W,Y\}$; and $\*C_{U_{X_1,X_2}} = \{X_1,X_2\}$, $\*C_{U_{W,Y}} = \{W,Y\}$. For a more detailed survey on SCMs, we refer readers to \cite{pearl2009causality,bareinboim:etal2020}.
\end{comment}

%\textcolor{red}{Here include details of a generic parameterization to solve the partial TR task by searching over parameterized SCMs. Give general results on the completeness of the task if the parameterization is expressive. Potentially introduce canonical models and NCMs as candidate expressive parameterizations.}

\vspace{-0.1in}
\section{Risk Evaluation through Partial Transportability}
\label{sec:partial_transportability}
\vspace{-5pt}

In this section, we focus on challenges of the domain generalization problem through a causal lens, in particular regarding assessment of average loss of a given classifier in the target domain. We study a system of variables $\V$ where $Y\in\V$ is a categorical outcome variable and consider a classifier $h:\supp_\X \rightarrow \supp_Y$ mapping a set of covariates $\X\subset\V$ to the domain of the outcome. The setting of domain generalization is characterized by multiple domains, defined by SCMs $\3M: \{ \M^{1}, \dots, \M^K, \M^*\}$ that entail distributions $\3P=\{P^{\M^{1}}, \dots, P^{\M^K}\}$ and $P^{\M^{*}}$ over $\V$. We are given a classifier $h:\supp_\X \to \supp_{Y}$, and the objective is to evaluate its risk in the domain $\M^*$ which is defined as,
\begin{equation}
    R_{P^*}(h) := \mathbb{E}_{P^*}[\1L(Y,h(\X))],
\end{equation}
where $\1L: \supp_Y \times \supp_Y \to \mathbb{R}^+$ is a loss function. The following example illustrates these notions.
%\footnote{For brevity, we will sometimes omit the SCM notation and use the shorthand $P^{i}:= P_{\1M^{i}}, \G^i:=\G_{\1M^{i}}$, for $i=*,1,\dots, k$ to denote distributions and graphs.}. 

\begin{example}[Covariate shift] \label{ex:covariate_shift} A common instance of the domain generalization problem considers source and target domains $\3M:\{M^1,M^*\}$ over $\V=\{\X,Y\}$ and $\U = \{U_\X, U_Y\}$ defined by 
    \begin{align*}
        &\M^1 : \begin{cases}
            \1F^1: \begin{cases}
            \X \gets f^1_\X(U_\X)\\
            Y \gets f_Y(\X, U_Y)
            \end{cases}\\
            P^1(\U) = P^1(U_\X)\cdot P(U_Y)
        \end{cases} &
        &\M^* : \begin{cases}
            \1F^*: \begin{cases}
            \X \gets f^*_\X(U_\X)\\
            Y \gets f_Y(\X, U_Y)
            \end{cases}
            \\
            P^*(\U) = P^*(U_\X)\cdot P(U_Y)
            \end{cases}
    \end{align*}
    Here, because $P^1(U_\X) \neq P^*(U_\X)$, this implies via Eq. \ref{eq:truncated-prod} that the covariate distributions are different, i.e., $P^1(\X)\neq P^*(\X)$. Still, the label distribution conditional on covariates is invariant, i.e., $P^1(Y\mid \X)= P^*(Y\mid\X)$, also known as the covariate shift setting. Accordingly, the risk of a classifier $h:=h(\x)$ can be written as,
    \begin{equation}
        R_{P^*}(h) = \hspace{-0.6cm}\int\displaylimits_{\supp_Y \times \supp_\X}\hspace{-0.6cm}\1L(y,h(\x))P^*(y,\x)\cdot dy d\x=\hspace{-0.6cm}\int\displaylimits_{\supp_Y \times \supp_\X}\hspace{-0.6cm}\1L(y,h(\x))P^1(y\mid\x)P^*(\x)\cdot dy d\x.
    \end{equation}
    We will consider the problem of quantifying the variation in $R_{P^*}(h)$ subject to variation in $P^*(\x)$ that would be consistent with partial observations from these domains, e.g. samples from $P^1(\X,Y)$, and assumptions about the commonalities and discrepancies across the domains. \hfill $\largesquare$
\end{example}

To describe more general discrepancies in the mechanisms between the SCMs, we adapt the notion of domain discrepancy and selection diagram introduced in \cite{lee2020generalized}.

\begin{definition} [Domain discrepancy] \label{def:delta}
For SCMs $\M^i,\M^j$ ($i,j \in \{*,1,2,\dots,K\}$) defined over $\V$, the domain discrepancy set $\Delta_{ij} \subseteq \V$ is defined such that for every $V \in \Delta_{ij}$ there might exist a discrepancy $f^{\M^i}_V \neq f^{\M^j}_{V}$, or $P^{\M^i}(\u_V) \neq P^{\M^j}(\u_V)$. For abbreviation, we denote $\Delta_{i*}$ as $\Delta_{i}$. \hfill$\largesquare$
\end{definition}

In words, if an endogenous variable $V$ is not in $\Delta_{ij}$, this means that the mechanism for $V$ (i.e., the function $f_V$ and the distribution of exogenous variables $P(\u_V)$) are structurally invariant across $\M^i,\M^j$. What follows integrates the domain discrepancy sets into a generalization of causal diagrams to express qualitative assumptions about multiple SCMs \cite{pearl2011transportability,correa2019statistical}.

\begin{definition} [Selection diagram] 
The selection diagram $\G^{\Delta_{i}}$ is constructed from $\G^i$ ($i\in \{1,2,\dots, T\}$) by adding the selection node $S_{i}$ to the vertex set, and adding the edge $S_{i} \to V$ for every $V \in \Delta_{i}$. The collection $\G^\del = \{\G^*\} \union \{\G^{\Delta_{i}}\}_{i\in \{1,2,\dots,T\}}$ encodes the graphical assumptions. Whenever the causal diagram is shared across the domains, a single diagram can depict $\G^{\del}$. \hfill$\largesquare$ 
\end{definition}

Selection diagrams extend causal diagrams and provide a parsimonious graphical representation of the commonalities and disparities across a collection of SCMs. The following example illustrates these notions and highlights various subtleties in the generalization error of different predictors.

% \begin{table}[t]
% \centering
% \caption{Comparing performance of classifiers in Example \ref{ex:anti-causal-prediction}}
% \label{tab:classifier_losses}
% \begin{tabular}{@{}lcccc@{}}
% \toprule
% Classifier       & $R_{P^{M^1}}$ & $R_{P^{M^2}}$ &  $R_{P^{M^*}}$&  worst-case risk \\ \midrule
% $h_1(\c, w) := w \oplus \bigoplus_{c \in \c} c$ &          1\%    &  4\%  &     49\%            &    84\%     \\
% $h_2(\c) := \bigoplus_{c \in \c} c$    &        20\%         &    20\%      & 20\%   & 20\%       \\
% $h_3(z) := z$             &           3\%      &       5\%          &           4\%  &     7\%    \\ \bottomrule
% \end{tabular}
% \end{table}

\begin{wraptable}{r}{6cm}
\vspace{-0.1cm}
\centering
    \begin{tabular}{@{}lcccc@{}}
    \toprule
    Classifier       & $R_{P^{\M^1}}$ & $R_{P^{\M^2}}$ &  $R_{P^{\M^*}}$ \\ \midrule
    $h_1(\c, w)$ &          1\%    &  4\%  &     49\%            \\
    $h_2(\c)$    &        20\%         &    20\%      & 20\%         \\
    $h_3(z) $             &           3\%      &       5\%          &           4\%   \\ \bottomrule
    \end{tabular}
\caption{Classifiers in \Cref{ex:anti-causal-prediction}.}
\label{tab:classifier_losses}
\end{wraptable}

\begin{example}[Generalization performance of classifiers] \label{ex:anti-causal-prediction}
\normalfont
Consider the SCMs $\M^i$ ($i \in \{1,2,*\}$) over the binary variables $\X = \{C_1,C_2,\dots, C_{10}\} \union \{W,Z\}$ and $Y$, defined as follows:
% \begin{align*}
%         P^i(\U): \begin{cases}
%             \forall 1\leq j\leq 10: U_{C_j} \sim \begin{cases}
%                 \mathrm{Bern}(0.1) \quad \text{ if $i=1$}\\
%                 \mathrm{Bern}(0.5) \quad \text{ if $i=2$}\\
%                 \mathrm{Bern}(0.7) \quad \text{ if $i=*$}
%             \end{cases}\\
%             U_{YW} \sim \mathrm{Bern}(0.2)\\
%             U_{W} \sim \begin{cases}
%                 \mathrm{Bern}(0.01) \quad \text{ if $i=1$}\\
%                 \mathrm{Bern}(0.02) \quad \text{ if $i=2$}\\
%                 \mathrm{Bern}(0.5) \quad \text{ if $i=*$}
%             \end{cases}\\
%             U_Z \sim \mathrm{Bern}(0.9)
%         \end{cases}&
%     &\1F^i: \begin{cases}
%     \C \gets \U_\C\\
%     Y \gets  U_{YW} \oplus \bigoplus_{C \in \C} C\\
%     W \gets U_{YW} \oplus U_W\\
%     Z \gets \begin{cases}
%         Y \text{ if } U_Z=1\\
%         W \text{ otherwise.}
%     \end{cases}
%     \end{cases}
% \end{align*}
\begin{align*}
        P^i(\U)&: \begin{cases}
            \forall 1\leq j\leq 10: U_{C_j} \sim 
                \mathrm{Bern}(0.1) \text{ if $i=1$ }
                \mathrm{Bern}(0.5) \text{ if $i=2$ }
                \mathrm{Bern}(0.7) \text{ if $i=*$ }
            \\
            U_{YW} \sim \mathrm{Bern}(0.2)\\
            U_{W} \sim 
                \mathrm{Bern}(0.01)\text{ if $i=1$ }
                \mathrm{Bern}(0.02) \text{ if $i=2$ }
                \mathrm{Bern}(0.5) \text{ if $i=*$ }
            \\
            U_Z \sim \mathrm{Bern}(0.9)
        \end{cases}\\
        \1F^i&: \C \gets \U_\C, \hspace{0.1cm} Y \gets  U_{YW} \oplus \bigoplus_{C \in \C} C, \hspace{0.1cm}W \gets U_{YW} \oplus U_W,\hspace{0.1cm} Z \gets Y\cdot U_Z + W \cdot (1-U_Z)
\end{align*}
$\bigoplus$ denotes the xor operator, \textit{i.e.}, $A\bigoplus B$ evaluates to 1 if $A\neq B$ and evaluates to 0 if $A=B$. Notice that the distribution of exogenous noise associated with $\C_{1:10}$ and $\{W\}$ differs across the domains. Consider three baseline classifiers $h_1(\c, w) := w \oplus \bigoplus_{c \in \c} c, h_2(\c) := \bigoplus_{c \in \c} c, h_3(z) := z $ evaluated on data from $P^1,P^2,P^*$ with the symmetric loss function $\1L(Y,h(\X)) = \I\{Y \neq h(\X)\}$. Their errors are given in \Cref{tab:classifier_losses}. Notice that $h_1$ has almost perfect accuracy on both source distributions, but does not generalize to $\M^1$ as it uses the unstable feature $W$, incurring $50\%$ loss. This observation indicates that mere minimization of the empirical risk might yield arbitrarily large risk in the unseen target domain. $h_2$ uses the features $\C$ that are the direct causes of $Y$, also known as the causal predictor \cite{peters2016causal,arjovsky2019invariant}, and yields a stable loss of $20\%$ across all domains. On the other hand, $h_3$ uses only $Z$ that is a descendant of $Y$, yet achieves a small loss across all domains as the mechanism of $Z$ is assumed to be invariant. This observation is surprising, because $h_3$ is neither a causal predictor nor the minimizer of the empirical risk, yet it performs nearly optimally on all domains. \hfill $\largesquare$

\end{example}

Example \ref{ex:anti-causal-prediction} illustrates potential challenges of the domain generalization problem, particularly regarding the variation of the risk of classifiers across the source and target domains. The following definition introduces the problem of ``partial transportability'' which is the main conceptual contribution of our paper. The objective is bounding a statistic of the target distribution using the data and assumptions available about related domains.

%This observation motivates the need for a systematic analysis of classification risk across multiple domains. %To describe discrepancies in the mechanisms between the SCMs, we adapt the notion of domain discrepancy and selection diagram introduced in \cite{lee2020generalized}.  
%Example \ref{ex:anti-causal-prediction} In particular, we are interested in computing a statistic under the target distribution from the available source distributions given the expert knowledge as assumptions encoded in a selection diagram. This task is known as ``transportability" in the literature \cite{pearl2011transportability,bareinboim2013transportability,bareinboim2014transportability,correa2019statistical,correa2020general,lee2020generalized}, and below is a formal definition. \textcolor{blue}{Actually the task of partial transportability is new but this sentence looks like it has been proposed before.} \textcolor{red}{You're right! I will revise these examples I think this is because I moved them around without updating them.}

\begin{definition} [Partial Transportability] \label{def:partial_transportability} Consider a system of SCMs $\mathbb{M}: \{\M^1, \M^2, \dots, \M^K, \M^*\}$ that induces the selection diagram $\mathcal{G}^{\del}$ over the variables $\V$ and entails the distributions $\mathbb{P}: \{P^1(\v), P^2(\v), \dots, P^K(\v)\}$ and $P^*(\v)$. A functional $\psi: \Omega_\V \to \mathbb{R}$ is partially transportable from $\mathbb{P}$ given $\mathcal{G}^\del$ if,
\begin{equation}
 \3E_{P^{\mathcal{M}^*_0}}[\psi(\V)] \leq q_{\max}, \forall \text{ SCMs } \mathbb{M}_0 \text{ that entail } \mathbb{P} \text{ and induce } \mathcal{G}^\del,
\end{equation}
where $q_{\max} \in \3R$ is a constant that can be obtained from $\mathbb{P}$ given $\mathcal{G}^\del$. \hfill $\largesquare$
\end{definition}

For instance, finding the worst-case performance of a classifier based on the source distributions given the selection diagram is a special case of partial transportability with $\psi(\x,y):= \mathcal{L}(y,h(\x))$. In principle, this task is challenging as the exogenous distribution $P^*(\U_V)$ and structural assignments $f^*_V$ of variables $V\in\V$ that do not match with any of the source domains could be arbitrary. In the following section, we will define tractable parameterization of $\{P(\U), \1F\}$ to derive a systematic approach to solving partial transportability tasks.

\section{Canonical Models for Partial Transportability}
\vspace{-5pt}
We begin with an example to illustrate how one might approach parameterizing a query such as $\3E_{P^{\mathcal{M}^*}}[\psi(\V)]$, \textit{e.g.}, the generalization error, to consistently solve the partial transportability task.

\begin{example} [The bow model] \label{ex:bow}
\normalfont
Let $\X := \{X\} $ be a single binary variable, and $Y$ be a binary label. Consider two source domains defined by the following SCMs:
\begin{align*}
    &\M^1 : \begin{cases}
        P^1(\U): \begin{cases}
            U_X \sim
                \mathrm{Bern}(0.2)  \\
            U_Y \sim \mathrm{Bern}(0.05)\\
            U_{XY} \sim \mathrm{Bern}(0.95) \\
        \end{cases}\\
    \1F^1: \begin{cases}
    X \gets U_X \oplus U_{XY}\\
    Y \gets 
        (X \oplus U_{XY}) \oplus U_Y\\
    \end{cases}
    \end{cases} 
    &\M^2 : \begin{cases}
        P^2(\U): \begin{cases}
            U_X \sim \mathrm{Bern}(0.9) \\
            U_Y \sim \mathrm{Bern}(0.05) \\
            U_{XY} \sim \mathrm{Bern}(0.95) 
        \end{cases}\\
    \1F^2: \begin{cases}
    X \gets U_X \oplus U_{XY} \\
    Y \gets
        (X \oplus U_{XY}) \vee U_Y \\
    \end{cases}
    \end{cases} 
\end{align*}

\begin{comment}
\begin{align*}
    \M^* : \begin{cases}
        P^*(\U): \begin{cases}
            U_X \sim \mathrm{Bern}(0.3)\\
            U_Y \sim \mathrm{Bern}(0.1)\\
            U_{XY} \sim \mathrm{Bern}(0.6)
        \end{cases}\\
    \1F^*: \begin{cases}
    X \gets U_X \wedge U_{XY}\\
    Y \gets (X \vee U_{XY}) \oplus U_Y
    \end{cases}
    \end{cases}
\end{align*}
\end{comment}

% Suppose The causal diagram underlying the target SCM coincides with the source SCMs, as shown in Figure \ref{fig:bow} (in black). The domain discrepancy sets the SCMs in Example \ref{ex:bow} are, $\Delta_{1} = \{X\}, \Delta_{2} = \{Y\}$.
% The corresponding selection diagram is shown in Figure \ref{fig:bow} (in black and blue).

% Notice that in reality, we do not have access to the specification of the SCMs $\M^1,\M^2$, but only the observational data from $P^1(x,y),P^2(x,y)$. Next, we systematically use the source distributions and the selection diagram to assess the risk of this classifier in the target domain.

The task is to evaluate the generalization error of the classifier $h(x) = \neg x$. $h$ can be shown optimal in both source domains: achieving $R_{P^{1}}(h) \approx 0.11$ and $R_{P^{2}}(h) \approx 0.06$. However, it is unclear whether it generalizes well to a target domain $\M^*$, given the domain discrepancy sets $\Delta_{1} = \{X\}, \Delta_{2} = \{Y\}$. \hfill $\largesquare$
\end{example}

\begin{wrapfigure}{r}{6cm}
\vspace{-0.4cm}
\begin{subfigure}[t]{0.45\linewidth}\centering%(d)
  \includegraphics[scale=0.3]{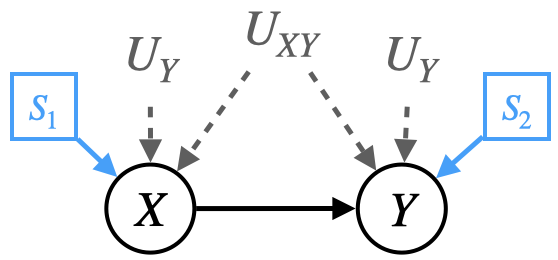}
\caption{$\G^\del$}
\label{fig:cm:a}
\end{subfigure}\quad
\begin{subfigure}[t]{0.45\linewidth}\centering%(d)
  \includegraphics[scale=0.26]{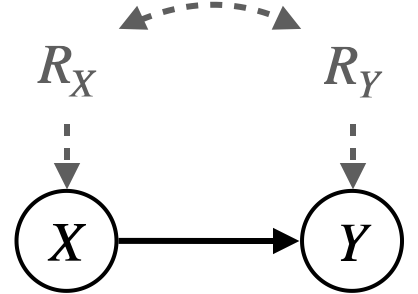}
\caption{$\G^\1N$}
\label{fig:cm:b}
\end{subfigure}
  \caption{Selection diagram \& Canonical param.}
  \label{fig:cm}
  \vspace{-0cm}
\end{wrapfigure}
Balke and Pearl \cite{balke1997bounds} derived a canonical parameterization of SCMs such as $\{\M^1,\M^2,\M^*\}$ in \Cref{ex:bow}. They showed that it is sufficient to parameterize $P(\U)$ with correlated discrete latent variables $R_X,R_Y$, where $R_X$ determines the value of $X$, and $R_Y$ determines the functional that decides $Y$ based on $X$. The causal diagrams are shown in Figure \ref{fig:cm}. Canonical SCMs entails the same set of distributions as the true underlying SCMs, \textit{i.e.} are equally expressive. In particular, Zhang and Bareinboim \cite{zhang2021partial} showed that for every SCM $\M$, there exists an SCM of the described form specified with only a distribution $P(r_X,r_Y)$, where, $\supp_{R_X} = \{0,1\}, \quad \supp_{R_Y} = \{y=0, y=1, y=x, y=\neg x\}$. The joint distribution $P(r_X,r_Y)$ can be parameterized by a vector in 8-dimensional simplex, and entails all observational, interventional and counterfactual variables generated by the original SCM.

The following definition by Zhang et al. \cite{zhang2021partial} provides a general formulation of canonical models.

\begin{definition}[Canonical SCM] \label{def:canonical_scm}
    A canonical SCM is an SCM $\1N = \langle \U, \V, \1F, P(\U)\rangle$ defined as follows. The set of endogenous variables $\V$ is discrete. The set of exogenous variables $\U = \{R_V: V \in \V\}$, where $\supp_{R_V} = \{1, \dots , m_V\}$ and $m_V = |\{h_V : \supp_{pa_V} \rightarrow \supp_V \}|$. For each $V \in \V$, $f_V \in \1F$ is defined as $f_V(\pa_V, r_V ) = h^{(r_V)}_V(\pa_V)$.
\end{definition}

\addtocounter{example}{-1}
\begin{example} [continued] Consider extending the canonical parameterization to solve the partial transportability task by optimization. Each SCM $\M^1,\M^2,\M^*$ is associated with a canonical SCM $\N^1,\N^2,\N^*$. with exogenous variables $\{R_X,R_Y\}$ as above. The domain discrepancy sets $\del$ indicate that certain causal mechanisms need to match across pairs of the SCMs. For example, $\Delta_{1} = \{X\}$, which does not contain $Y$, and this means that (1) the function $f_Y$ is the same across $\M^1,\M^*$, and (2) the distribution of unobserved variables that are arguments of $f_Y$, namely, $U_{XY},U_Y$ remains the same across $\M^1,\M^*$. Imposing these equalities on the canonical parameterization is straightforward as (1) the function $f_Y$ is the same across all canonical SCMs by construction, and (2) the only unobserved variable pointing to variable $V$ is $R_V$ (for $V \in \{X,Y\}$). Following the selection diagram shown in \Cref{fig:cm:a}, $\M^1,\M^*$ agree on the mechanism of $Y$, which translates to the constraint $P^{\N^1}(r_Y) = P^{\N^*}(r_Y)$. Similarly, $\M^2,\M^*$ agree on the mechanism of $X$ that translates to the constraint $P^{\N^2}(r_X) = P^{\N^*}(r_X)$. Putting these together, the optimization problem below finds the upper-bound for the risk $R_{P^*}(h)$  for the classifier $h(x) = \neg x$:
\begin{align}
    \max_{\N^1,\N^2,\N^*} & P^{\N^*}(Y \neq \neg X) \\
    \text{ s.t. } 
    & P^{\N^1}(r_Y) = P^{\N^*}(r_Y), \quad P^{\N^2}(r_X) = P^{\N^*}(r_X)  && \text{($Y \not\in \Delta_{1}$, and $X \not\in \Delta_{2}$)} \nonumber\\
    & P^{\N^1}(x,y) = P^1(x,y), \quad P^{\N^2}(x,y) = P^2(x,y) && \text{(matching source dists)} \nonumber
\end{align}
Notably, the above optimization has a linear objective with linear equality constraints. \hfill $\largesquare$
\end{example}

This example illustrates a more general strategy, in which probabilities induced by an SCM over discrete endogenous variables $\V$ may be generated by a canonical model. What follows is the main result of this section, and provides a systematic approach to partial transportability using the canonical models.

\begin{theorem} [Partial-TR with canonical models] \label{thm:partial-TR-canonical} Consider the tuple of SCMs $\mathbb{M}$ that induces the selection diagram $\G^{\del}$ over the variables $\V$, and entails the source distributions $\mathbb{P}$, and the target distribution $P^*$. Let $\psi: \Omega_\V \to \mathbb{R}$ be a functional of interest. Consider the following optimization scheme:
\begin{align}
    \max_{\N^1,\N^2, \dots, \N^*} \3E_{P^{\N^*}}[\psi(\V)]  \text{ s.t. } & P^{\N^i}(\v) = P^i(\v) && \forall i \in \{1,2,\dots,K,*\}\label{eq:opt-naive}\\
    & P^{\N^i}(r_V) = P^{\N^j}(r_V), && \forall i,j \in \{1,2,\dots,K,*\} \quad \forall V \not\in \Delta_{i,j} \nonumber
\end{align}
where each $\N^i$ is a canonical model characterized by a joint distribution over $\{R_V\}_{V \in \V}$. The value of the above optimization is a tight upper-bound for the quantity $\3E_{P^*}[\psi(\V)]$ among all tuples of SCMs that induce $\G^{\del}$ and entail $\mathbb{P}$. \hfill $\largesquare$
\end{theorem}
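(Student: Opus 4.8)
The plan is to prove the claimed equality
\[
\mathrm{OPT} \;=\; \sup\bigl\{\,\3E_{P^{\M^*_0}}[\psi(\V)] : \mathbb{M}_0 \text{ induces } \G^\del \text{ and entails } \mathbb{P}\,\bigr\},
\]
where $\mathrm{OPT}$ denotes the optimal value of the program in \eqref{eq:opt-naive}, by establishing two inequalities. The direction $\mathrm{OPT}\le\sup$ is the easier one: since every canonical model is itself an SCM, any tuple $\mathbb{N}$ that is feasible for the program is a genuine tuple of SCMs, and I will argue it lies in the feasible set of the supremum; hence the maximum over the subcollection of feasible canonical tuples cannot exceed the supremum over all compatible SCM tuples. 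The reverse inequality $\mathrm{OPT}\ge\sup$ requires producing, for an arbitrary compatible SCM tuple, a feasible canonical tuple that achieves the same target value. Both directions rest on one locality lemma relating a shared mechanism in the original SCMs to an equality of marginal response-variable laws in the canonical models.

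For $\mathrm{OPT}\ge\sup$, I would take any tuple $\mathbb{M}_0=\langle \M^1_0,\dots,\M^*_0\rangle$ that induces $\G^\del$ and entails $\mathbb{P}$, and apply the canonical (response-function) construction of \cite{zhang2021partial} domain-by-domain to obtain $\N^i_0$ with exogenous variables $\{R_V\}_{V\in\V}$. Expressiveness gives $P^{\N^i_0}(\v)=P^{\M^i_0}(\v)$, so the source constraints $P^{\N^i_0}(\v)=P^i(\v)$ hold, and since $\psi$ is a functional of $\V$ only, $\3E_{P^{\N^*_0}}[\psi]=\3E_{P^{\M^*_0}}[\psi]$. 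For the discrepancy constraints I would prove the locality lemma: by \Cref{def:canonical_scm}, $R_V$ is a deterministic relabeling of $\u_V$ by the index of the induced map $\pa_V\mapsto f_V(\pa_V,\u_V)$, so the marginal $P^{\N^i_0}(r_V)$ is the pushforward of $P^{\M^i_0}(\u_V)$ through a map depending only on $f_V$. Thus $P^{\N^i_0}(r_V)$ is determined solely by the local mechanism $(f_V,P(\u_V))$, and whenever $V\notin\Delta_{ij}$ — i.e. $f^i_V=f^j_V$ and $P^i(\u_V)=P^j(\u_V)$ — the marginals coincide. This certifies feasibility of $\mathbb{N}_0$, and with the value preserved, $\mathrm{OPT}\ge \3E_{P^{\M^*_0}}[\psi]$ for every compatible $\mathbb{M}_0$, hence $\mathrm{OPT}\ge\sup$.

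For $\mathrm{OPT}\le\sup$ (tightness), I would take any feasible $\mathbb{N}$ and verify it is a compatible SCM tuple. The source constraints give $P^{\N^i}(\v)=P^i(\v)$, so $\mathbb{N}$ entails $\mathbb{P}$. For the selection diagram, all domains share the same structural functions $f_V(\pa_V,r_V)=h^{(r_V)}_V(\pa_V)$ by construction; hence the mechanism of $V$ agrees across $\N^i,\N^j$ exactly when the only remaining degree of freedom, the law $P^{\N^i}(r_V)$, agrees. The program imposes $P^{\N^i}(r_V)=P^{\N^j}(r_V)$ precisely for $V\notin\Delta_{ij}$ and leaves it free otherwise, so the discrepancy set of $\mathbb{N}$ is contained in the prescribed $\Delta$'s and $\mathbb{N}$ induces $\G^\del$. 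Thus each feasible $\mathbb{N}$ belongs to the supremum's feasible set, giving $\3E_{P^{\N^*}}[\psi]\le\sup$ and hence $\mathrm{OPT}\le\sup$. Combining the two inequalities yields $\mathrm{OPT}=\sup=:q_{\max}$; since the canonical tuples are parameterized by a product of finite simplices and the objective and constraints are linear, the feasible region is a compact polytope and the maximum is attained, so $q_{\max}$ is a genuine tight upper bound.

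The step I expect to be the main obstacle is the treatment of confounding in the locality lemma and the tightness argument. When $V$ and $V'$ lie in the same $c$-component they share a latent, so $R_V,R_{V'}$ are correlated and $\u_V$ contains the shared confounder; I must check that the \emph{marginal} pushforward $P(r_V)$ still captures the full local mechanism $(f_V,P(\u_V))$, and — more delicately — that leaving the \emph{joint} law $P(\{R_V\})$ (equivalently, the within-$c$-component correlations of the target) unconstrained beyond the stated marginal equalities neither admits canonical tuples outside the realizable SCM family nor excludes any compatible SCM. I would close this gap using the expressiveness result in both directions (every joint over $\{R_V\}$ respecting the bidirected structure is realized by some SCM, and conversely) together with the observation that the semantics of \Cref{def:delta} constrain only per-variable mechanisms, so the correlations induced by shared confounders are exactly the free parameters the marginal constraints leave open. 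Verifying this correspondence rigorously — essentially establishing the two-way correspondence between the compatible SCM tuples and the feasible region of the program — is where the real content lies.
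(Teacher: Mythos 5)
Your proposal is correct and follows essentially the same route as the paper: both directions are obtained by sandwiching the program's value between the supremum over compatible SCM tuples, using the expressiveness of canonical models (Thm.~2.4 of Zhang--Bareinboim) together with the observation that mechanism invariance translates into equality of the marginal laws $P(r_V)$. Your ``locality lemma'' (that $P^{\N^i}(r_V)$ is the pushforward of $P^{\M^i}(\u_V)$ under a map determined by $f_V$ alone) is a more direct justification of what the paper establishes by contradiction in its Lemma~2, and your closing discussion of within-$c$-component correlations flags a genuine imprecision that the paper's own proof also leaves implicit, so no gap is introduced relative to the published argument.
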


In words, this Theorem states that one may tightly bound the value of a target quantity $\3E_{P^*}[\psi(\V)]$ by optimizing over the space of canonical models subject to the proposed constraints, without any loss of information. An implementation of \Cref{thm:partial-TR-canonical} approximating the worst-case error, by making inference on the posterior distribution of the target quantity, is provided in \Cref{sec:Bayesian}. %\ref{thm:partial-TR-canonical} enables us to use the canonical parameterization for partial transportability. Next, we consider an alternative approach to SCM parameterization that deems more effective for the partial transportability task.

\vspace{-5pt}

\section{Neural Causal Models for Partial Transportability}

\begin{wrapfigure}{r}{5.5cm}
\vspace{-0.2cm}
\centering
    \includegraphics[width = 5.5cm]{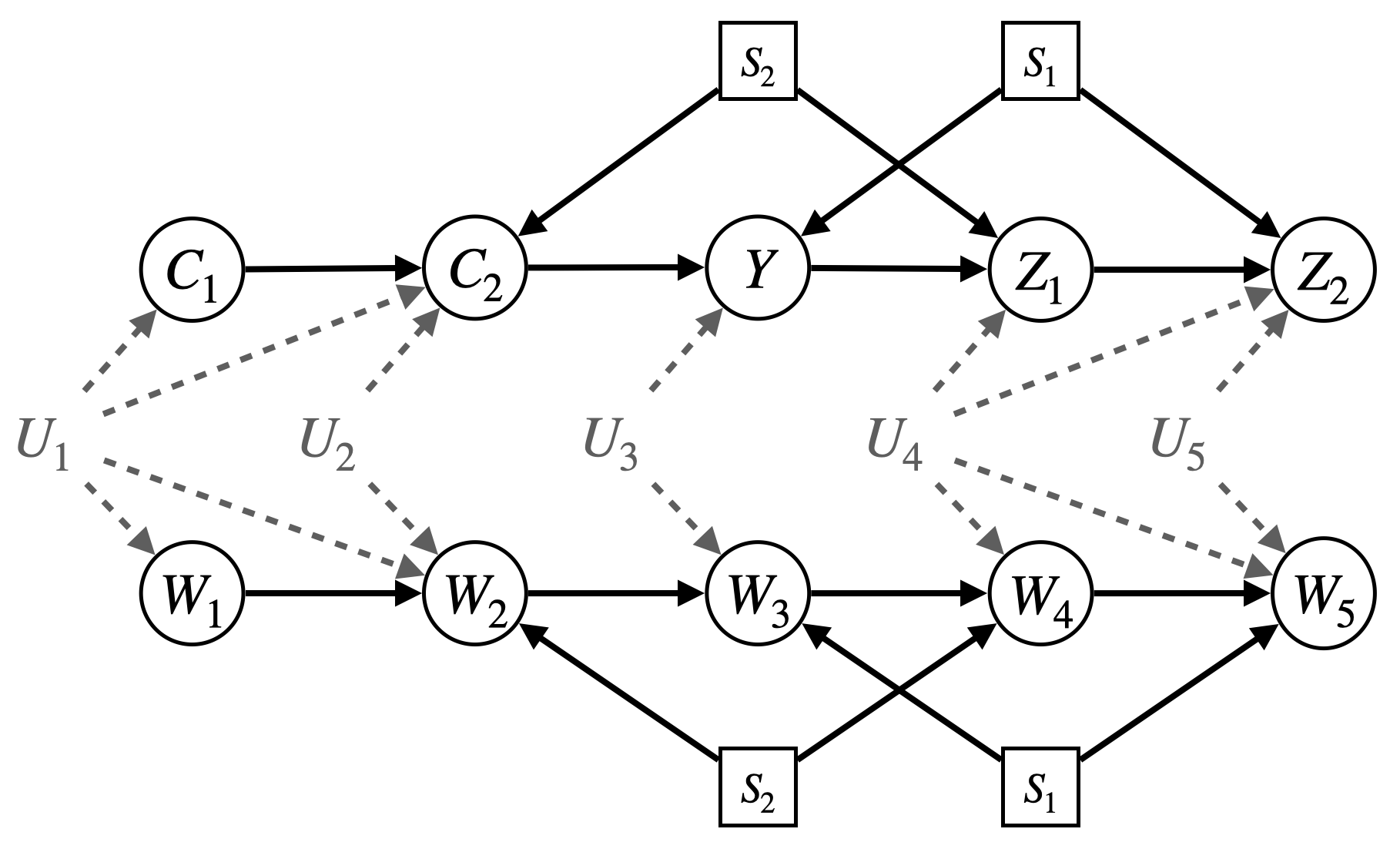}
    \caption{Selection diagram for \Cref{ex:neural-TR}.}
    \label{fig:alg-TR}
\vspace{-0.2cm}
\end{wrapfigure}
In this section we consider inferences in more general settings by using neural networks as a generative model, acting as a proxy for the underlying SCMs $\3M$ with the potential to scale to real-world, high-dimensional settings while preserving the validity and tightness of bounds. For this purpose, we consider Neural Causal Models \cite{xia2021causal} and adapt them for the partial transportability task. What follows is an instantiation of \cite[Definition 7]{xia2021causal}.

\begin{definition}[Neural Causal Model]
    A Neural Causal Model (NCM) corresponding to the causal diagram $\G$ over the discrete variables $\V$ is an SCM defined by the exogenous variables:
        \begin{equation}
            \U = \{U_{\W} \sim \text{unif}(0,1): \W\subseteq \V \text{ s.t. } A\leftrightarrow B \in \G, \quad
            \forall A,B \in \W \},
        \end{equation}
    and the functional assignments $V \gets f_{\theta_V}(\Pa_{V},\U_V),$ where $\U_V = \{U_\W \in \U: V \in \W\}$. The function $f_{\theta_V}$ is a feed-forward neural network parameterized with $\theta_V$ that outputs in $\supp_V$. The distribution entailed by an NCM is denoted by $P(\v;\theta)$, where $\theta = \{\theta_V\}_{V \in \V}$.
\end{definition}

To illustrate how one might leverage this parameterization to define an instance of partial transportability task consider the following example.

\begin{example} \label{ex:neural-TR}
Let SCMs $\M^1,\M^2,\M^*$ induce $\G^\del$ shown in \Cref{fig:alg-TR} over the binary variables $\X,Y$, where $\X = \{C_1,C_2,Z_1,Z_2,W_1,\dots,W_5\}$. Let $\theta^1,\theta^2,\theta^*$ be the parameters of NCMs constructed based on the causal diagram in \Cref{fig:alg-TR} (without the s-nodes). The objective is to constrain these parameters to simulate a compatible tuple of NCMs $\M_{\theta^1},\M_{\theta^2},\M_{\theta^*}$ that equivalently entail $P^1(\x,y),P^2(\x,y)$ and induce $\G^{\del}$. 

For instance, the fact that $S_2$ is not pointing to $Y$ suggests the invariance $f^*_Y = f^2_Y$ and $P^*(\u_Y) = P^2(\u_Y)$ for the true underlying SCMs. That same invariance may be enforced in the corresponding NCMs by relating the parameterization of $\M_{\theta^2},\M_{\theta^*}$, i.e., imposing that $\theta^*_Y = \theta^2_Y$ for the NN generating $Y$. Similarly, the observed data $D^1, D^2$ from the source distributions $P^1(\x,y),P^2(\x,y)$, respectively, impose constraints on the parameterization of NCMs as plausible models must satisfy $P(\x,y;\theta^1) = P^1(\x,y)$ and $P(\x,y;\theta^2) = P^2(\x,y)$. This may be enforced, for instance, by maximizing the likelihood of data w.r.t. the NCM parameters: $\theta^i \in \arg \max_{\theta} \sum_{\x,y \in D^i} \log P(\x,y;\theta^i)$, for $i \in \{1,2\}$.
% The constraints over these parameters are two-fold:
% \begin{enumerate} [left = 0pt]
%     \item The selection diagram informs us that the mechanism of certain variables are shared across certain domains. For instance, since $S_2$ is not pointing to $Y$, we know that $f^*_Y = f^2_Y$ and $P^*(\u_Y) = P^2(\u_Y)$ hold for the true SCMs. We may enforce this invariance in the parameterization of corresponding NCMs by fixing $\theta^*_Y = \theta^2_Y$. %Same applies for all variables $V \in \X\union\{Y\}$ and all pairs of NCMs.
%     \item We have access to finite data $D^1$ drawn from the source distributions $P^1(\x,y),P^2(\x,y)$, respectively. This data imposes constraints over the parameters $\theta^1,\theta^2$ as we require that $P(\x,y;\theta^1) = P^1(\x,y)$ and $P(\x,y;\theta^2) = P^2(\x,y)$. We can impose these constraints by maximizing the likelihood of data w.r.t. the NCM parameters; in particular, we require $\theta^i \in \arg \max_{\theta} \sum_{\x,y \in D^i} \log P(\x,y;\theta^i)$, for $i \in \{1,2\}$.
% \end{enumerate}
By extending this intuition for all constraints imposed by the selection diagram and data, we narrow the set of NCMs $\M_{\theta^1},\M_{\theta^2},\M_{\theta^*}$ to a set that is compatible with our assumptions and data. Maximizing the risk of some prediction function $R_{P^*}(h)$ in this class of constrained NCMs might then achieve an informative upper-bound. \hfill $\largesquare$
\end{example}

Motivated by the observation in \Cref{ex:neural-TR}, we now show a more formal result (analogous to \Cref{thm:partial-TR-canonical}) that guarantees that the solution to the partial transportability task in the space of constrained NCMs achieves a tight bound on a given target quantity $\3E_{P^*}[\psi(\V)]$.

\begin{theorem} [Partial-TR with NCMs] \label{thm:partial-TR-neural} Consider a tuple of SCMs $\3M$ that induces $\G^{\del},\3P$ and $P^*$ over the variables $\V$. Let $D^i \sim P^i(x,y)$ denote the samples drawn from the $i$-th source domain. Let $\theta^i$ denote the parameters of NCM corresponding to $\M^i \in \mathbb{M}$. Let $\3E_{P^*}[\psi(\V)]$ be the target quantity. The solution to the optimization problem,
\begin{align}
    \hat{\Theta} \in \argmax_{\Theta : \langle \theta^1, \theta^2, \dots, \theta^K, \theta^* \rangle} &\sum_{\w} \psi(\w) \cdot \sum_{\v\setminus \w} P(\v;\theta^*)  \label{eq:opt-naive-neural}\\
    \text{ s.t. } 
    & \theta^i_{V} = \theta^j_V, \quad \quad \forall i,j \in \{1,2,\dots,K,*\} \quad \forall V \not\in \Delta_{i,j} \nonumber\\
     & \theta^i \in \argmax_{\theta} \sum_{\v \in D^i} \log P(\v;\theta), \quad \forall i \in \{1,2,\dots,K\} \nonumber.
\end{align}
is a tuple of NCMs that induce $\G^{\del}$, entails $\mathbb{P}$. In the large sample limit, the solution yields a tight upper-bound for $\3E_{P^*}[\psi(\V)]$. \hfill $\largesquare$
\end{theorem}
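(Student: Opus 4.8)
The plan is to reduce \Cref{thm:partial-TR-neural} to the already-established canonical-model result \Cref{thm:partial-TR-canonical}, exploiting the universal expressiveness of NCMs for discrete SCMs \cite{xia2021causal} and handling the maximum-likelihood constraints through the consistency of MLE in the large-sample limit. Writing $q_{\max}$ for the tight value certified by \Cref{thm:partial-TR-canonical}, i.e. the supremum of $\3E_{P^*}[\psi(\V)]$ over all SCM tuples that induce $\G^{\del}$ and entail $\mathbb{P}$, I would establish the two inequalities $\mathrm{opt}_{\text{NCM}} \leq q_{\max}$ (soundness) and $\mathrm{opt}_{\text{NCM}} \geq q_{\max}$ (tightness) separately.

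For soundness, I would show that every feasible point $\Theta = \langle \theta^1,\dots,\theta^*\rangle$ of \Cref{eq:opt-naive-neural} corresponds to a genuine SCM tuple compatible with $(\G^{\del},\mathbb{P})$, so that its objective cannot exceed $q_{\max}$. Three facts combine: (i) by construction each NCM attaches exogenous noise $U_\W$ only to bidirected-connected sets $\W$ and routes each $V$ through its graphical parents, so each $\M_{\theta^i}$ is consistent with $\G^i$; (ii) the sharing constraints $\theta^i_V = \theta^j_V$ for $V \notin \Delta_{ij}$ make $f_{\theta_V}$ literally identical across domains $i,j$, while the exogenous law $\mathrm{Unif}(0,1)$ is fixed and hence trivially invariant, so the tuple induces exactly $\G^{\del}$; and (iii) the constraint $\theta^i \in \argmax_{\theta}\sum_{\v\in D^i}\log P(\v;\theta)$ drives $P(\v;\theta^i)$ to $P^i(\v)$ in the large-sample limit, so the tuple entails $\mathbb{P}$. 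Together these place the feasible set inside the set optimized over in \Cref{thm:partial-TR-canonical}, yielding $\mathrm{opt}_{\text{NCM}} \leq q_{\max}$.

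For tightness, I would begin from a near-maximizing canonical tuple $\langle \N^1,\dots,\N^*\rangle$ guaranteed by \Cref{thm:partial-TR-canonical} and approximate it by NCMs. The key tool is that for discrete endogenous variables an NCM can match any observational distribution entailed by an SCM on the same diagram, and can reproduce the within-c-component confounding of a canonical model by encoding the joint law of $\{R_V\}_{V\in\W}$ through the shared uniform $U_\W$ together with suitable networks $f_{\theta_V}$ \cite{xia2021causal}. Selecting the networks so that $\theta^*_V = \theta^i_V$ exactly when $\N^*$ and $\N^i$ share the mechanism of $V$ preserves the sharing constraints, and realizability means the population log-likelihood is maximized precisely at parameters entailing $P^i$, so the MLE constraints are asymptotically satisfied by the construction. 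Hence the NCM objective can be pushed within $\epsilon$ of $q_{\max}$ for every $\epsilon>0$, giving $\mathrm{opt}_{\text{NCM}}\geq q_{\max}$ and, with soundness, equality.

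The hard part will be the tightness direction, and in particular two coupled approximation issues. First, I must verify that imposing the sharing constraints does not reduce NCM expressiveness below what is required: the shared-mechanism NCM must still realize an arbitrary correlated canonical law within each c-component while holding the invariant coordinates fixed across domains, which demands a careful accounting of how a single uniform $U_\W$ can jointly encode the response-function variables of all members of $\W$ in all domains at once. Second, the ``large-sample limit'' claim rests on the consistency of constrained maximum-likelihood estimation: I would invoke realizability (an NCM exactly entailing $P^i$ exists), the positivity assumption (so that the KL objective is well behaved and the distribution is identified from samples), and standard $M$-estimation and epi-convergence arguments to conclude that empirical maximizers converge to population maximizers and hence to distribution-matching parameters. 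Controlling the interplay between these finite-parameter neural approximations and the exact equality constraints of the canonical program is the principal technical obstacle.
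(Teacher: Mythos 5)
Your proposal is correct and follows essentially the same two-direction structure as the paper's proof: your soundness step is the paper's ``necessity'' argument (NCMs built on $\G$ induce $\G$, the parameter-sharing constraints reproduce the discrepancy sets and hence $\G^{\del}$, and the likelihood constraints force $P(\v;\theta^i)=P^i(\v)$ in the large-sample limit via the expressiveness and consistency results of Xia et al.), while your tightness step is the paper's ``sufficiency'' argument. The only cosmetic difference is that you route tightness through a near-maximizing canonical tuple from \Cref{thm:partial-TR-canonical}, whereas the paper constructs a feasible NCM tuple directly from an arbitrary compatible SCM tuple using the per-variable construction in Xia et al.'s Theorem 1, noting that since each $\theta_V$ is built only from $f_V$ and $P(\u_V)$, mechanism invariance for $V\notin\Delta_{i,j}$ immediately yields the required equality $\theta^i_V=\theta^j_V$ --- which also resolves the c-component encoding concern you flag as the principal obstacle.
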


Theorem \ref{thm:partial-TR-neural} establishes the expressive power of NCMs for solving partial transportability tasks. This formulation is powerful because it enables the use of gradient-based optimization of neural networks for learning and, in principle, might scale to large number of variables.  
%Moreover, the mechanism match across the NCMs can be implemented by parameter sharing across the neural networks, while imposing the same constraints for canonical models requires imposing linear constraints over the parameters. 

\subsection{Neural-TR: An Efficient Implementation}

We could further explore the efficient optimization of parameters by exploiting the separation between variables in the selection diagram. Rahman et al. \cite{rahman2024modular}, for instance, show that the NCM parameterization is modular w.r.t. the c-components of the causal diagram. We can similarly elaborate on this property, and leverage it for more efficient partial transportability. 

In the following, we build towards an efficient algorithm for partial transportability using NCMs by first showing an example that describes how a given target quantity $\3E_{P^*}[\psi(\V)]$ might be decomposed for learning more efficiently.

\addtocounter{example}{-1}
\begin{example} [continued]
$P(\x,y;\theta^*)$ in the objective in \Cref{eq:opt-naive-neural} may be decomposed as follows:
\begin{equation} \label{eq:decompose-alg-TR}
    P(\x,y;\theta^*) = P^*(\underbrace{c_1,c_2,w_1,w_2}_{\a_1};\theta^*_{\A_1}) \cdot P(\underbrace{y,w_3}_{\a_2}\mid \underbrace{c_2,w_2}_{\b_2}; \theta^*_{\A_2}) \cdot P(\underbrace{z_1,z_2,w_4,w_5}_{\a_3}\mid \underbrace{y,w_3}_{\b_3}; \theta^*_{\A_3}), \nonumber
\end{equation}
where the subsets $\A_1,\A_2,\A_3$ are the $c$-components of $\G^*$. Notice, $S_2$ is not pointing to any of the variables $\A_2$, which means that their mechanism is shared across $\M^2,\M^*$, and therefore, 
\begin{equation} \label{eq:alg-TR-approx}
    P(\a_2\mid \b_2;\theta^*_{A_2}) = P(\a_2 \mid \b_2; \theta^2_{A_2}) \approx P^2(\a_2\mid \b_2).
\end{equation} 
This property is the basis of transportability algorithms \cite{bareinboim2013transportability,correa2020gtr}, and is known as the s-admissibility criterion \cite{pearl2011transportability}, which allows us to deduce distributional invariances from structural invariances. By \Cref{eq:alg-TR-approx}, we can replace the term $P(\a_2\mid \b_2;\theta^*_{A_2})$ in the objective with the probabilistic model $P(\a_2\mid \b_2;\eta^2)$ that is trained with $D^2$ to approximate $P^2(\a_2\mid \b_2)$ and plug it into the objective $\Cref{eq:opt-naive-neural}$ as a constant. 

As a consequence, we do not need to optimize over the parameters $\theta^1_{\A_2}, \theta^2_{\A_2}, \theta^*_{\A_2}$ from the partial transportability optimization problem. Similarly, since $S_1$ does not point to $\A_1$, we can substitute $P(\a_1;\theta^*)$ with $P(\a_1;\eta^1)$, and pre-train it with data $D^1$. In the context of \Cref{ex:neural-TR} and the evaluation of $R_{P^*}(h)$, the objective in \Cref{eq:opt-naive-neural} may be simplified to the substantially lighter optimization task:
\begin{align}
    \max_{\theta^1_{\A_3},\theta^2_{\A_3},\theta^*{\A_3}} &\3E_{\A_1 \sim P(\a_1;\eta^1)}\big[ \3E_{\A_2 \sim P(\a_2\mid \b_2;\eta^2)}[\sum_{\a_3} P(\a_3 \mid \b_3;\theta^*_{\A_3}) \cdot \I\{h(\a_1,\a_2,\a_3 \setminus \{y\}) \neq y\}] \big] \nonumber\\
    \text{ s.t. } & \theta^i_{\A_3} \in \argmax_{\theta_{\A_3}} \sum_{\a_3,\b_3 \in D^i} \log P(\a_3 \mid \b_3;\theta_{\A_3}), \quad \text{ for } i \in \{1,2\}.
\end{align}
\end{example}

\begin{algorithm}[t]
    %\fontsize{9}{9}\selectfont
    \caption{Neural-TR}
    \label{alg:partialTR}
    \begin{algorithmic}[1]
    \REQUIRE Source data $D^1, D^2, \dots, D^K$; selection diagram $\G^{\del}$; functional $\psi:\Omega_{\W} \to [0,1]$.
    \ENSURE Upper-bound for $\3E_{P^*}[\psi(\W)]$
    \STATE $\{\A_j\}_{j=1}^m \gets$ c-components of $\A := \mathbf{An}_{\G^*}(\W)$ in causal diagram $\G^*$.
    \STATE $\Theta, \mathbb{C}_{\text{expert}} \gets \emptyset, \quad \1L_{\text{data}} \gets 0$
    \STATE $P^*(\w) := \sum_{\a \setminus \w} \prod_{j=1}^m P^*(\a_j \mid \text{do}(\pa_{\A_j}))$ 
    \FOR{$j=1$ to $m$}
        \IF{$\exists i \in \{1,2,\dots,K\} \text{ such that } \A_j \cap \Delta_{*i} = \emptyset$}
            \STATE $\eta^i_{\A_j} \gets \argmax_{\eta_{\A_j}} \sum_{\a_j,\pa_{\A_j} \in D^i} \log P(\a_j \mid \text{do}(\pa_{\A_j});\eta_{\A_j})$
            \STATE In $P^*(\w)$, replace $P^*(\a_j\mid \text{do}(\pa_{\A_j}))$ with $P(\a_j\mid \text{do}(\pa_{\A_j});\eta^i_{\A_j})$.
        \ELSE 
            \STATE $\Theta \gets \Theta \union \{\theta^i_{\A_j}\}_{i \in \{1,2,\dots,K,*\}}$
            \STATE In $P^*(\w)$, replace $P^*(\a_j\mid \text{do}(\pa_{\A_j}))$ with $P(\a_j \text{do}(\pa_{\A_j});\theta^*_{\A_j})$.
            \STATE $\mathbb{C}_{\text{expert}} \gets \mathbb{C}_{\text{expert}} \union \{\{\theta^i_{V} = \theta^*_{V}\}_{V \in \A_j \setminus \Delta_{*i}}\}_{i=1}^K$
            \STATE $\1L_{\text{likelihood}} \gets \1L_{\text{likelihood}} + \sum_{\a_j,\pa_{\A_j} \in D^i} \log P(\a_j, \text{do}(\pa_{\A_j});\theta^i_{\A_j})$.
        \ENDIF
    \ENDFOR
    \STATE \textbf{Return} $\hat{\Theta} \gets \argmax_{\Theta} \sum_{\w} P^*(\w;\Theta)\cdot \psi(\w) + \Lambda \cdot \1L_{\text{likelihood}}(\Theta)$ subject to $\mathbb{C}_{\text{expert}}$ 
    \end{algorithmic}
\end{algorithm}

In general, the parameter space of NCMs can be cleverly decoupled and the computational cost of the optimization problem can be significantly improved since only a subset of the conditional distributions need to be parameterized and optimized. This observation motivates  \Cref{alg:partialTR} designed to exploit these insights. It proceeds by first, decomposing the query, second, computing the identifiable components, and third, parameterizing the components that are not point identifiable and running the NCM optimization routine. The following proposition demonstrates the correctness of this procedure.

\begin{proposition} \label{prop:algorithmic-partial-TR}
Neural-TR (Algorithm \ref{alg:partialTR}) computes a tuple of NCMs compatible with the source data and graphical assumptions that yields the upper-bound for $\3E_{P^*}[\psi(\W)]$ in the large sample limit. \hfill $\largesquare$
\end{proposition}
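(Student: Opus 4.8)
The plan is to show that Neural-TR is nothing but an efficient reorganization of the unrestricted NCM program of \Cref{thm:partial-TR-neural}, so that its output inherits the same tightness and large-sample guarantees. The three phases of the algorithm—decomposing the query, solving the point-identifiable c-factors in closed form, and optimizing only the remaining factors—correspond to three facts I would establish in turn.

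First I would verify the decomposition written on line~3. Since only the ancestors $\A = \mathbf{An}_{\G^*}(\W)$ matter for $\psi(\W)$, I would marginalize the target law to $\A$ and apply the c-component (Tian--Pearl) factorization in the induced subgraph, writing $P^*(\a) = \prod_{j=1}^m Q^*[\A_j]$ with c-factors $Q^*[\A_j] = P^*(\a_j \mid \mathrm{do}(\pa_{\A_j}))$, then summing out $\a \setminus \w$. Consequently the objective $\3E_{P^*}[\psi(\W)] = \sum_{\w}\psi(\w)P^*(\w)$ is multilinear in the tuple of c-factors $\{Q^*[\A_j]\}_{j=1}^m$. Second, for each c-component I would decide whether its factor is point-identifiable. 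If some source $i$ has $\A_j \cap \Delta_{*i} = \emptyset$, then every variable in $\A_j$ carries an invariant mechanism across $\M^i,\M^*$; by the s-admissibility argument behind \Cref{eq:alg-TR-approx} the interventional factor transports, $Q^*[\A_j] = Q^i[\A_j]$, and $Q^i[\A_j]$ is a fixed functional of the single-domain law $P^i$ by standard c-factor identification. Hence this factor takes a \emph{unique} value in every tuple compatible with $\G^\del$ and $\3P$, which the algorithm estimates from $D^i$ and plugs in as a constant (lines~5--7). For the remaining components no such source exists, the factor cannot be pinned down, and it is carried into the optimization (lines~8--12) together with the parameter-sharing constraints $\theta^i_V = \theta^*_V$ for $V \in \A_j \setminus \Delta_{*i}$ and the likelihood terms forcing $P(\a_j \mid \mathrm{do}(\pa_{\A_j});\theta^i_{\A_j})$ to match $P^i$.

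Third, I would argue that fixing the identifiable factors and optimizing over the non-identifiable ones alone returns exactly the supremum of the full program in \Cref{thm:partial-TR-neural}. This is the crux and the main obstacle: I must establish \emph{variational independence} across c-components, namely that the feasible set of c-factor tuples compatible with $\G^\del,\3P$ factorizes, so the free factors range over a set that does not depend on the (fixed) values of the identifiable ones. I would invoke the modularity of the NCM parameterization with respect to c-components \cite{rahman2024modular}: distinct c-components share no exogenous variables, so their neural mechanisms and parameters $\theta_{\A_j}$ are disjoint and no constraint on one component couples to another. This gives both completeness (every compatible free-factor tuple is feasible in the reduced program, since the identifiable factors are forced regardless) and soundness (any reduced solution, combined with the fixed identifiable factors, reconstructs a genuine compatible tuple of NCMs). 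Together with the multilinearity from step~one, maximizing over the full feasible set therefore coincides with fixing each identifiable factor at its transported value and maximizing over the free factors—the program on line~15.

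Finally, consistency in the large-sample limit follows as in \Cref{thm:partial-TR-neural}: as $|D^i|\to\infty$ the plugged-in estimators for the identifiable c-factors and the likelihood-matched free factors converge to the true source quantities, while the expressiveness of NCMs guarantees the attainable set of free factors converges to the full space of compatible factors. Hence the value computed by Neural-TR converges to the tight upper bound on $\3E_{P^*}[\psi(\W)]$, which establishes the proposition.
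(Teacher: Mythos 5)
Your proposal is correct and follows essentially the same route as the paper's proof: reduce to the program of \Cref{thm:partial-TR-neural}, drop non-ancestors of $\W$, decompose the objective and the likelihood into c-factors via the modularity of the NCM parameterization, plug in the transportable (s-admissible) c-factors as constants, and optimize only the remaining components with parameter sharing, inheriting large-sample consistency from \Cref{thm:partial-TR-neural}. The "variational independence" step you flag as the crux is handled in the paper by the same observation you make — the parameters $\theta_{\A_j}$ of distinct c-components are disjoint and each appears only in its own factor of the objective and of the likelihood — so the two arguments coincide in substance.
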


This result may be understood as an enhancement of \Cref{thm:partial-TR-neural} in which the factors that are readily transportable from source data are taken care of in a pre-processing step. The hybrid approach is especially useful in case researchers have pre-trained probabilistic models with arbitrary architecture that they can use off-the-shelf and avoid unnecessary computation. 

\subsection{Neural-TR for the Optimization of Classifiers}

The Neural-TR algorithm can be viewed as an adversarial domain generator that takes a classifier $h(\z)$ as the input, and then parameterizes a collection of SCMs to find a plausible target domain that yields the worst-case risk for the given classifier, namely, $\hat{\theta}^*$. By flipping $h(\z)$ for some $\z \in \Omega$ we can reduce the risk of $h$ under $\hat{\theta}^*$.

\begin{wrapfigure}{R}{0.53\textwidth}
\vspace{-0.5cm}
    \begin{minipage}{0.53\textwidth}
      \begin{algorithm}[H]
        \caption{CRO (Causal Robust Optimization)} \label{alg:CRO}
        \begin{algorithmic}[1]
    \REQUIRE $\mathbb{D} : \langle D^1, D^2, \dots, D^K \rangle$;  $\G^{\del}$; $\delta>0$
    \ENSURE $h(\X)$ with the best worst-case risk.
    \STATE Initialize $h$ randomly and $\3D^* \gets \emptyset$
    \STATE $\hat{\Theta} \gets \text{Neural-TR}(\mathbb{D},\G^{\del},\psi: \mathcal{L}(h(\x),y))$
    \WHILE{$R_{P(\x,y;\hat{\theta}^*)}(h) - \max_{D \in \3D^*} R_{D}(h)  > \delta$}
    \STATE $\3D^* \gets \3D^* \union \{D^* \sim P(\x,y;\hat{\theta}^*)\}$
    \STATE $h \gets \argmin_{h} \max_{D \in \3D^*} R_{D}(h)$
    \STATE $\hat{\Theta} \gets \text{Neural-TR}(\mathbb{D},\G^{\del},\psi: \mathcal{L}(h(\x),y))$
    \ENDWHILE
    \STATE \textbf{Return} $h$
    \end{algorithmic}
      \end{algorithm}
    \end{minipage}
\end{wrapfigure}
Interestingly, we can exploit Neural-TR to generate adversarial data for a given classifier and introduce an iterative procedure to progressively train classifiers with with minimum risk upper-bound. Algorithm \ref{alg:CRO} describes this approach. At each iteration, CRO uses Neural-TR as a subroutine to obtain an adversarially designed NCM $\hat{\theta}^*$ that yields the worst-case risk for the classifier at hand. Next, it collects data $D^*$ from this NCM and adds it to a collection of datasets $\3D^*$. Finally, it updates the classifier to be robust to the collection $\3D^*$ by minimizing the maximum of the empirical risk $R_{D}(h) := \sum_{\x,y\in D} \mathcal{L}(y,h(x))$ across all $D \in \3D^*$. We repeat this process until convergence of the upper-bound for risk. The following result justifies optimality of CRO for domain generalization; more discussion is provided in Appendix \ref{sec:app_CRO}.

\begin{theorem} [Domain generalization with CRO] \label{thm:CRO} Algorithm \ref{alg:CRO} returns a worst-case optimal solution;
\begin{equation}
    \mathrm{CRO}(\3D,\G^{\del}) \in \argmin_{h:\Omega_\X \to \Omega_Y} \max_{\text{ tuple of SCMs } \mathbb{M}_0 \text{ that entails } \3P \text{ \& induces } \G^{\del}} R_{P^{\M^*_0}}(h).
\end{equation}
\end{theorem}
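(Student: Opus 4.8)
The plan is to read \Cref{thm:CRO} as a statement about a two-player zero-sum game and to analyze \Cref{alg:CRO} as a constraint-generation (cutting-plane) scheme for its value. Write $\mathcal{H}=\{h:\Omega_\X\to\Omega_Y\}$ for the (finite) set of candidate classifiers, and let $\mathcal{P}$ be the set of target distributions $P^*(\x,y)$ realizable by some tuple $\mathbb{M}_0$ that entails $\3P$ and induces $\G^{\del}$. Since $R_{P^*}(h)=\sum_{\x,y}\1L(y,h(\x))\,P^*(\x,y)$ is linear in $P^*$ and bounded, the object of interest is the game value $V^\star=\min_{h\in\mathcal{H}}\max_{P^*\in\mathcal{P}}R_{P^*}(h)$. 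The first ingredient is that the inner maximization is solved exactly by the Neural-TR subroutine: by \Cref{thm:partial-TR-neural} (in the large-sample limit) the call $\text{Neural-TR}(\3D,\G^{\del},\psi\!:\!\1L(h(\x),y))$ returns parameters $\hat{\theta}^*$ with $R_{P(\x,y;\hat{\theta}^*)}(h)=\max_{P^*\in\mathcal{P}}R_{P^*}(h)$, together with a realizable maximizer $P(\x,y;\hat{\theta}^*)\in\mathcal{P}$ from which the dataset $D^*$ is drawn.

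Second, I would track the two bounds maintained implicitly by \Cref{alg:CRO}, identifying each sampled dataset with its generating distribution. At a while-check the working collection is $\3D^*=\{P^*_1,\dots,P^*_t\}\subseteq\mathcal{P}$, the current classifier $h$ is $\argmin_{h'}\max_{D\in\3D^*}R_D(h')$, and Neural-TR supplies its true worst case. Setting $L=\max_{D\in\3D^*}R_D(h)=\min_{h'}\max_{D\in\3D^*}R_D(h')$ and $U=\max_{P^*\in\mathcal{P}}R_{P^*}(h)$, two inequalities hold for free: restricting the adversary to the finite subset $\3D^*\subseteq\mathcal{P}$ can only lower the inner max, so $L\le V^\star$; and $h$ is a particular classifier, so $U\ge V^\star$. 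Hence $L\le V^\star\le U$, and the loop condition $R_{P(\x,y;\hat{\theta}^*)}(h)-\max_{D\in\3D^*}R_D(h)$ is exactly the gap $U-L$ sandwiching $V^\star$. Consequently, if the algorithm halts with $U-L\le\delta$, the returned $h$ obeys $\max_{P^*\in\mathcal{P}}R_{P^*}(h)=U\le L+\delta\le V^\star+\delta$, i.e., it is $\delta$-worst-case-optimal.

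Third, I would establish termination and exactness. The key observation is that no classifier is processed twice while the loop runs: when the body is entered for a classifier $h$, its Neural-TR worst case $P_{\mathrm{wc}}(h)\in\mathcal{P}$ is appended to $\3D^*$. If the same $h$ were evaluated again at a later check, then $\3D^*$ would already contain $P_{\mathrm{wc}}(h)$, forcing $\max_{D\in\3D^*}R_D(h)\ge R_{P_{\mathrm{wc}}(h)}(h)=\max_{P^*\in\mathcal{P}}R_{P^*}(h)=U$, hence $L=U$ and gap $0\le\delta$, contradicting re-entry. Since $\mathcal{H}$ is finite, the loop therefore terminates after at most $|\mathcal{H}|$ iterations, at a $\delta$-optimal $h$. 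Finally, because $\{\max_{P^*}R_{P^*}(h):h\in\mathcal{H}\}$ is a finite set of reals, there is a positive gap between its minimum and its next value; choosing $\delta$ below this gap (equivalently, letting $\delta\to0$) upgrades $\delta$-optimality to exact membership $\text{CRO}(\3D,\G^{\del})\in\argmin_{h}\max_{P^*\in\mathcal{P}}R_{P^*}(h)$, which is the claim.

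The main obstacle I expect is justifying the two exactness assumptions on which the sandwich rests. First, \Cref{thm:partial-TR-neural} only guarantees tightness of the inner maximization in the large-sample limit with sufficiently expressive networks, so I would state the result modulo this approximation, or carry an $\varepsilon$ through both $L$ and $U$ and absorb it into $\delta$. Second, the learner update $h\in\argmin_{h'}\max_{D\in\3D^*}R_D(h')$ must be solved to optimality; over finite $\mathcal{H}$ and finitely many empirical distributions this is a finite robust problem (solvable by enumeration, or an LP once randomized classifiers are allowed), but one must verify that the empirical risks $R_D$ converge to the population risks $R_{P^*}$ so that sampling noise does not derail the combinatorial no-repeat argument — again a large-sample statement. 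Folding these caveats into a clean ``in the large-sample limit'' hypothesis is the delicate part; the game-theoretic sandwich itself is routine once the inner oracle is treated as exact.
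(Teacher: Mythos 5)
Your proof is correct and its optimality half coincides with the paper's: the sandwich $L\le V^\star\le U$ is exactly the paper's chain $R_{\G^{\del},\3P}(h^{\mathrm{CRO}})-\delta\le\max_{D\in\3D^*}R_D(h^{\mathrm{CRO}})\le\max_{D\in\3D^*}R_D(h^*)\le R_{\G^{\del},\3P}(h^*)$, with the same two observations (restricting the adversary to $\3D^*\subseteq\mathcal{P}$ can only help the learner; each element of $\3D^*$ is a feasible target). Where you genuinely diverge is termination. The paper argues via compactness of the space of discrete distributions: the sequence of adversarial distributions has a Cauchy subsequence, and uniform continuity of $P\mapsto R_P(h)$ forces two consecutive (sub)iterates to be within $\delta$, triggering the stopping rule. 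You instead use finiteness of $\mathcal{H}=\{h:\Omega_\X\to\Omega_Y\}$ and a pigeonhole/no-repeat argument: once a classifier recurs, its worst-case distribution is already in $\3D^*$, so the gap collapses to zero. Your route buys an explicit iteration bound of $|\mathcal{H}|$ (the paper's gives none) at the cost of requiring $\mathcal{H}$ finite; the paper's compactness argument would survive an infinite hypothesis class. You also add a step the paper omits: upgrading $\delta$-optimality to exact membership in the $\argmin$ by taking $\delta$ below the minimal gap in the finite set of worst-case values. This is worth noting because the paper's appendix proof only establishes $\delta$-suboptimality (its restated theorem is explicitly an $\epsilon$-approximation claim), so strictly speaking your argument does more to justify the main-text statement as written. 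Your caveats about the large-sample idealizations (Neural-TR exactness, identifying sampled datasets with their generating distributions) are the same ones the paper absorbs into its ``for large enough data'' hypothesis.
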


In words, \Cref{thm:CRO} states that the classifier returned by CRO, in the large sample limit, minimizes worst-case risk in the target domain subject to the constraints entailed by the available data and induced by the structural assumptions.

\section{Experiments}
% \begin{figure}[t]
%     \centering
%     \includegraphics[width = 14cm]{}
%     \caption{Experimental results on evaluating upper bounds on generalization error with NCMs. Curves highlight inferred generalization error for a given algorithm (example specific) as a function of training iterations. Circles in the accompanying panels give performance in training domains.}
%     \label{fig:simulations}
% \end{figure}

% \begin{figure*}[t]
% \centering
% \begin{subfigure}[t]{0.33\linewidth}\centering%(d)
%   \includegraphics[width = 3.6cm]{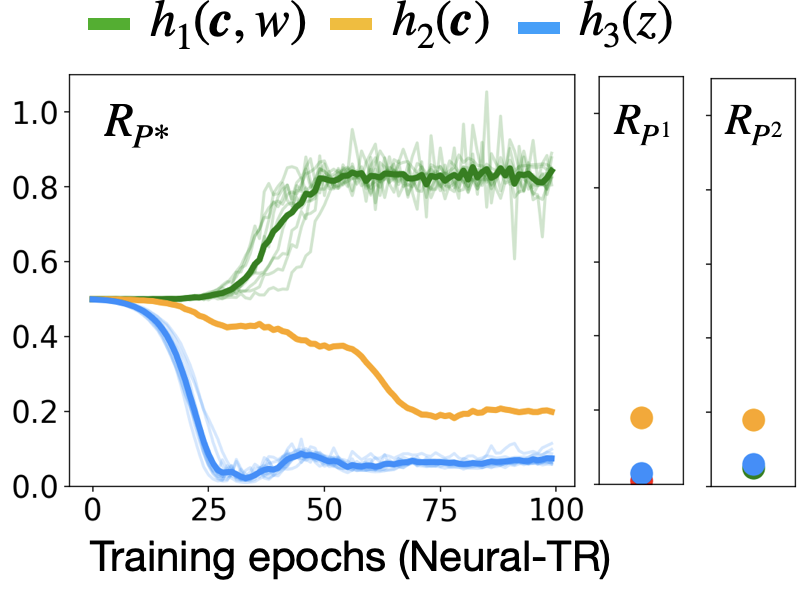}
% \caption{\Cref{ex:anti-causal-prediction}}
% \label{fig:simulations:a}
% \end{subfigure}\hfill
% \begin{subfigure}[t]{0.33\linewidth}\centering%(d)
%   \includegraphics[width = 3.7cm]{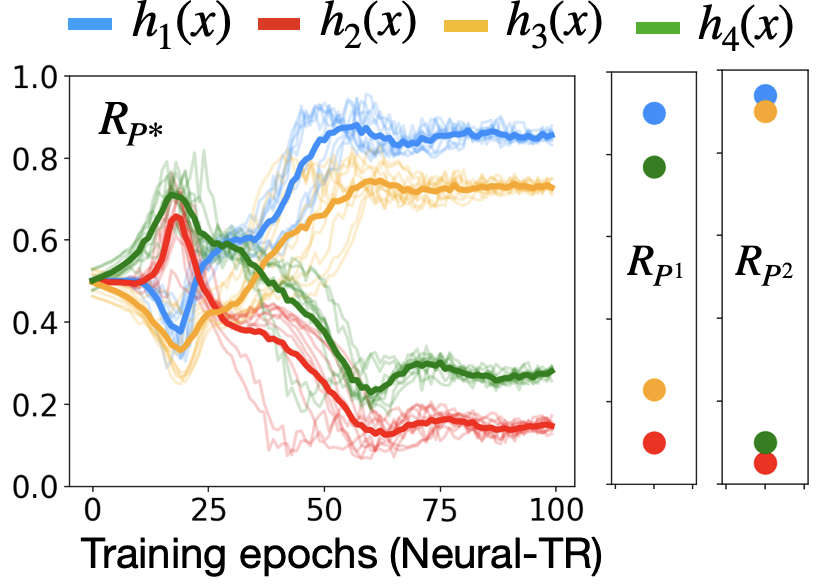}
% \caption{\Cref{ex:bow}}
% \label{fig:simulations:b}
% \end{subfigure} \hfill
% \begin{subfigure}[t]{0.33\linewidth}\centering%(d)
%   \includegraphics[width = 3.7cm]{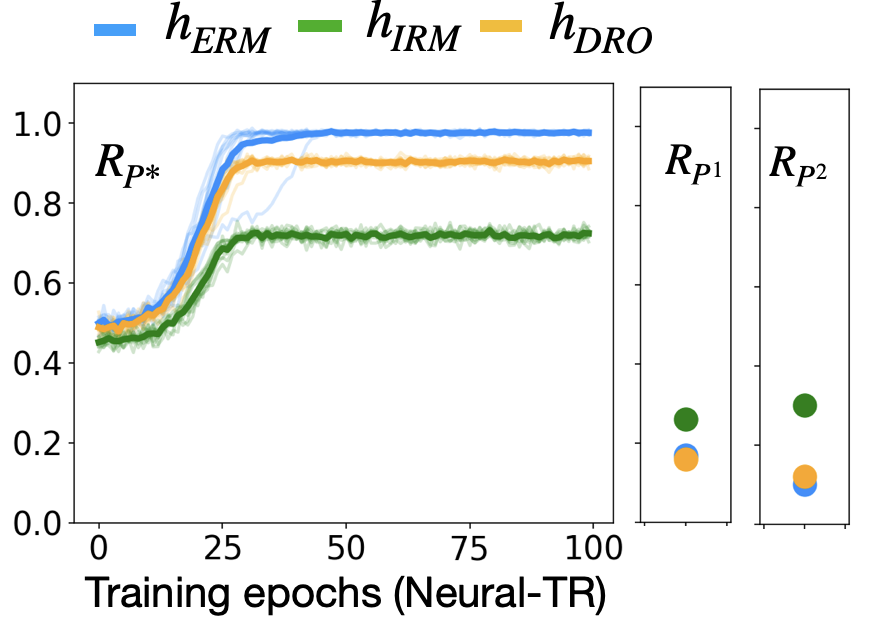}
% \caption{CMNIST}
% \label{fig:simulations:c}
% \end{subfigure}
% \hfill
% \null
%   \caption{Worst-case risk evaluation results as a function of Neural-TR (\Cref{alg:partialTR}) training iterations.}
%   \label{fig:simulations}
% \end{figure*}

\begin{figure*}[t]
\centering
\begin{subfigure}[t]{0.22\linewidth}\centering%(d)
  \includegraphics[width = 2.8cm]{Figures/simulations_example2.png}
\caption{\Cref{ex:anti-causal-prediction}}
\label{fig:simulations:a}
\end{subfigure}\hfill
\begin{subfigure}[t]{0.22\linewidth}\centering%(d)
  \includegraphics[width = 2.8cm]{Figures/simulations_example3.png}
\caption{\Cref{ex:bow}}
\label{fig:simulations:b}
\end{subfigure} \hfill
\begin{subfigure}[t]{0.22\linewidth}\centering%(d)
  \includegraphics[width = 2.8cm]{Figures/simulations_cmnist.png}
\caption{CMNIST}
\label{fig:simulations:c}
\end{subfigure}
\begin{subfigure}[t]{0.15\linewidth}\centering%(d)
  \includegraphics[scale=0.17]{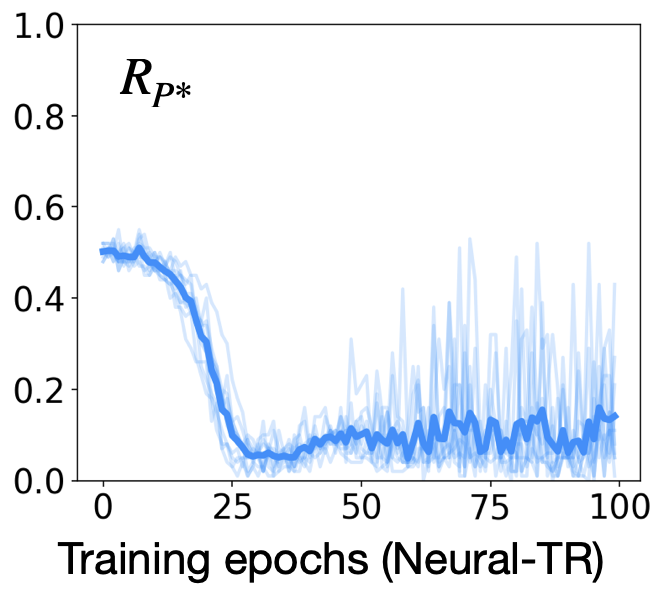}
\caption{\Cref{ex:anti-causal-prediction}}
\label{fig:cro:a}
\end{subfigure}\quad
\begin{subfigure}[t]{0.15\linewidth}\centering%(d)
  \includegraphics[scale=0.17]{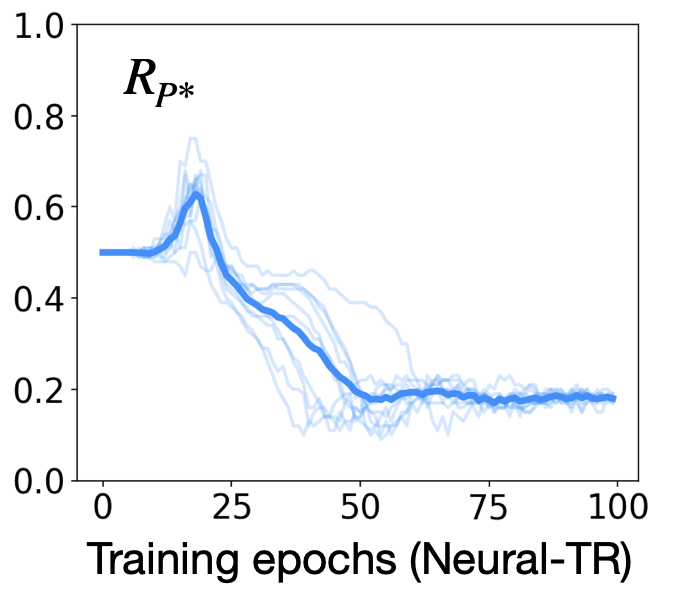}
\caption{\Cref{ex:bow}}
\label{fig:cro:b}
\end{subfigure}
\hfill
\null
  \caption{(\subref{fig:simulations:a}-\subref{fig:simulations:c}): worst-case risk evaluation results as a function of Neural-TR (\Cref{alg:partialTR}) training iterations. (\subref{fig:cro:a},\subref{fig:cro:b}): worst-case risk evaluation of CRO.}
  \label{fig:simulations}
\end{figure*}

\label{sec:experiments}
This section illustrates \Cref{alg:partialTR,alg:CRO} for the evaluation and optimization of the generalization error on several tasks, ranging from simulated examples to semi-synthetic image datasets. The details of the experimental set-up and examples not fully described below, along with additional experiments, can be found in the Appendix. %Overall we found that our simulations validate our theoretical findings (\Cref{thm:partial-TR-neural,thm:CRO}) across a range of different settings. 

% Our results are summarized in \Cref{fig:simulations}, that we discuss in the following. 

\subsection{Simulations}

\paragraph{Worst-case risk evaluation} Our first experiment revisits \Cref{ex:anti-causal-prediction,ex:bow} for the evaluation of the worst-case risk $R_{P^*}$ of various classifiers with Neural-TR (\Cref{alg:partialTR}). 

In \Cref{ex:anti-causal-prediction} we had made (anecdotal) performance observations for the classifiers $h_1(\c, w) := w \oplus \bigoplus_{c \in \c} c, h_2(\c) := \bigoplus_{c \in \c} c, h_3(z) := z$ in a selected target domain $\M^*$. We now consider providing a worst-case risk \textit{guarantee} with Neural-TR for \textit{any} (compatible) target domain. The main panel in \Cref{fig:simulations:a} shows the convergence of the worst-case risk evaluator over successive training iterations (line 15, \Cref{alg:partialTR}), repeated 10 times with different model seeds and solid lines denoting the mean worst-case risk. The source performances $R_{P_1},R_{P_2}$ are given in the two right-most panels for reference. We observe that the good source performance of $h_2(\c)$ and $h_3(z)$ generalizes to \textit{all} possible target domains consistent with our assumptions, while the classifier $h_1(\c,w)$ diverges, with an error of $90\%$ in the worst target domain. In \Cref{ex:bow}, we consider the evaluation of binary classifiers $h \in \{h_1(x):=x, h_2(x):= \neg x, h_3(x):=0, h_4(x):=1\}$. $h_2(x)=\neg x$. Our results are given in \Cref{fig:simulations:b}, highlighting the extent to which source performance need not be indicative of target performance. With these results, we are now in a position to confirm the desirable performance profile of $h_2$, even in the worst-case, as hypothesized in \Cref{ex:bow}.

\paragraph{Worst-case risk optimization}
For each one of the examples above, we implement CRO (\Cref{alg:CRO}) to uncover the theoretically optimal classifier in the worst-case. The worst-case risks of the classifiers learned by CRO, denoted $h_{\text{CRO}}$, are given by $0.05$ for \Cref{ex:anti-causal-prediction} and $0.18$ for \Cref{ex:bow}. The worst-case risk evaluation results (with Neural-TR, as above) are given in \Cref{fig:cro:a,fig:cro:b}. It is interesting to note that these errors coincide with the best performing classifiers considered in the previous experiment, i.e. $h_3(z):=z$ for \Cref{ex:anti-causal-prediction} and $h_2(x)=\neg x$ for \Cref{ex:bow}. In fact, by comparing the outputs of CRO $h_{\text{CRO}}$ with these classifiers, we can verify that the classifiers learned by CRO in these examples are precisely the mappings $h_{\text{CRO}}(z):=z$ and $h_{\text{CRO}}(x)=\neg x$ which is remarkable. By \Cref{thm:CRO}, $h_3(z):=z$ and $h_2(x)=\neg x$ are the theoretically best worst-case classifiers among all possible functions given the data and assumptions.

\subsection{Colored MNIST} 
Our second experiment considers the colored MNIST (CMNIST) dataset that is used in the literature to highlight the robustness of classifiers to spurious correlations, e.g. see \cite{arjovsky2019invariant}. The goal of the classifier is to predict a binary label $Y\in\{0,1\}$ assigned to an image $\Z\in \mathbb R^{28\times 28\times 3}$ based on whether the digit in the image is greater or equal to five. MNIST images $\W\in \mathbb R^{28\times 28}$ are grayscale (and latent), and color $C\in \{\text{red}, \text{green}\}$ correlates with the class label $Y$. % By construction, the label is more strongly correlated with the color than with the digit, so any algorithm purely minimizing training error will tend to exploit the color.

\begin{wrapfigure}{r}{3cm}
\vspace{-0.5cm}
\centering
  \includegraphics[width=3cm]{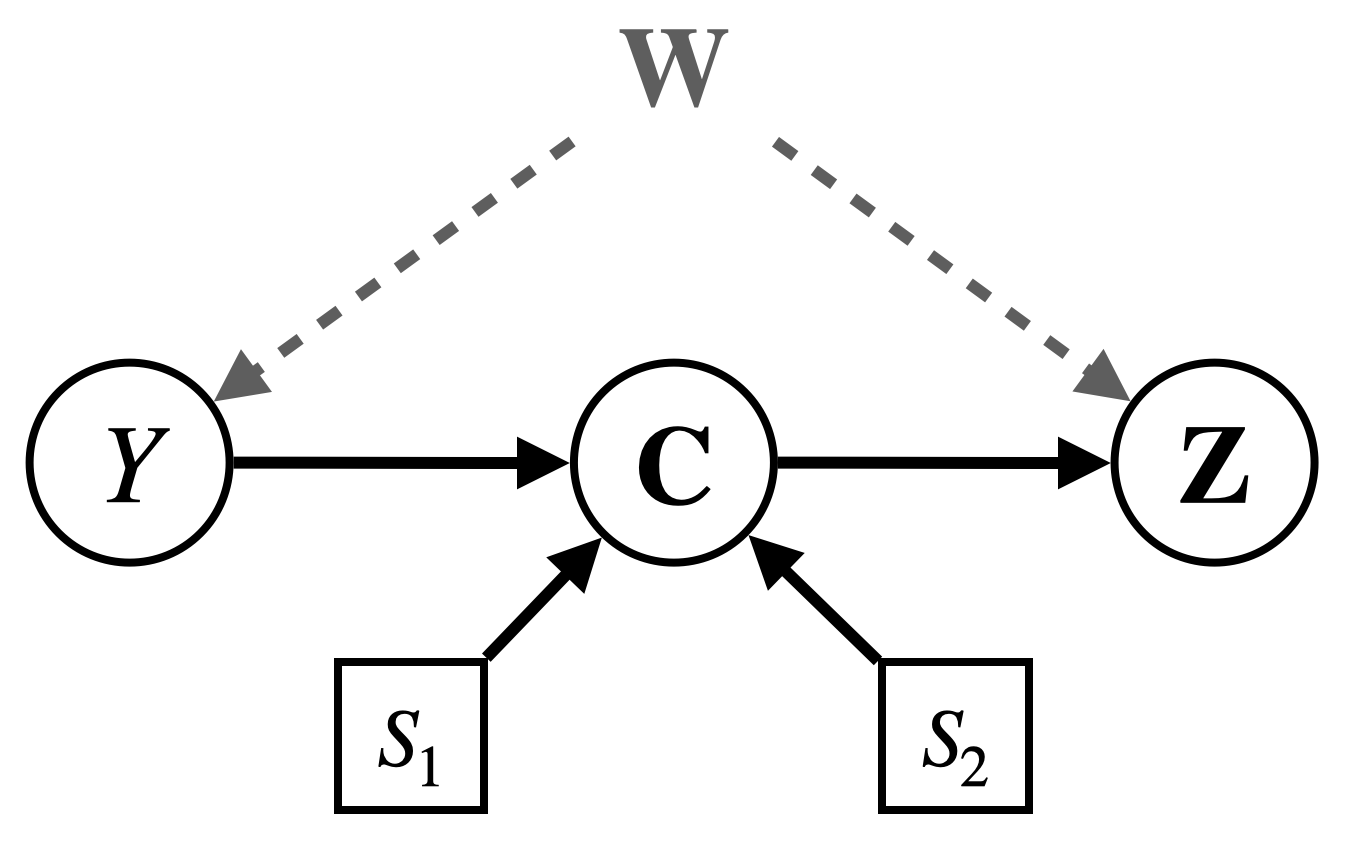}
    \caption{$\G^\del_{\text{CMNIST}}$}
    \label{fig:CMNIST}
\end{wrapfigure}
Following standard implementations, we construct datasets from three domains with varying correlation strength between the color and image label: set to $90\%$ agreement between the color $C=$red and label $Y=1$ in source domain $\M^1$, and $80\%$ in source domain $\M^2$. We consider performance evaluation and optimization in a target domain $\M^*$ with potential discrepancies in the mechanism for $C$, rendering the correlation between color and label unstable. The selection diagram is given in Figure \ref{fig:CMNIST}.

\vspace{-0.1in}
\paragraph{Worst-case risk evaluation} Consider a setting in which we are given a classifier $h:\Omega_\Z \to \Omega_Y$, and the task is to assess its generalizability with a symmetric 0-1 loss function. 
% Following Neural-TR (\Cref{alg:partialTR}) (detailed in \Cref{{sec:app_cmnist}}), we could re-write the risk as follows:
% \begin{align}
%     R_{P^*}(h) = \3E_{Y \sim P(y;\eta_Y)} \big[\sum_{c}  P(c\mid y;\theta^*_{C}) \cdot \3E_{\Z \sim P(\z\mid c,y;\eta_\Z)}[|Y - h(\Z)|] \big].
% \end{align}
% The term $P^*(c\mid y) = P(c\mid y;\theta^*_{C})$ is parameterized with an NCM $\theta^*_{C}$ and we estimate $P^*(\z,c\mid do(y)) =P^{1,2}(y)\cdot P^{1,2}(\z\mid c, y)$ using data from input distributions $P^1,P^2$. In particular, we sample colored images $\Z \sim P^{1,2}(\z\mid c,y)$ using a generator network and optimize parameters with adversarial training, using a conditional GAN \cite{mirza2014conditional}. $P^{1,2}(y)$ is a Bernoulli distribution whose success probability is estimated through maximum likelihood. 
We use data drawn from $P^{1,2}(\z,y)$ to train predictors using Empirical Risk Minimization (ERM) \cite{vapnik1991principles}, Invariant Risk Minimization (IRM) \cite{arjovsky2019invariant}, and group Distributionally Robust Optimization (group DRO) \cite{sagawa2019distributionally}, namely $h_{\text{ERM}}(\z), h_{\text{IRM}}(\z)$, and $h_{\text{DRO}}(\z)$ respectively; more detailed discussion about the role of invariance and robustness in domain generalization is available in appendix \ref{sec:app_inv_rob}. Using Neural-TR, we observe in \Cref{fig:simulations:c} that the worst-case risk of $h_{\text{ERM}}$ in a target domain with a discrepancy in the color assignment is approximately $0.95$, $h_{\text{DRO}}$ achieves $0.90$ worst-case risk, and $h_{\text{IRM}}$ achieves $0.65$ worst-case risk. Either method perform worse than the baseline, that is random classification with risk $0.5$. On this task, a classifier trained on gray-scale images $\W$ achieves a worst-case error of $0.25$.

\begin{figure*}[t]
\centering
\begin{subfigure}[t]{0.33\linewidth}\centering%(d)
  \includegraphics[width = 4.3cm]{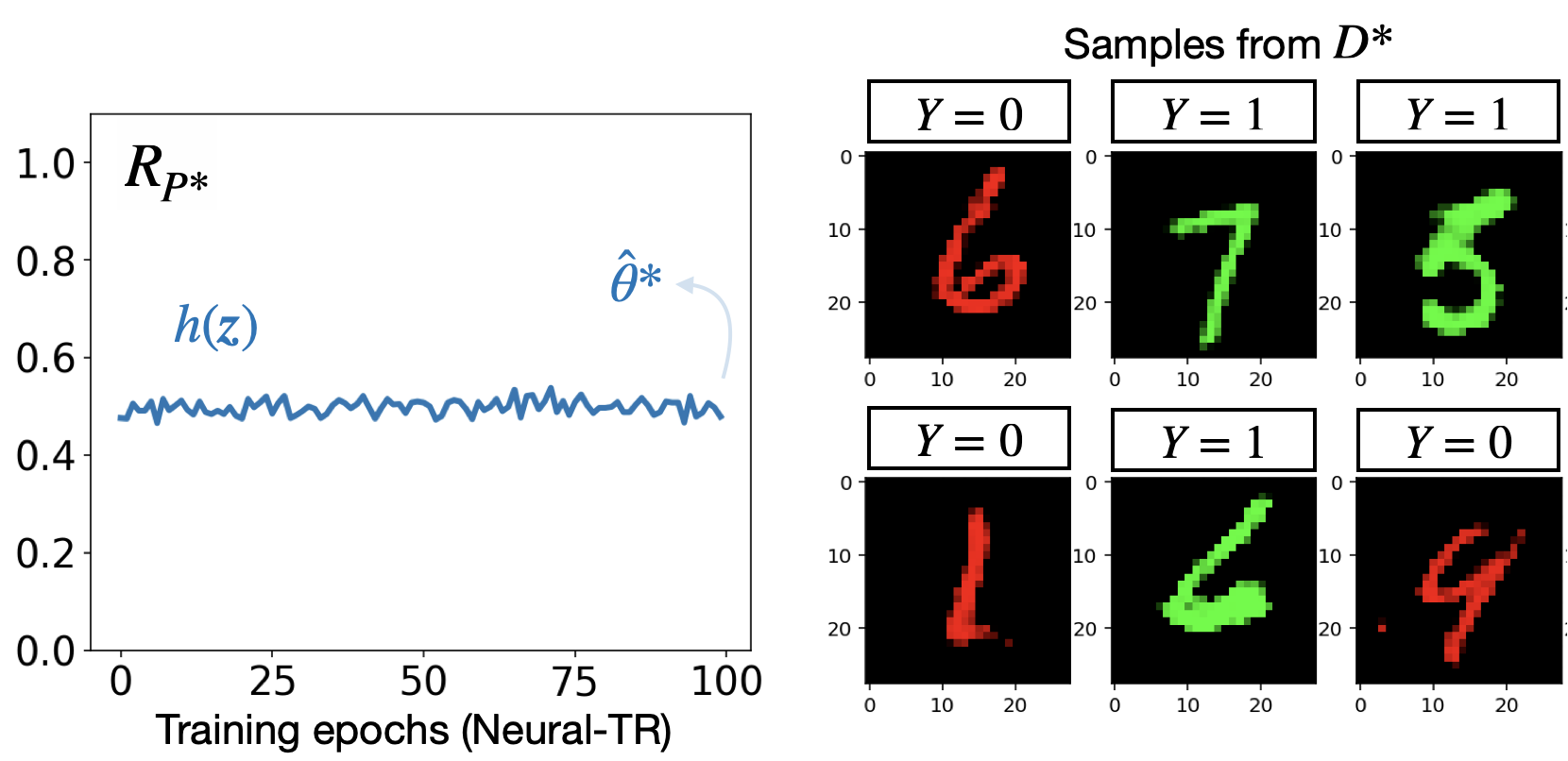}
\caption{Iteration 1}
\label{fig:mnist_samples:a}
\end{subfigure}\hfill
\begin{subfigure}[t]{0.33\linewidth}\centering%(d)
  \includegraphics[width = 4.4cm]{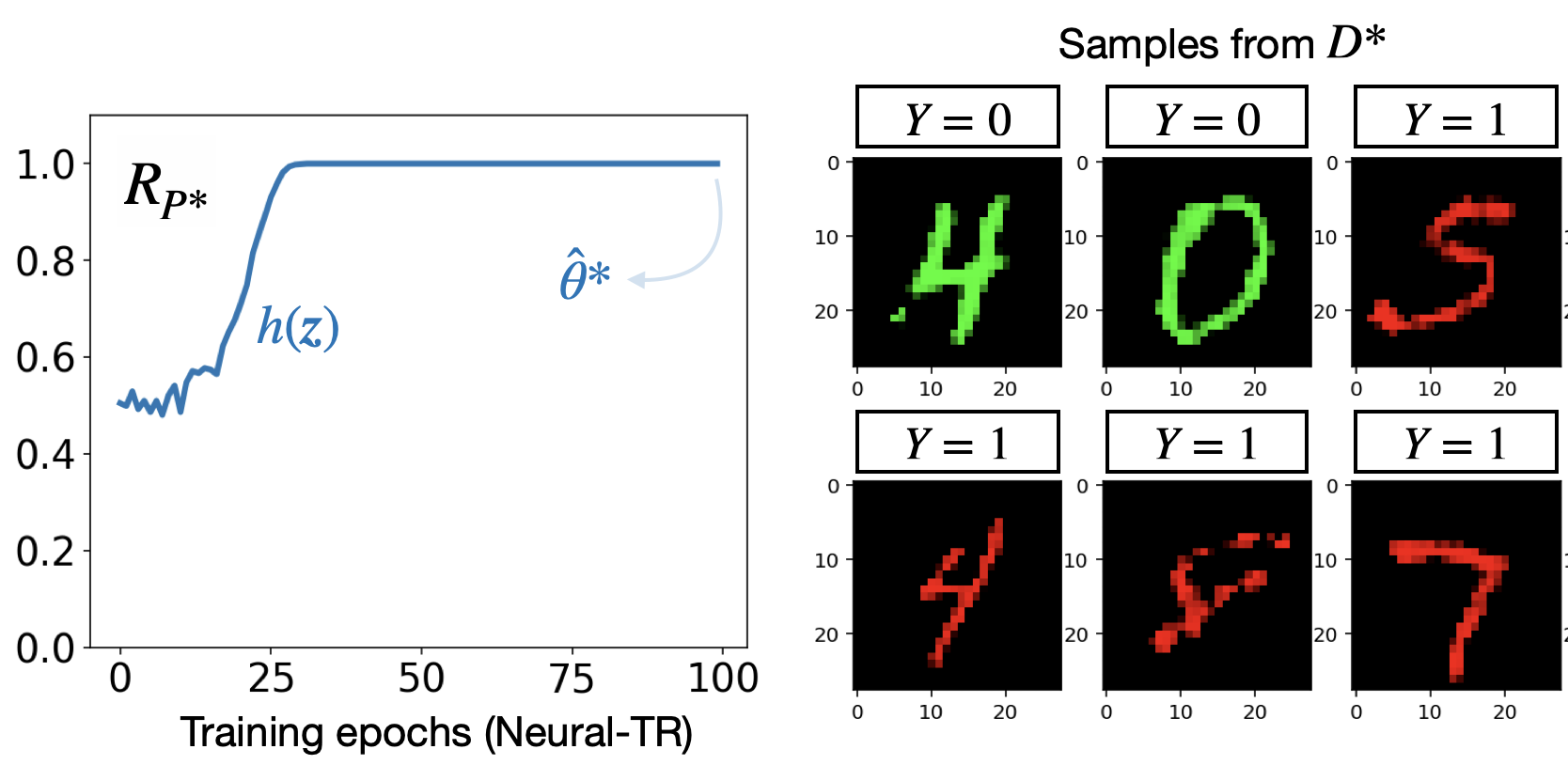}
\caption{Iteration 2}
\label{fig:mnist_samples:b}
\end{subfigure} \hfill
\begin{subfigure}[t]{0.33\linewidth}\centering%(d)
  \includegraphics[width = 4.3cm]{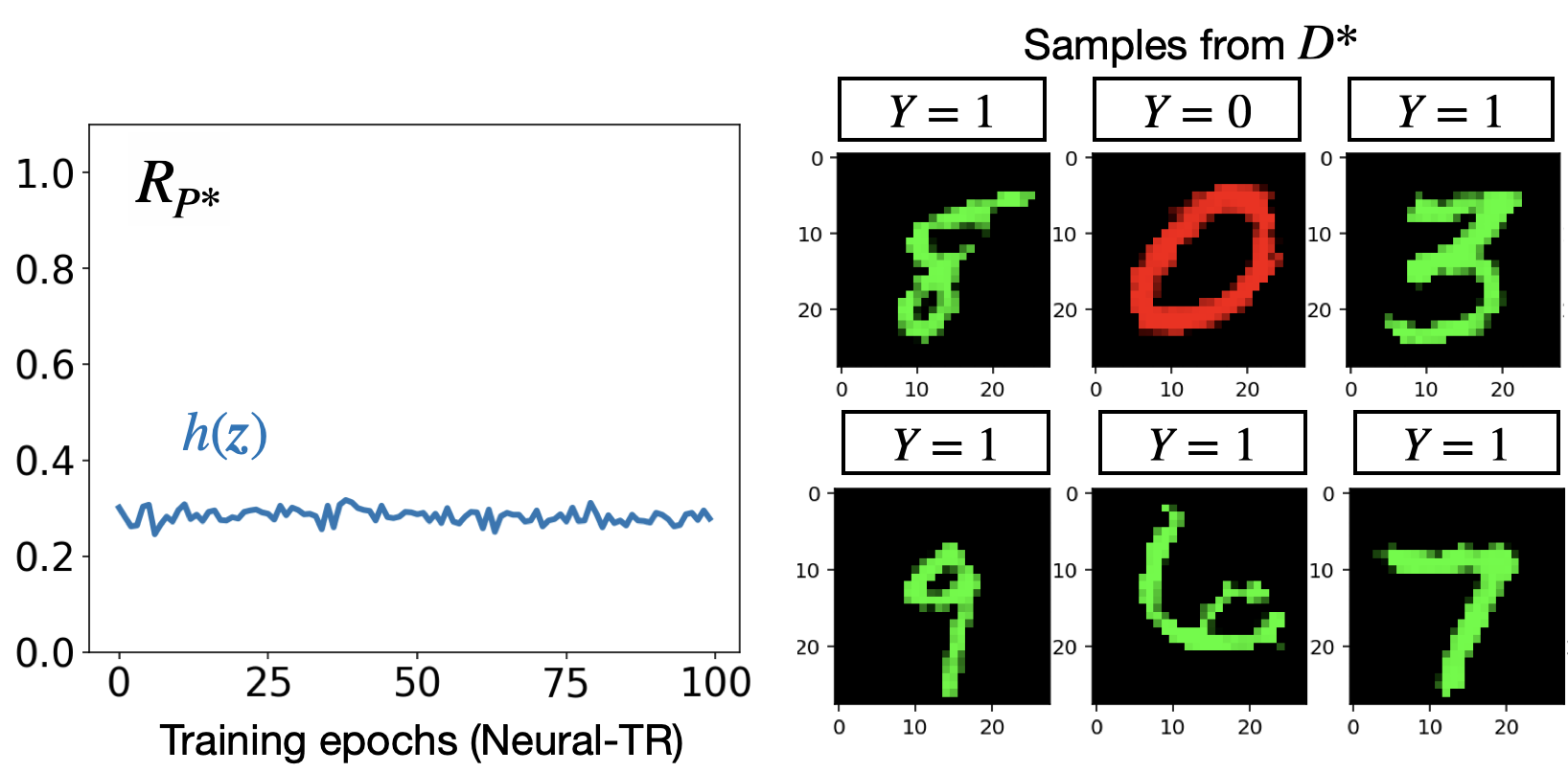}
\caption{Iteration 3}
\label{fig:mnist_samples:c}
\end{subfigure}
\hfill
\null
  \caption{Illustration of the CRO training process (\Cref{alg:CRO}) on the colored MNIST task. }
  \label{fig:mnist_samples}
\end{figure*}
\vspace{-0.1in}

\paragraph{Worst-case risk optimization} We now ask whether we could learn a theoretically optimal classifier in the worst-case with CRO (\Cref{alg:CRO}). \Cref{fig:mnist_samples} illustrates the training process over several iterations. Specifically, given a randomly initialized $h$, we infer the NCM $\hat{\theta}^*$ that entails worst-case performance of $h$ (in this case, chance performance $R_{P^*}(h)=0.5$) and generate data $D^*$ from $\hat{\theta}^*$, shown in \Cref{fig:mnist_samples:a}. In a second iteration, a new candidate $h$ is trained to minimize worst-case risk on $\3D = D^*$. Note that in $D^*$, we observe an almost perfect association between the color $C=$green and label $Y=1$: $h$ therefore is encouraged to exploit color for prediction. Its worst-case error (inferred with Neural-TR) is accordingly close to 1, and the corresponding worst-case NCM $\hat{\theta}^*$ entails a distribution of data in which the correlation between color and label is flipped: with a strong association between the color $C=$red and label $Y=1$, as shown in \Cref{fig:mnist_samples:b}. In a third iteration, a new candidate $h$ is trained to minimize worst-case risk on the updated $\3D^*$ with data samples from the previous two iterations (exhibiting opposite color-label correlations). By construction, this classifier is trained to ignore the spurious association between color and label, classifying images based on the inferred digit which leads to better behavior in the worst-case: achieving a final error of approximately 0.25, as shown in \Cref{fig:mnist_samples:c}, which is theoretically optimal. Note, however, that the poor performance of the baseline algorithms is not directly comparable to that of CRO, since CRO has access to background information (selection diagrams) that can not be communicated with the baseline algorithms. CRO may thus be interpreted as a meta-algorithm that operates with a broader range of assumptions encoded in a certain format (i.e., the selection diagram) that enable it to find the theoretically optimal classifier for domain generalization, in contrast to the baseline algorithms.

\vspace{-0.1in}
\section{Conclusion}
Guaranteeing the performance of ML algorithms implemented in the wild is a critical ingredient for improving the safety of AI. In practice, evaluating the performance of a given algorithm is non-trivial. Often the performance may vary as a consequence of our uncertainty about the possible target domain, also called a non-transportable setting. In this paper, we provide the first general estimation technique for bounding an arbitrary statistic such as the classification risk across multiple domains. More specifically, we extend the formulation of canonical models and neural causal models for the transportability task, demonstrating that tight bounds may be estimated with both approaches. Building on these theoretical findings, we introduce a Bayesian inference procedure as well as a gradient-based optimization algorithm for scalable inferences in practice. Moreover, we introduce Causal Robust Optimization (CRO), an iterative learning scheme that uses partial transportability as a subroutine to find a predictor with the best worst-case risk given the data and graphical assumptions.

\section{Acknowledgement}

This research was supported in part by the NSF, ONR, AFOSR, DARPA, DoE, Amazon, JP Morgan, and The Alfred P. Sloan Foundation. We thank Julia Kostin for helpful discussions and feedback on the
manuscript. This work was done in part while KJ was visiting the Simons Institute for the Theory of Computing.

\bibliography{bibliography}
\bibliographystyle{plain}

\newpage

\appendix

\hypersetup{
    colorlinks=true,
    linkcolor=sexybrown,
    filecolor=sexybrown,
    urlcolor=sexybrown,
}
\addcontentsline{toc}{section}{Appendix} % Add the appendix text to the document TOC
\part{Appendix} % Start the appendix part
\parttoc % Insert the appendix TOC
\hypersetup{
    colorlinks,
   linkcolor={pierCite},
    citecolor={pierCite},
    urlcolor={pierCite}
}

\newpage

%\input{A0_common_questions}
%\newpage

\section{Partial Transportability as a Bayesian Inference Task}
\label{sec:Bayesian}
Consider a system of multiple SCMs $\M^1, \M^2, \dots, \M^K, \M^*$ that induces the selection diagram $\G^{\del}$, and entails the source distributions $ P^1, P^2, \dots, P^K$, and the target distribution $P^*$ over the variables $\V$. Let $\psi: \Omega_\X \to \mathbb{R}$ be a functional of interest. Consider the following optimization scheme:
\begin{align}
    \hat{q}_{\max} = \max_{\N^1,\N^2, \dots, \N^*} &\3E_{P^{\N^*}}[\psi(\X)]\\
    \text{ s.t. } 
    & P^{\N^i}(r_V) = P^{\N^j}(r_V), && \forall i,j \in \{1,2,\dots,K,*\} \quad \forall V \not\in \Delta_{i,j} \nonumber \\
    & P^{\N^i}(\v) = P^i(\v) && \forall i \in \{1,2,\dots,K,*\}, \nonumber
\end{align}
where each $\N^i$ is a canonical model characterized by a joint distribution over $\{R_V\}_{V \in \V}$.

This section describes an Markov Chain Monte Carlo (MCMC) algorithm to approximate the optimal scalar $\hat{q}_{\max}$ upper bounding the query $\phi_{\N^*} := \3E_{P^{\N^*}}[\psi(\X)]$ above from finite samples drawn from input distributions $ P^1, P^2, \dots, P^K$. Formally, we aim to infer the value,
\begin{align}
    \hat q_{\max}: P(\phi_{\N^*} < \hat q_{\max} \mid \bar{\v}) = 1
\end{align}
where $\bar{\v} := (\bar{\v}_{P^1}, \dots, \bar{\v}_{P^k})$,  $\bar{\v}_{P^i} = \{\v_{P^i}^{(j)}:j=1,\dots,n_i\}$ denote $n_i$ independent sampled drawn from $P^i$.

We consider a setting in which we are provided with prior distributions (possibly uninformative) over parameters of the family of compatible CMs $\N^1,\N^2, \dots, \N^*$. In particular, we assume that for each CM, probabilities of $P(U), U\in\U$ are drawn from uninformative Dirichlet priors; and $\1F$ are drawn uniformly from the finite class of possible structural functions. That is, for every $U \in \U$ and every $V \in \V$,
\begin{align}
    P(U) \sim \texttt{Dirichlet}(\alpha_1,\ldots,\alpha_{d_U}), \quad f_V \sim \texttt{Uniform}(\D_{PA_V} \times \D_{\U_V} \mapsto \D_{V})
\end{align}
where $d_U = \prod_{V \in Pa(\C_U)}\left |\D_{V}\right|$ and $\alpha_1 =\ldots=\alpha_{d_U}=1$. 

The total collection of parameters is given by the set $\{(\boldsymbol\theta^{\1N^1}, \boldsymbol\xi^{\1N^1}), \dots, (\boldsymbol\theta^{\1N^*}, \boldsymbol\xi^{\1N^*})\}$. Among them $\boldsymbol\theta= \{\boldsymbol\theta_U\in[0,1]^{d_U}: U\in\U\}$ define the parameterization of exogenous probabilities while $\boldsymbol\xi = \{\xi^{(pa_V,\u_V)}_{V}\in \supp_V: PA_V \subset \V, \U_V\subset\U\}$ define the structural functions, one set of each CM separately.

We design a Gibbs sampler to evaluate posterior distributions over these parameters. For simplicity, we describe each step of the gibbs sampler for a single domain and input dataset, and consider the implementation of constraints below.

\subsection{Gibbs Sampling}
The Gibbs sampler iterates over the following steps, each parameter conditioned on the current values of the remaining terms in the parameter vector.
\begin{enumerate}[leftmargin=*]
    \item \textit{Sample $\u$.} Let $u\in\D_U, U\in \U$. For each observed data example across all domains $\v^{(n)} \in \bar{\v}$, $n=1,\dots,\sum_{i}n_i$, we sample corresponding exogeneous variables $U \in \U$ from the conditional distribution,  
    \begin{align}
    \label{u_posterior}
        P(\u^{(n)} \mid  \v^{(n)}, \boldsymbol\xi, \boldsymbol\theta) \hspace{0.1cm}\propto \hspace{0.1cm} P(\u^{(n)}, \v^{(n)} \mid  \boldsymbol\xi, \boldsymbol\theta) = \prod_{V\in\V}\I\{\xi^{(pa_V^{(n)},\boldsymbol u_V^{(n)})}_{V}=v^{(n)}\} \prod_{U\in\U} \theta_{u}.
    \end{align}
    \item \textit{Sample $\boldsymbol\xi$.} Parameters $\boldsymbol\xi$ define deterministic causal mechanisms. For a given parameter $\xi^{(pa_V,\boldsymbol u_V)}_{V} \in \boldsymbol\xi$ its conditional distribution is given by $P(\xi^{(pa_V,\boldsymbol u_V)}_{V} = v \mid \bar{\v}, \bar{\u}) = 1$ if there exists a sample $(\v^{(n)},pa^{(n)}_V,\u^{(n)})$ for some $n$, where $n$ iterates over the samples of $\u$ from step 1 and $\v$ associated with the \textit{subset} of domains in which exogeneous probabilities match the target domain, such that $\xi^{(pa_V^{(n)},\boldsymbol u_V^{(n)})}_{V} = v^{(n)}$. Otherwise, $P(\xi^{(pa_V,\boldsymbol u_V)}_{V} = v \mid \bar{\v}, \bar{\u})$ is given by a uniform discrete distribution over its support $\supp_{V}$. 
    \item \textit{Sample $\boldsymbol\theta$.} Let $\boldsymbol\theta_U = (\theta_1,\dots,\theta_{d_U}) \in \boldsymbol\theta$ be the parameters that define the probability vector of possible values of variables $U\in\U_{\C}$. Its conditional distribution is given by, 
    \begin{align*}
        \boldsymbol\theta_U \mid \bar{\v}, \bar{\u} \sim \texttt{Dirichlet} \left(\alpha_1 + \beta_1,\dots,\alpha_{d_U}+\beta_{d_U} \right),
    \end{align*}
    where $\beta_i := \sum_{n} \I\{u^{(n)}=u_i\}$. Similarly, $n$ iterates over the samples of $\u$ from step 1 associated with the subset of domains in which exogeneous probabilities match the target domain.
\end{enumerate} 

\subsection{Implementing Constraints}
Iterating this procedure forms a Markov chain with the invariant distribution $P(\boldsymbol u, \boldsymbol\xi, \boldsymbol\theta \mid  \bar{\v})$. This naturally enforces the soft constraint $P^{\N^i}(\v) = P^i(\v), i \in \{1,2,\dots,K,*\}$ for the CMs defined by the sampled parameters. The posterior distributions of the subset of $(\boldsymbol\theta^{\1N^*}, \boldsymbol\xi^{\1N^*})$ for which invariances across domains are assumed are then matched with the posterior distribution inferred from source data. The constraint $P^{\N^i}(r_V) = P^{\N^*}(r_V), i \in \{1,2,\dots,K,*\}, V \not\in \Delta_{i,*}$ is enforced by generating $\boldsymbol\theta_U^{\1N^*}$ from the prior such that $P^{\N^*}(r_V) := \sum_{u\in\Omega} \theta_u^{\1N^*} =  \sum_{u \in \Omega} \theta_u^{\1N^i}:= P^{\N^i}(r_V), V \not\in \Delta_{i,*}$ where $\Omega$ denotes the partition of $\supp_U$ that is expressed by $R_V$.

The query is then approximated by plugging the $T$ MCMC samples into the query $\phi_{\N^*}$ to obtain $\phi_{\N^*}^{(1)}, \dots, \phi_{\N^*}^{(T)}$ and
\begin{align}
    \hat q_{\max} := \sup \{x : \sum_t \I\{\phi_{\N^*}^{(t)} \leq x\} = \alpha\}.
\end{align}
for a chosen value of confidence value $\alpha$.

\begin{example}[\Cref{ex:bow} continued]
Consider again the evaluation of the risk $R_{P^*}(h):=P^{\N^*}(Y \neq h(X))$ given the classifier $h(x) = \neg x$. We are data sampled from $P^1(x,y),P^2(x,y)$. For every SCM $\M$, there exists an SCM of the described format specified with only a distribution $P(r_X,r_Y)$, where,
\begin{equation}
    \supp_{R_X} = \{0,1\}, \quad \supp_{R_Y} = \{y=0, y=1, y=x, y=\neg x\}.
\end{equation}
Thus, the joint distribution $P(u_{XY}) = P(r_X,r_Y)$ can be parameterized by a vector in 8-dimensional simplex. The canonical SCMs associated with each of the SCMs $\M^1,\M^2,\M^*$, are denoted $\N^1,\N^2,\N^*$, for which $\V=\{X,Y\},\U=\{U_{XY}\}$ and $\supp_{U_{XY}}=\{1,\dots,8\}$. The partial task can be translated into an optimization problem aiming to to find the upper-bound for the risk $R_{P^*}(h)$  for the classifier $h(x) = \neg x$:
\begin{align}
    \max_{\N^1,\N^2,\N^*} & P^{\N^*}(Y \neq \neg X) \\
    \text{ s.t. } 
    & P^{\N^1}(r_Y) = P^{\N^*}(r_Y), \quad P^{\N^2}(r_X) = P^{\N^*}(r_X)  && \text{($Y \not\in \Delta_{1}$, and $X \not\in \Delta_{2}$)} \nonumber\\
    & P^{\N^1}(x,y) = P^1(x,y), \quad P^{\N^2}(x,y) = P^2(x,y) && \text{(matching source dists)} \nonumber
\end{align}
With the Gibbs sampler outlined above, we obtain samples from the posterior distribution $P(\theta^{\1N^1},\theta^{\1N^2}, \xi^{\1N^1},\xi^{\1N^2} \mid \bar{\v})$. $\theta^{\1N^1}, \theta^{\1N^2}$ encode the probabilities $P^{\1N^1}(U_{XY}=u), P^{\1N^2}(U_{XY}=u)$ and are instantiated as two-dimensional arrays of shape $(2,4)$ such that, e.g., $P^{\N^1}(r_Y) = \sum_{\text{dim. }1} \theta^{\1N^1}$, with $r_Y \in \{1,2,3,4\}$ and similarly $P^{\N^1}(r_X) = \sum_{\text{dim. }0} \theta^{\1N^1}$, with $r_X \in \{1,2\}$.

To enforce the constraints $P^{\N^1}(r_Y) = P^{\N^*}(r_Y), \quad P^{\N^2}(r_X) = P^{\N^*}(r_X)$ it thus suffices to sample  $\theta^{\1N^*}$ from the prior Dirichlet distribution (as it has not been updated with data) and re-scale the outcomes such that the partial row and column sums satisfy the corresponding partial row and column sums computed from the MCMC samples of $P(\theta^{\1N^1},\theta^{\1N^2} \mid \bar{\v})$. The resulting MCMC parameters $(\boldsymbol\theta^{\1N^*},\boldsymbol\xi^{\1N^*})$ are then valid samples from the posterior distribution $P(\theta^{\1N^*},\xi^{\1N^*}\mid \bar{\v})$ subject to assumed constraints, and the risk could be computed by plugging those samples into $R_{P^*}(h):=P^{\N^*}(Y \neq h(X))$ to obtain $R_{P^*}(h)^{(1)}, \dots, R_{P^*}(h)^{(T)} $ and evaluating 
\begin{align}
    \label{eq:opt_bayesian}
    \hat q_{\max} := \sup \{x : \sum_t \I\{R_{P^*}(h)^{(t)} \leq x\} = \alpha\}.
\end{align}
for a chosen value of confidence value $\alpha$. \hfill $\largesquare$
\end{example}

The following Theorem shows that $\hat q_{\max}$ converges to the true (tight) bounds $q_{\max}$ for the unknown query $R_{P^*}(h)$.

\begin{theorem}
\label{thm:bayesian} \textit{$\hat q_{\max}$ defined in \Cref{eq:opt_bayesian} is a valid upper bound on $q_{\max}$ for any sample size, and coincides with $q_{\max}$ as the sample size increases to infinity.}
\end{theorem}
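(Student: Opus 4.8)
The plan is to establish the two assertions of \Cref{thm:bayesian} separately: finite-sample \emph{validity} ($\hat q_{\max}\ge q_{\max}$ for every sample size) and asymptotic \emph{tightness} ($\hat q_{\max}\to q_{\max}$ as the sample size $n\to\infty$), where $q_{\max}$ is the tight bound of \Cref{def:partial_transportability}. I take $\hat q_{\max}$ to be characterized by the defining property $P(\phi_{\N^*}<\hat q_{\max}\mid\bar\v)=1$ stated at the head of the section, with $\phi_{\N^*}=\3E_{P^{\N^*}}[\psi(\X)]$, and estimated by the empirical $\alpha$-confidence quantile of \Cref{eq:opt_bayesian}. The MCMC step is routine: the Gibbs chain is irreducible with invariant law $P(\u,\boldsymbol\xi,\boldsymbol\theta\mid\bar\v)$, so by the ergodic theorem the empirical quantile converges to the corresponding posterior quantile as the number of draws $T\to\infty$, and it suffices to reason about the population posterior.

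For validity, I would show that the posterior support of $\phi_{\N^*}$ contains every exactly-compatible target SCM, i.e.\ every $\N^*$ whose shared mechanisms agree with source models entailing the population laws $P^i$ and whose $\Delta$-indexed mechanisms are arbitrary. The key lemma is that such parameters receive positive posterior density: by the positivity assumption every observed datum $\v^{(n)}\in\bar\v$ has strictly positive likelihood under any compatible model, and the prior is full-support (the $\mathtt{Dirichlet}$ with $\alpha_i=1$ charges the whole simplex, and the structural functions are uniform over a finite class), so the likelihood cannot annihilate these parameters. The cross-domain equalities $P^{\N^i}(r_V)=P^{\N^j}(r_V)$ for $V\notin\Delta_{i,j}$ are imposed exactly by the re-scaling construction of the Implementing-Constraints subsection and are themselves satisfied by every compatible SCM, so they do not exclude the witnesses achieving $q_{\max}$. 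Taking the supremum over the support then gives $\hat q_{\max}\ge\sup_{\text{compatible}}\phi_{\N^*}=q_{\max}$, for any sample size.

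For tightness, I would invoke posterior consistency for the well-specified source models: with a full-support prior and a correctly specified canonical family, the marginal posterior over each entailed distribution concentrates in total variation on the true $P^i$ (Doob/Schwartz-type consistency). The cross-domain constraints transport this concentration to the shared part of $\N^*$, pinning those mechanisms down, while the $\Delta$-mechanisms remain governed by the prior; hence the posterior law of $\phi_{\N^*}$ concentrates on the partial-identification interval $[q_{\min},q_{\max}]$ of target risks compatible with $\3P$ and $\G^{\del}$. It then remains to argue that the upper confidence summary of this concentrating law converges to the right endpoint $q_{\max}$.

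The hard part is reconciling the support-based validity argument with the mass-based consistency argument. At finite $n$ the posterior support does \emph{not} literally shrink to the compatible set: any full-support parameter assigning positive probability to the finitely many observed outcomes survives, so the naive essential supremum stays strictly above $q_{\max}$ and would fail to converge. The resolution, and the technical core, is to show that the posterior \emph{mass} above $q_{\max}$ decays to zero while mass accumulates arbitrarily close to $q_{\max}$ from below, so the quantile of \Cref{eq:opt_bayesian} is squeezed onto $q_{\max}$; making this quantitative requires controlling the upper tail of the posterior of $\phi_{\N^*}$ (e.g.\ letting the confidence level approach $1$ slower than the concentration rate) and treating the discrete component $\boldsymbol\xi$ separately from the continuous $\boldsymbol\theta$. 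I expect this tail-control step, rather than either consistency argument on its own, to be where the real work lies.
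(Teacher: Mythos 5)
Your proposal follows the same two-part strategy as the paper's proof: finite-sample validity is obtained from the full-support prior (Dirichlet on the simplex, uniform over the finite function class) together with the strictly positive likelihood of every compatible parameter, so that the set $\boldsymbol\Theta$ of tuples entailing $\3P$ and respecting $\G^{\del}$ retains positive posterior mass and the posterior support of $\phi_{\N^*}$ reaches up to $q_{\max}$; asymptotic tightness is obtained from posterior concentration on $\boldsymbol\Theta$, giving $P(R_{P^*}(h) < q_{\max} \mid \bar\v) \rightarrow_p 1$ and hence convergence of the reported quantile. Up to the choice of citing Doob/Schwartz-type consistency rather than arguing directly through the likelihood, this is the paper's argument.

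The genuinely valuable part of your write-up is the tension you flag in the final paragraph, and you should know that the paper's own proof does not resolve it. The paper proves a statement about posterior \emph{mass} ($P(R_{P^*}(h) < q_{\max}\mid\bar\v)\rightarrow_p 1$) and then reads off convergence of $\hat q_{\max}$ "as the $100^{th}$ quantile of the posterior," i.e.\ the essential supremum, which is governed by the posterior \emph{support}. As you observe, with a full-support prior and finitely many observations, parameters outside $\boldsymbol\Theta$ (in particular those whose $\Delta$-free mechanisms are misspecified but still assign positive probability to every observed outcome) keep strictly positive posterior density for every $n$, so the essential supremum of $\phi_{\N^*}$ under the posterior can stay bounded away from $q_{\max}$ and mass concentration alone does not force it down. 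Conversely, a fixed $\alpha<1$ quantile converges but forfeits the finite-sample validity claim, since at small $n$ it may sit below $q_{\max}$. Your proposed remedy---let the credibility level $\alpha=\alpha_n\uparrow 1$ at a rate slower than the posterior tail decay above $q_{\max}$, handling the discrete $\boldsymbol\xi$ and continuous $\boldsymbol\theta$ components separately---is exactly the step needed to make both halves of the theorem hold simultaneously for a single estimator, and it is absent from the paper's proof as written. So your sketch is not a complete proof, but it correctly reproduces the intended argument and, more importantly, correctly locates the step at which that argument is incomplete.
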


\begin{proof}
Let $\boldsymbol{\Theta}$ denote the collection of parameters $\boldsymbol\xi, \boldsymbol\theta$ of discrete SCMs that generate the observed data from $P^1, P^2, \dots$. We assume that the prior distribution on $\boldsymbol\xi, \boldsymbol\theta$ has positive support over the domain of $\boldsymbol{\Theta}$. That is, the probability density function $\rho(\boldsymbol \xi) > 0$ and $\rho(\boldsymbol \theta) > 0$ for every possible realization of $\boldsymbol\xi, \boldsymbol\theta$. By the definition of $\boldsymbol{\Theta}$, for every pair of parameter $(\boldsymbol\xi, \boldsymbol\theta) \in \boldsymbol{\Theta}$, it must be compatible with the dataset $\bar{\v}$, i.e., $P(\bar{\v} \mid \boldsymbol\xi, \boldsymbol\theta) > 0$. Similarly, given that the prior has positive support in $\boldsymbol\Theta$, $P(\boldsymbol\xi, \boldsymbol\theta \mid \bar{\v}) > 0$.

Note that parameters $(\boldsymbol\xi, \boldsymbol\theta) \in \boldsymbol{\Theta}$ fully determine the optimal upper bound $q_{\max}$ for $R_{P^*}(h)$. And so this implies that $P(R_{P^*}(h) < q_{\max} \mid \bar{\v}) > 0$, which by definition of a $100\%$ credible interval means that $R_{P^*}(h) < \hat q_{\max}$.

Next we show convergence of the posterior by way of convergence of the likelihood of the data given one SCM $\1M$. For increasing sample size the posterior will, with increasing probability, be low for any parameter configuration, i.e. for any $(\boldsymbol\xi, \boldsymbol\theta) \notin \boldsymbol{\Theta}$. By the definition of the optimal upper bound $q_{\max}$ given by the solution to the partial identification task,
\begin{align}
    P(\bar{\v} \mid R_{P^*}(h) < q_{\max} ) \rightarrow_p 1.
\end{align}
Therefore if the prior on parameters $(\boldsymbol \xi, \boldsymbol\theta)$ defining SCMs is non-zero for any $\1M$ compatible with the data and assumptions, also the posterior converges,
\begin{align}
    P(R_{P^*}(h) < q_{\max} \mid\bar{\v}) \rightarrow_p 1, 
\end{align}
which is the definition of the credible value $\hat q_{\max}$ as the 100$^{th}$ quantile of the posterior distribution, which coincides with $q_{\max}$ asymptotically. 
\end{proof}

\section{Additional Experiments and Details}
\label{sec:exp_detail_app}
This section includes experimental details not covered in the main body of this paper as well as additional examples to illustrate our methods, including the Bayesian inference approach. 

For the approximation of credible intervals and expectations required for the Bayesian inference approach, we draw 10,000 samples from posterior distributions $P(\cdot \mid \bar{\v})$ after discarding $2,000$ samples as burn-in. The results will be given a violin plots that encode the full posterior distribution of the query of interest, here the target error $R_{P^*}(h)$ of a classifier $h$. The worst-case target error can then be read as the upper end-point of the posterior distribution.

For completeness, we provide MCMC results for \Cref{ex:anti-causal-prediction,ex:bow}, analyzed in the main body of this paper, in \Cref{fig:anti_causal_mcmc,fig:bow_mcmc}, respectively. One could check that the upper bounds match with the analysis in the main body of this paper.

\begin{figure}[t]
    \centering
    \includegraphics[scale=0.28]{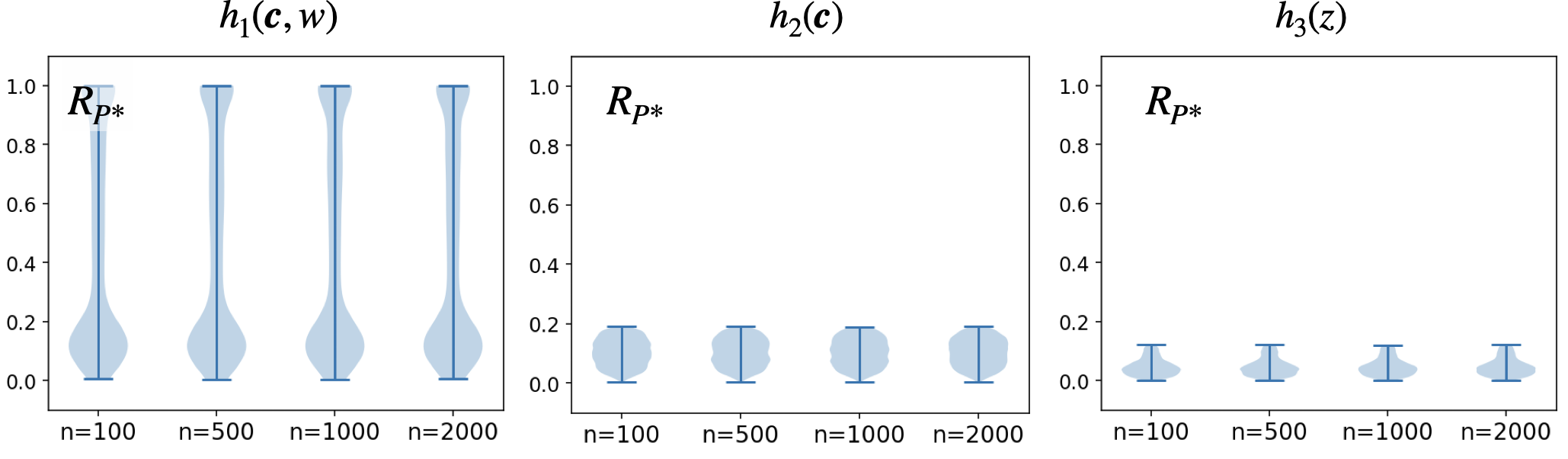}
    \caption{Violin plots that describe MCMC samples of $R_{P^*}(h)$ for \Cref{ex:anti-causal-prediction}. The upper end-point is an estimate of $\max R_{P^*}(h)$. $n$ stands for the number of source domain samples used as a conditioning set in the posterior evaluation.}
    \label{fig:anti_causal_mcmc}
\end{figure}

\begin{figure}[t]
    \centering
    \includegraphics[scale=0.25]{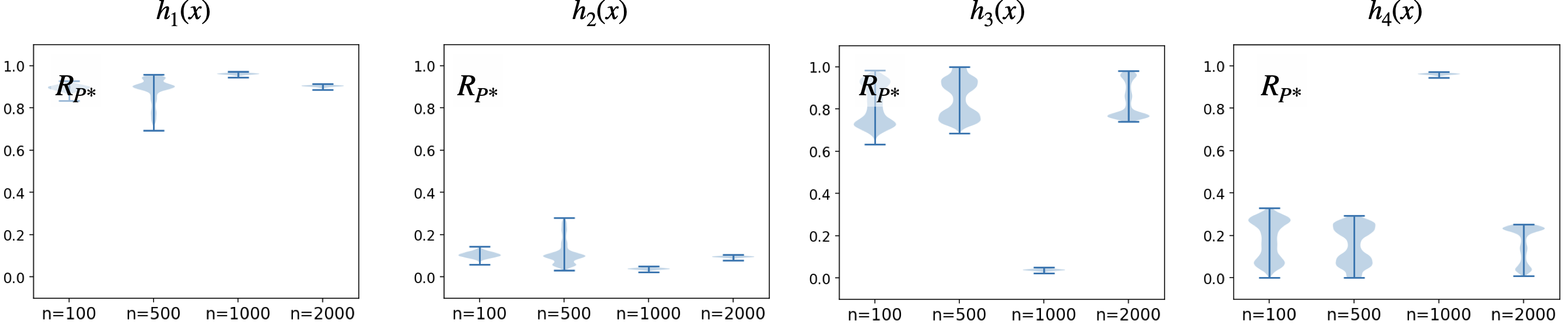}
    \caption{Violin plots that describe MCMC samples of $R_{P^*}(h)$ for \Cref{ex:bow}. The upper end-point is an estimate of $\max R_{P^*}(h)$. $n$ stands for the number of source domain samples used as a conditioning set in the posterior evaluation. Recall that $h_1(x):=x, h_2(x):= \neg x, h_3(x):=0, h_4(x):=1$.}
    \label{fig:bow_mcmc}
\end{figure}

\subsection{Additional Examples}
This section adds additional synthetic examples to illustrate our methods. 

\begin{example} \label{ex:lung_cancer}
This experiment is inspired by the debate around the relationship between smoking and lung cancer in the 1950's \citep{us2014health}, and the corresponding selection diagram is shown in Figure \ref{fig:lung_cancer_SD}. We consider $\3M: \{\M^1, \M^*\}$ that describe the effect of an individual's smoking status $S$ on lung cancer $C$, including related measured variables such presence of tar in the lungs $T$, and demographic factors $\W$. The data generating mechanism is given by
\begin{align*}
    \M^i = \begin{cases}
        \V &= \{\W,S,T,C\}\\
        \U &= \{\U_W, U_S, U_T, U_{SC}\}\\
        \1F &= \begin{cases}
            f_{W}(u_W) &= u_W, \text{ for } W\in\W, U_W \in \U_W\\
            f_S(\w,u_S, u_{SC}) &= \begin{cases}
                1, &\text{ if } \sum_i \frac{w_i}{d} + u_{SC} + 1.5*u_s - 1 > 0 \text{ and } i=1\\
                1, &\text{ if } \sum_i \frac{w_i}{d} + u_{SC} + u_S - 2 > 0  \text{ and } i=*\\
                0, &\text{ otherwise}
            \end{cases}\\
            f_T(s,u_T) &= \begin{cases}
                1, &\text{ if } s - 0.5u_T - 1 > 0\\
                0, &\text{ otherwise}
            \end{cases}\\
            f_C(\w,u_C, u_{SC}) &= \begin{cases}
                1, &\text{ if } t - \sum_i \frac{w_i}{d} + u_{SC} - 1 > 0\\
                0, &\text{ otherwise}
            \end{cases}\\
        \end{cases}\\
        P(\U) &\text{defined such that } U_S, U_T, U_{SC} \sim Bern(0.5), U_W \sim N(0,1), W\in\W,
    \end{cases}
\end{align*}
Note that $\del= \{S\}$ as the mechanism for $S$ differs across domains while the mechanisms for all other variables are assumed invariant. The quantity to upper-bound is the target mean squared error: $R_{P^*}(h) := \mathbb{E}_{P^*}[(C - h)^2]$ of cancer prediction algorithms $h\in\{h_1(w, s, t)=\3E_{P^1}[C \mid w, s, t], h_2(w, t)=\3E_{P^1}[C \mid w, t], h_3(w)=\3E_{P^1}[C \mid w]\}$ given data from $P^1$ and $\G^\del$.

\begin{figure}[t]
\centering
\begin{subfigure}[t]{0.32\linewidth}\centering%(d)
  \includegraphics[scale=0.3]{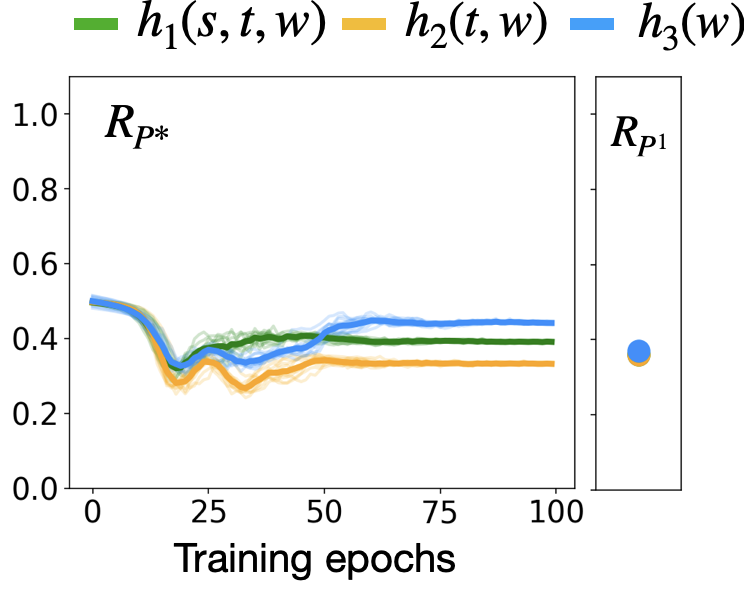}
    \caption{\Cref{ex:lung_cancer}}
    \label{fig:sim_extra:1}
\end{subfigure} \qquad
\begin{subfigure}[t]{0.32\linewidth}\centering%(d)
   \includegraphics[scale=0.3]{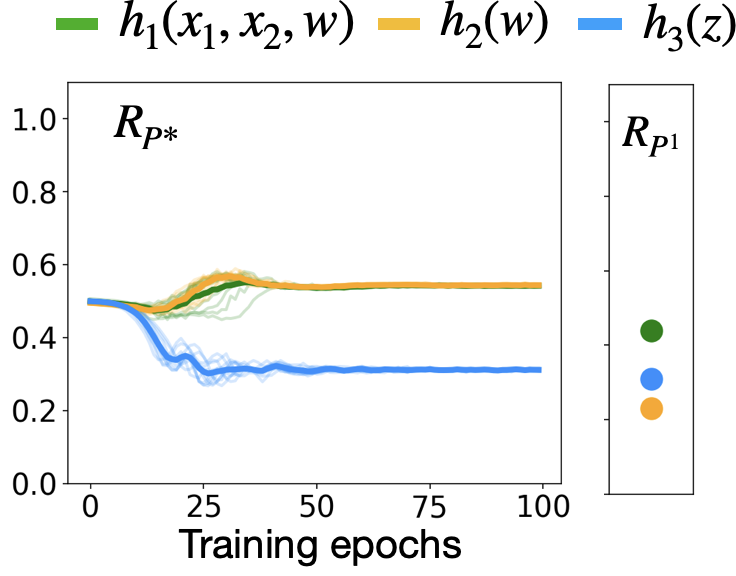}
    \caption{\Cref{ex:alzheimer}}
    \label{fig:sim_extra:2}
\end{subfigure}
  \caption{NCM experimental results on \Cref{ex:lung_cancer,ex:alzheimer}.}
  \label{fig:sim_extra}
\end{figure}

\begin{figure}[t]
    \centering
    \includegraphics[scale=0.3]{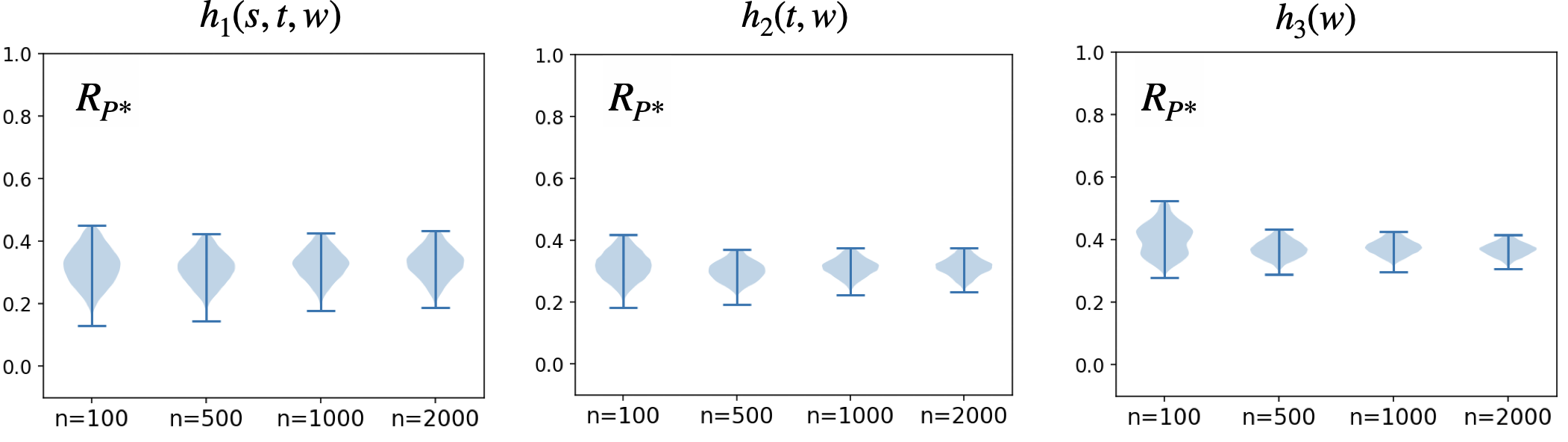}
    \caption{Violin plots that describe MCMC samples of $R_{P^*}(h)$ for \Cref{ex:lung_cancer}. The upper end-point is an estimate of $\max R_{P^*}(h)$. $n$ stands for the number of source domain samples used as a conditioning set in the posterior evaluation.}
    \label{fig:lung_cancer}
\end{figure}

The results for the NCM approach are given in \Cref{fig:sim_extra:1}. We observe that despite the discrepancy in $S$, all methods maintain an error of close to 0.4.

The results for the Gibbs sampling approach are given in \Cref{fig:lung_cancer}. The violin plots encode the full posterior distribution of the query of interest, here the target error $R_{P^*}(h)$ of a classifier $h$. The worst-case target error can then be read as the upper end-point of the posterior distribution. We observe that the upper-bounds from the NCM and MCMC approach approximately match.  \hfill $\largesquare$
\end{example}

\begin{example} \label{ex:alzheimer}
This experiment considers the design of prediction rules for the development of Alzheimer's disease in a target hospital $\M^*$ in which no data could be recorded, and the corresponding selection diagram is shown in Figure \ref{fig:alzheimer_SD}. The observed variables are given by $\V = \{X_1, X_2, W, Y, Z\}$. Among those, $X_1$ and $X_2$ are treatments for hypertension and clinical depression, respectively, both known to influence Alzheimer’s disease $Y$, and blood pressure $W$. $Z$ is a symptom of Alzheimer's. Their biological mechanisms are somewhat understood, e.g. the effect of hypertension is mediated by blood pressure $W$, although several unobserved factors, such as physical activity levels and diet patterns, are expected to simultaneously affect both conditions. We assume that hypertension and clinical depression are not known to affect each other, although it’s common for patients with clinical depression to simultaneously be at risk of hypertension (expressed through the presence of an unobserved common cause). More specifically, investigators have access to data from a related study conducted in domain $\M^1$. SCMs $\3M:\{\M^1,\M^*\}$ are given as follows,
\begin{align*}
    \M^i = \begin{cases}
        \V &= \{X_1, X_2, W, Y, Z\}\\
        \U &= \{U_{WY}, U_{X_2}, U_{W}, U_{X_1X_2}, U_Z\}\\
        \1F &= \begin{cases}
            f_{X_1}(U_{X_1X_2}) &= \begin{cases}
                1, &\text{ if } U_{X_1X_2} > 0\\
                0, &\text{ otherwise}
            \end{cases}\\
            f_{X_2}(U_{X_1X_2},U_{X_2}) &= \begin{cases}
                1, &\text{ if } U_{X_1X_2} + U_{X_2}> 0\\
                0, &\text{ otherwise}
            \end{cases}\\
            f_W(X_1,U_{WY},U_W) &= \begin{cases}
                1, &\text{ if } X_1 + U_{WY} +1.5 U_W - 1 > 0 \text{ and } i=*\\
                1, &\text{ if } X_1 + U_{WY} - U_W + 1 > 0 \text{ and } i=1\\
                0, &\text{ otherwise}
            \end{cases}\\
            f_Y(W,X_1,U_{WY}) &= \begin{cases}
                1, &\text{ if } W - U_{WY} + 0.1X_1 - 1 > 0\\
                0, &\text{ otherwise}
            \end{cases}\\
            f_Z(Y,U_{Z}) &= \begin{cases}
                1, &\text{ if }Y + U_Z > 0.5\\
                0, &\text{ otherwise}
            \end{cases}
        \end{cases}\\
        P(\U) &\text{defined such that } U_{WY}, U_{X_2}, U_{W}, U_{X_1X_2}, U_Z \sim \1N(0,1),
    \end{cases}
\end{align*}
Note that $\del= \{W\}$ as the mechanism for $W$ differs across domains while the mechanisms for all other variables are assumed invariant. In this example, we aim at upper-bounding the target mean squared error: $R_{P^*}(h) := \mathbb{E}_{P^*}[(C - h)^2]$ of cancer prediction algorithms $h\in\{h_1(x_1,x_2, w)=\3E_{P^1}[Y \mid x_1,x_2, w], h_2(w)=\3E_{P^1}[Y \mid w], h_3(z, t)=\3E_{P^1}[Y \mid z]\}$ given data from $P^1$ and $\G^{\Delta_{*1}}$. 

\begin{figure}[t]
    \centering
    \includegraphics[scale=0.3]{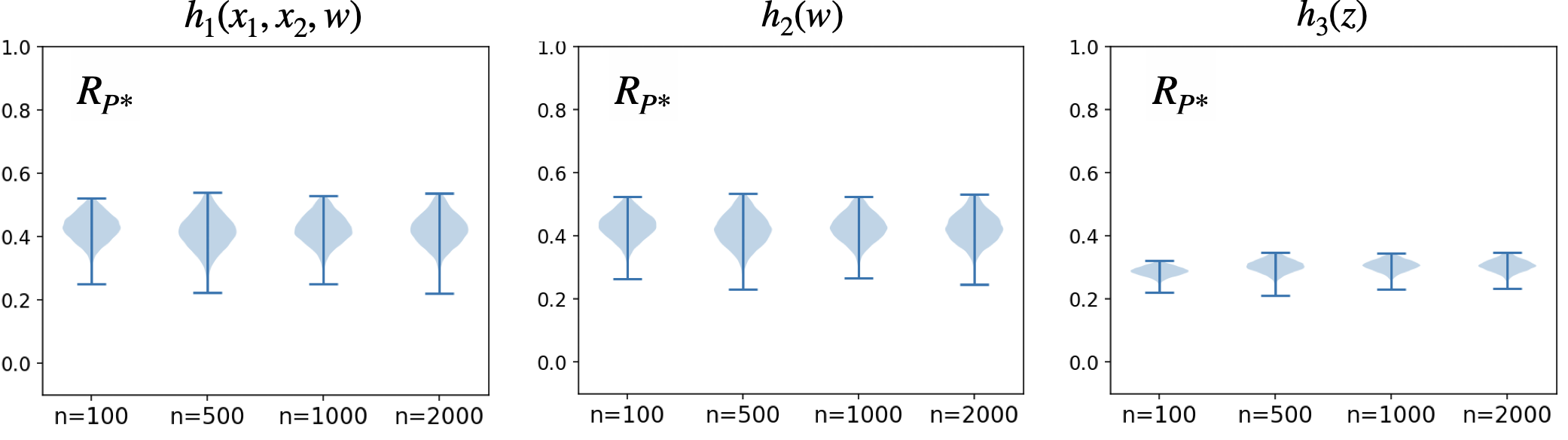}
    \caption{Violin plots with MCMC samples for \Cref{ex:alzheimer}. $n$ stands for the number of source domain samples used as a conditioning set in the posterior evaluation.}
    \label{fig:alzheimers}
\end{figure}

The results for the NCM approach are given in \Cref{fig:sim_extra:2}. We observe that the discrepancy in $W$ leads to poor performance for all methods (chance level) except for $h_3$ that outperforms.

The results for the Gibbs sampling approach are given in \Cref{fig:alzheimers}. The violin plots encode the full posterior distribution of the query of interest, here the target error $R_{P^*}(h)$ of a classifier $h$. The worst-case target error can then be read as the upper end-point of the posterior distribution. We observe that the upper-bounds from the NCM and MCMC approach approximately match. \hfill $\largesquare$
\end{example}

% \begin{example} \label{ex:alzheimer2}
% We also consider a variation of \Cref{ex:alzheimer1} that aims at upper-bounding the target mean squared error: $R_{P^*}(h) := \mathbb{E}_{P^*}[(C - h)^2]$ of cancer prediction algorithms $h\in\{h_1(x_1,x_2, w)=\3E_{P^2}[Y \mid x_1,x_2, w], h_2(w)=\3E_{P^2}[Y \mid w], h_3(z, t)=\3E_{P^2}[Y \mid z]\}$ given data from $P^2$ and $\G^{\Delta_{*2}}$. \hfill $\largesquare$
% \end{example}

\begin{figure}[t]
\centering
\begin{subfigure}[t]{0.40\linewidth}\centering%(d)
  \includegraphics[scale=0.6]{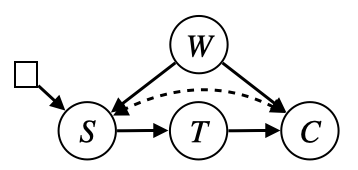}
    \caption{\Cref{ex:lung_cancer}}
    \label{fig:lung_cancer_SD}
\end{subfigure} \qquad
\begin{subfigure}[t]{0.40\linewidth}\centering%(d)
   \includegraphics[scale=0.6]{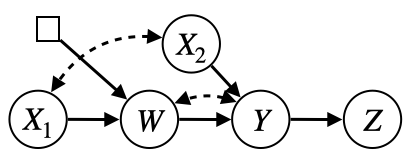}
    \caption{\Cref{ex:alzheimer}}
    \label{fig:alzheimer_SD}
\end{subfigure}
  \caption{Selection diagrams for additional experiments}
  \label{fig:additional_experiments_SD}
\end{figure}

\subsection{More on Colored MNIST} \label{sec:app_cmnist} Consider handwritten grayscale digits $\W \in [0,1]^{28\times 28}$ that are annotated with $Y \in \{0,1,\dots,9\}$ and colored with $C\in\{\text{red},\text{green}\}$, resulting in colored images $\Z\in \mathbb [0,1]^{28\times 28\times 3}$. What follows describes the underlying SCM for domain $i \in \{1,2,*\}$:
\begin{align*}
    &\M^{i} : \begin{cases}
        \W,U_Y,\U_C,\U_\Z \sim P(\w) \cdot P(u_Y) \cdot P(\u_C) \cdot P(\u_\Z)\\
    \1F^i: \begin{cases}
    Y \gets f_Y(\W, U_Y) \quad \text{(The annotation mechanism)}\\
    C \gets f^i_C(Y, \U_C) \quad \text{(The choice of color based on digit)}\\
    \Z \gets \W \cdot C^\top + \U_\Z \quad \text{(Coloring image $\W$ with color $C$)}
    \end{cases}
    \end{cases}
\end{align*}

In words, the grayscale image of handwritten digits $\W$ is generated according to a distribution $P(\w)$ shared across all domains. The label $Y$ is the annotation of the image with the corresponding digit through mechanism $f_Y$ shared across all domains; the variable $U_Y$ accounts for the possible error in annotation. Next, the color is chosen based on the digit $Y$ following some stochastic policy $f^i_C(\cdot,U_C)$ that changes across the source and target domains. Finally, the colored image $\Z$ is produced by product of the grayscale image $\W$ and the color $C$; exogenous variable $\U_\Z$ accounts for possible noise in coloring.

We have a classifier $h:\Omega_\Z \to \Omega_Y$ at hand, and the task is to assess its generalizability. Consider the following derivation:
\begin{align}
    P^*(\z,y) 
    &= \sum_{\c} P^*(y,\c,\z)\\
    &= \sum_{\c} P^*(y) \cdot P^*(\c\mid y) \cdot P^*(\z\mid \c,y)\\
    &= P^{*}(y) \sum_{\c}  P^*(\c\mid y) \cdot P^{1,2}(\z\mid c,y) && S_{1},S_{2} \indep_d \Z \mid \C,Y\\
    &= P^{1,2}(y) \sum_{\c}  P^*(c\mid y) \cdot P^{1,2}(\z\mid \c,y) && S_{1},S_{2} \indep_d Y
\end{align}

%The goal is to predict a binary label assigned to each image based on the whether the digit larger or smaller than 5. MNIST images are grayscale, and here colored either red or green in a way that correlates strongly with the class label. By construction, the label is more strongly correlated with the color than with the digit, so any algorithm purely minimizing training error will tend to exploit the color. We construct three datasets from random disjoint samples of MNIST with varying correlation strength between the color and image label: set to $90\%$ in source domain $\M^1$, $80\%$ in source domain $\M^2$, and $90\%$ in the target domain $\M^*$, as in \cite{arjovsky2019invariant}. 

%Let  denote the MNIST grayscale image, $C\in \{0,1\}$ denote color (red/green), $Y\in\{0,1\}$ denote the binary label, and  denote the colored MNIST image. The data generating mechanism for $\M^i, i=1,2,*$ is given as follows:

%where $p^1=0.9, p^2=0.8, p^*=0.1$ denote the correlation between the color of the image and its label. Accordingly, the discrepancies across domains occur in the mechanism for $C$ only.

Motivated by the above derivation, we use the source data drawn from $P^1,P^2$ and train the generative models $P(y;\eta_Y), P(\z\mid y,c;\eta_\Z)$ to approximate sampling from the distributions $P^{1,2}(y), P^{1,2}(\z\mid y,c)$, respectively. The former generates a random digit $Y$ according to the distribution of label in the source domain, and the latter generates a colored picture $\Z$ by taking color $C$ and digit $Y$ as the input. Also, we use an NCM with parameter $\theta^*_C$ to model the c-factor $P^*(c\mid do(y)) = P^*(c\mid y)$. We can now rewrite the risk as follows:
\begin{align}
    R_{P^*}(h) 
    &= \sum_{\z,y} |y - h(\z)|\cdot P^{1,2}(y) \sum_{c}  P^*(c\mid y) \cdot P^{1,2}(\z\mid c,y)\\
    &= \3E_{Y \sim P(y;\eta_Y)} \big[\sum_{c}  P(c\mid y;\theta^*_{C}) \cdot \3E_{\Z \sim P(\z\mid c,y;\eta_\Z)}[|Y - h(\Z)|] \big].
\end{align}
By maximizing the above w.r.t. the free parameter $\theta^*_C$, we achieve the worst-case risk of the classifier.

\subsection{Reproducibility} \label{sec:reproducability}
For the synthetic experiments, we used feed-forward neural networks with $7$ layers and $128\times128$ neurons in each layer. The activation for all layers is ReLu, but for the last layer which is a sigmoid since $f_{\theta_V}$ outputs the probability of $V = 1$. For evaluation, at each epoch, we used 1000 samples from the joint distribution. The data generative process for all experiments is provided in the corresponding example. We used Adam optimizer for training the Neural networks. In CMNIST example, we used a standard implementation of a conditional GAN  \cite{mirza2014conditional} trained over 200 epochs with a batch-size of 64. The learning rate of Adam was set to $0.0002$. The architecture of the generator is given by a 5 layer feed-forward neural network with Batch normalization and Leaky-ReLu activations.

\section{Extended Discussion on Algorithms}

In this section, we elaborate more on the algorithms presented in the paper. 

\subsection{Examples of Neural-TR (Algorithm \ref{alg:partialTR})} \label{app:alg-demonstration}

In the next examples, we follow Algorithm \ref{alg:partialTR} to compute the worst-case risk of a classifier.

\begin{figure}
    \centering
    \includegraphics[width = 5cm]{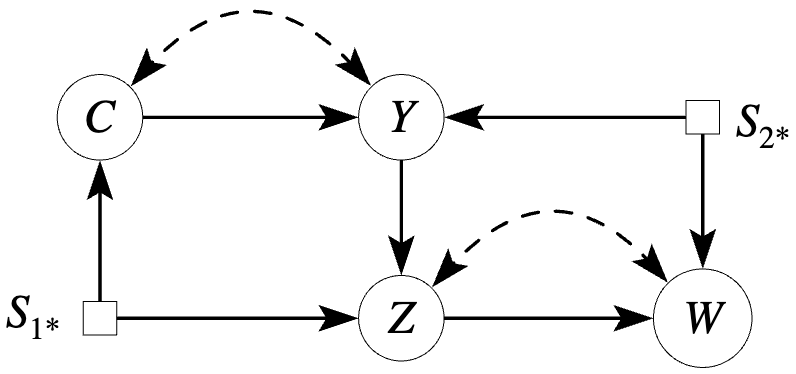}
    \caption{Selection diagram of Example \ref{ex:simplify}}
    \label{fig:simplify}
\end{figure}

\begin{example} [Simplify]  \label{ex:simplify}
\normalfont
Consider a system of SCMs $\M^1,\M^2\M^*$ over $\X = \{C,Z,W\}$ and $Y$ that induces the selection diagram shown in Figure \ref{fig:simplify}. Suppose we would like to assess the risk of a classifier $h(z)$. Following Theorem \ref{thm:partial-TR-neural}, the naive approach requires us to parameterize three NCMs $\theta^1,\theta^2,\theta^*$ over the variables $\X,Y$, and then proceed with the maximization of the target quantity $R_{P^{\N^*}}(h) = \3E_{P^*}[\I\{Y = h(Z)\}]$. Notably, the latter depends only on $P^*(y,z)$. We can rewrite the risk of $h$ as follows:
\begin{align}
    \3E_{Y,Z \sim P^*(y,z)}[ \I\{Y \neq h(Z)\}]
    &=\sum_{z,y} \I\{y \neq h(z)\} \cdot P^*(y,z)\\
    &= \sum_{z,y} \I\{y \neq h(z)\} \cdot P^*(y) \cdot P^*(z \mid y) && \text{(factorization)}\\
    &= \sum_{z,y} \I\{y \neq h(z)\} \cdot P^*(y) \cdot P^2(z \mid y) && (Y \mid Z \indep_d S_{2})\\
    &= \sum_{z,y} \I\{y \neq h(z)\} \cdot P^2(z \mid y) \cdot \sum_c P^*(y,c) \label{eq:new-objective}
\end{align}
% This new expression for the objective function depends only on the unknown $P^*(y,c)$, and we argue that to partially transport it we only need to parameterize the SCMs over $Y,C$. This system in fact coincides with the case we discussed in Example \ref{ex:bow}. Therefore, partial transportation problem can be restated as follows: 
This new expression for the objective function depends only on the unknown $P^*(y,c)$, a so-called ancestral c-factor, that can generally be expressed as $P^*(\a \mid do(\pa_{\A})), \A=\{C,Y\}$. In the following, we argue that to partially transport the risk we only need to parameterize the SCMs over ancestral c-factors that are not transportable. Specifically, the partial transportation problem can be restated as follows:
\begin{align}
\max_{\theta^1, \theta^2, \theta^*} & \quad \3E_{\U_{CY}}[\sum_{y,c,z} P(y,c \mid \U_{CY};\theta^*)\cdot P(z \mid y;\eta^2)\cdot \I\{y\neq h(z)\}] \\
    &\quad + \Lambda \cdot \big(\sum_{y,c \in D^1} \3E_{\U_{CY}}[\log P(y,c\mid U_{CY};\theta^1)] + \sum_{y,c \in D^2} \3E_{\U_{CY}}[\log P(y,c\mid U_{CY};\theta^2)] \big) \nonumber\\
    \text{ s.t. } 
    & \theta^*[C] = \theta^2[C], \quad \theta^*[Y] = \theta^1[Y]. \nonumber
\end{align}
In the above, $D^i \sim P^i(c,y,z,w)$ denotes the source data, and $P(z\mid y;\eta^2)$ is a probabilistic model of $P^2(z\mid y)$ learned using the data $D^2$. \hfill $\largesquare$
\end{example}

\begin{example} [Partial-TR illustrated] \label{ex:complex}
\normalfont
Consider a system of SCMs $\M^1,\M^2,\M^*$ over the binary variables $\X = \{X_1,X_2,\dots,X_9\}$ and $Y$ that induces the selection diagram shown in Figure \ref{fig:complex}. Consider the classifier $h(x_1,x_4) = x_1 \vee x_4$. The objective is partial transportation of the risk of $h$, expressed as follows:
\begin{align}
    R_{P^*}(h) 
    &= P^*(Y\neq h(X_1,X_4))\\
    &= \3E_{X_1,X_4,Y\sim P^*}[\I\{Y\neq X_1 \vee X_4\}].
\end{align}
The latter indicates that $\psi(X_1,X_4,Y) := \I\{Y\neq X_1 \vee X_4\}$ must be passed to the algorithm. The objective function is then expressed as:
\begin{equation} \label{eq:complex-objective}
    R_{P^*}(h) = \3E_{P^*}[\psi(X_1,X_4,Y)] = \sum_{x_1,x_4,y} \I\{Y\neq X_1 \vee X_4\} \cdot P^*(x_1,x_4,y).
\end{equation}
Next, we focus on transporting $P^*(x_1,x_4,y)$. First, we compute the ancestral set using the selection diagram;
\begin{align}
    \A =  An(X_1,X_4,Y) = \{X_1,X_2,X_4,Y,X_5,X_6,X_7,X_8,X_9\}.
\end{align}
and we decompose this set into c-components:
\begin{align}
    \A_1 =  \{X_1,X_2\}, \quad \A_2 = \{X_4,Y,X_5\}, \quad \A_3 = \{X_6,X_7,X_8,X_9\}.
\end{align}
Next, we form the expression below:
\begin{align} \label{eq:complex-decomp}
    P^*(x_1,x_4,y) := \sum_{x_2,x_5,\dots,x_9} P^*(x_1,x_2\mid do(y)) \cdot P^*(y,x_4,x_5\mid do(x_6)) \cdot P^*(x_6,\dots,x_9).
\end{align}
Notice, 
\begin{align} 
    P^*(\a_2\mid do(x_6)) \overset{\text{rule 2 do-calc.}}{=} P^*(\a_2\mid x_6) \overset{S_{1} \indep_d Y,X_4,X_5 \mid X_6}{=} P^1(\a_2\mid x_6),
\end{align}
\begin{align}
        P^*(\a_3) \overset{S_{2} \indep_d \A_3 \mid X_6}{=} P^2(\a_3).
\end{align}
Thus, we use the source data $D^1,D^2$ to learn the generative model $P(\a_2\mid x_6; \eta^1_{\A_2}), P(\a_3; \eta^2_{\A_3})$ to approximate sampling from $P^1(\a_2\mid x_6),P^2(\a_3)$ respectively. We plug these models as constants into Eq. \ref{eq:complex-objective}.

Since $S_{*1},S_{*2}$ are pointing to the variables $X_2,X_1$, respectively, the first term $P^*(x_1,x_2)\mid do(y))$ in Eq. \ref{eq:complex-decomp} can not be directly transported from neither of the source domains. Thus, we need to parameterize this c-factor using NCMs across all domains. We require the following properties:
\begin{enumerate}[left=0pt]
    \item \textbf{Parameter sharing}: Since $X_4,Y,X_5,X_7,X_8,X_9$ are not pointed by $S_{1}$, we share their mechanisms across all domains. Also, since $X_2,X_6$ are not pointed by $S_{2}$, we set $\theta^*_{\{X_2,X_6\}} = \theta^2_{\{X_2,X_6\}}$. These constraints are stored in $\mathbb{C}_{\mathrm{expert}}$ in the Algorithm.
    \item \textbf{Source data}: To enforce $\theta^1,\theta^2$ to be compatible with the source data $D^1,D^2$, we compute the likelihood of the data w.r.t. the parameters, as follows:
    \begin{align}
\mathcal{L}_{\mathrm{likelihood}} := & \sum_{i=1}^2 \big( \sum_{\langle x_1,x_2, y \rangle \in D^i} \3E_{\U_{X_1,X_2}}[\log P(x_1,x_2\mid y, \U_{X_1,X_2};\theta^i_{X_2})] 
    \end{align}
\end{enumerate}
We plug $P(x_1,x_2\mid do(y);\theta^*_{\A_1})$ into Eq. \ref{eq:complex-decomp}. Finally, we use stochastic gradient ascent to maximize the objective function in Eq. \ref{eq:complex-objective} regularized by an additive term $\Lambda \cdot \mathcal{L}_{\mathrm{likelihood}}$ that encourages the likelihood of the data w.r.t. the parameters of the source NCMs. \hfill $\largesquare$
\end{example}

\begin{figure}[t]
    \centering
    \includegraphics[width = 9cm]{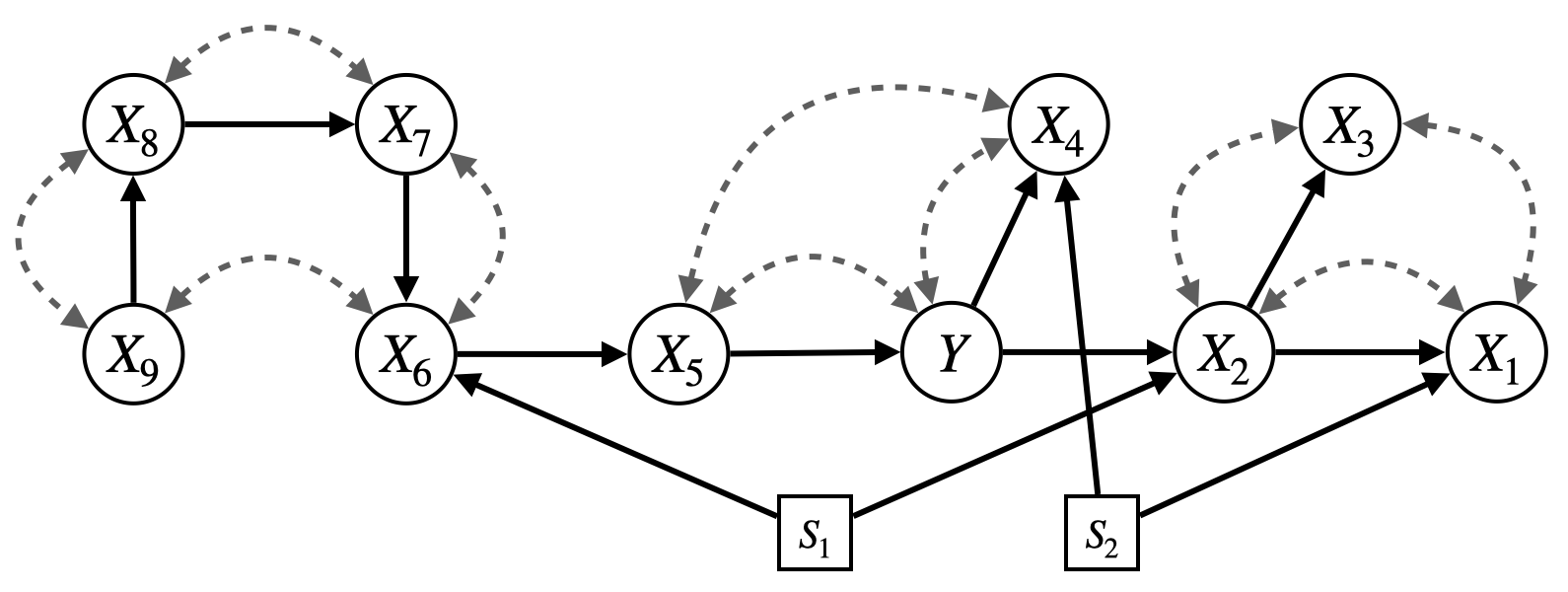}
    \caption{Selection diagram of Example \ref{ex:complex}}
    \label{fig:complex}
\end{figure}

\subsection{Illustration of CRO (Algorithm \ref{alg:CRO})} \label{sec:app_CRO}

First, we initialize with a random classifier. One may also warm start with a reasonable guess such as empirical risk minimizer defined as,
\begin{equation}
    h_{\mathrm{ERM}} \in \argmin_{h:\Omega_\X \to \Omega_\Y} \sum_{i=1}^K \sum_{\x,y\in D^i} \mathcal{L}(y,h(\x)).
\end{equation}

Throughout the runtime of the algorithm we accumulate instances of distributions that we obtain via Neural-TR (\Cref{alg:partialTR}). At each step, these distribution are aimed to maximize the risk of the classifier at hand. In this sense, Neural-TR can be viewed as an adversary, and the CRO can be viewed as a game between two players:
\begin{enumerate}
    \item \textbf{Neural-TR.} Searches over the spaces of plausible target domains that are characterized by the source data and the domain relatedness encoded in the selection diagram, to find a distribution that is hard to generalize to using the classifier at hand.
    \item \textbf{group DRO \cite{sagawa2019distributionally}} Updates the classifier at hand by minimizing the maximum risk over the distributions produced by Neural-TR so far, that is,
\begin{equation}
    \min_{h:\Omega_\X \to \Omega_\Y} \max_{D \in \3D^*} \frac{1}{|D|}\cdot \sum_{\x,y\in D} \mathcal{L}(y,h(\x)).
\end{equation}
\end{enumerate}
For more information about group DRO, see Appendix \ref{app:DRO}.

The equilibrium of the above happens if the worst-case risk obtained by Neural-TR almost coincides with the risk obtained by group DRO, i.e.,
\begin{equation}
    R_{P(\x,y;\hat{\theta})}(h) -  \max_{D \in \3D^*} \frac{1}{|D|}\cdot \sum_{\x,y\in D} \mathcal{L}(y,h(\x)) < \delta.
\end{equation}
Once this is achieved, we stop the search and return the classifier at hand. When the game is not at equilibrium, we would have a difference larger than $\delta$, meaning that the new target domain $\hat{\theta}^*$ has enough novelty to forces the classifier at hand to perform at least $\delta$ worse than what it achieves over the existing distributions in $\mathbb{D}^*$. Therefore, we draw samples $D^* \sim P(\x,y;\hat{\theta})$ and add them to our collection $\3D^*$. As shown in Theorem \ref{thm:CRO} this game reaches the equilibrium in finitely many steps, and the classifier that we return has the best worst-case risk w.r.t. the selection diagram $\G^{\del}$ and the source distributions $\3P$. 

Figure \ref{fig:CRO_illustrated} illustrates the process of convergence of CRO: The rectangle represents the space of all distributions over$\X,Y$, and the circle inside it represents the subset of that are plausible target distributions, as characterized by the source distributions and selection diagram. Iteration 1: At first we start with some classifier that may or may not perform well for all distributions in the plausible subset; the darker spots indicate distributions that yield higher risk for the classifier at hand. Neural-TR uses gradient ascend steps to find an SCM that entails a distribution which yields the highest risk for the classifier at hand, i.e., the darkest spot within the plausible subset (likely at the boundary of it), shown by the star blue in Fig. (a). We register this distribution by taking samples from it and adding them to the collection $\mathbb{D}^*$. Iteration 2: We update the classifier at hand to have group robustness to the collection of distributions $\mathbb{D}^*$; in this case, only risk minimizer, since there is only one distribution in the collection. Now the distributions that are \textit{close} to the registered distribution would entail small risk, thus, the region around the first star is now brighter. Once again, using Neural-TR we find a distribution that yields high risk for the classifier at hand. Iteration 3: We update the classifier, this time to minimize the risk on both registered distributions indicated with yellow starts using group DRO. Now the risk is smaller in most parts of the plausible set, though Neural-TR still finds another distribution at the boundary with high risk. Equilibrium: We update the classifier using group DRO over the three registered distributions. This time, the registered distributions correctly represent the plausible set, meaning that the maximum risk inside the plausible set is not significantly larger than what is achieved at the registered points through group DRO.

\begin{figure*}[t]
\centering
\begin{subfigure}[t]{0.245\linewidth}\centering%(d)
  \includegraphics[width = \linewidth]{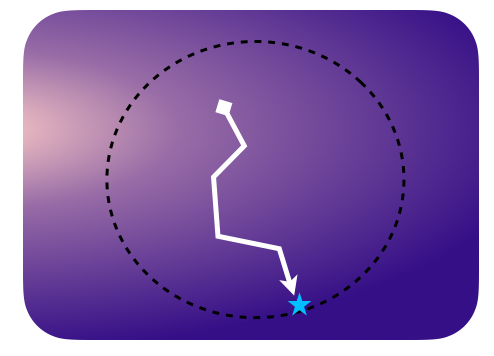}
\caption{Iteration 1}
\end{subfigure}\hfill
\begin{subfigure}[t]{0.245\linewidth}\centering%(d)
  \includegraphics[width = \linewidth]{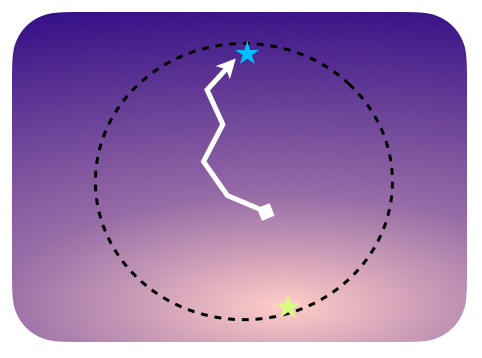}
\caption{Iteration 2}
\end{subfigure} \hfill
\begin{subfigure}[t]{0.245\linewidth}\centering%(d)
  \includegraphics[width = \linewidth]{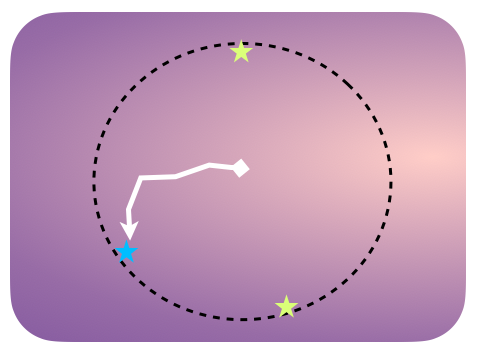}
\caption{Iteration 3}
\end{subfigure} \hfill
\begin{subfigure}[t]{0.245\linewidth}\centering%(d)
  \includegraphics[width = \linewidth]{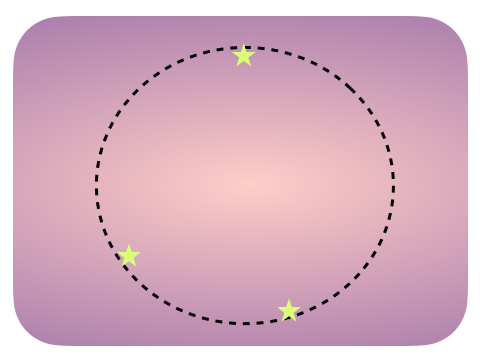}
\caption{Equilibrium}
\end{subfigure}
\hfill
\null
  \caption{Conceptual illustration of CRO.} 
  \label{fig:CRO_illustrated}
\end{figure*}

%\textbf{Implementation of the CRO, and the experiments for the existing examples will be added in the final version.}

It is important to note that although we employ group DRO as a subroutine in our CRO algorithm, we do not use the source distributions directly. Instead, we use group DRO on the distributions obtained from Neural-TR. Note that under the assumptions encoded in the selection diagram, the target distribution distribution may be geometrically unrelated to the source distributions; the reason is that mechanistic relatedness of the target domain to the source domains (as indicated by the graph) do not translate directly to closeness of the entailed distributions under known distributional distance measures.

\section{Extended Related Work} \label{sec:app_inv_rob}

In this section, we discuss some learning schemes based on invariant and robust learning that are proposed for domain generalization, including IRM and group DRO that are discussed in the experiments.

\subsection{Invariant Learning for Domain Generalization} 

Several common invariance criteria are extensively studied in the literature and proposed for the domain generalization task. A prominent idea is label conditional distribution invariance that seeks a representation $\phi$ such that $P^i(Y\mid\phi(\X))$ is equal across the source domains \cite{rojas2018invariant,albuquerque2019adversarial,ganin2016domain,magliacane2018domain}. These notions do not explicitly rely on an underlying structural causal model (SCM), although invariances are often justified by an underlying causal model \cite{peters2016causal,arjovsky2019invariant,wald2021on,rothenhausler2021anchor,shen2023causality}. Jalaldoust \& Bareinboim \cite{Jalaldoust_Bareinboim_2024} studied the implicit assumptions that license generalizability of representations that satisfy the probabilistic relation $P^i(Y\mid\phi(\X))$. Although searching for such representation is practically challenging and in cases theoretically intractable. Thus, one may resort to achieving an approximate notion that serve as a proxy to invariance of $P^i(Y\mid\phi(\X))$; A well-known instance of such effort is invariant risk minimization \cite{arjovsky2019invariant}, discussed below.

The paper \cite{arjovsky2019invariant} studies a constrained optimization problem called invariant risk minimization (IRM) in the context of domain generalization. In the notation of our paper, the IRM problem can be written as follows:
\begin{align}
    \min_{\phi,h} \quad & \sum_{i=1}^K \3E_{P^i}[Y \neq h\circ\phi (\X)] \nonumber\\
    \text{ s.t. } \quad & h \in \argmin_{\tilde{h}:\Omega_{\R} \to \{0,1\}} \3E_{P^i}[Y \neq \tilde{h}\circ\phi(\X)] \quad \forall i, \label{eq:IRM-optimization}
\end{align}
Where $\phi:\Omega_\X \to \Omega_\R$ is a representation, and $h : \Omega_\R \to \{0,1\}$ is a classifier defined based on it. In words, a pair $h,\phi$ satisfies the invariant risk minimization property if $h \circ \phi$ attains the minimum risk across all classifiers defined based on $\phi$, across all source domains. The search procedure suggests choosing the classifier that satisfies the mentioned constraint, and achieves minimum risk on the pooled source data. The constrained optimization program above is highly non-convex and hard to solve in practice. To approximate the solution, the
paper considers the Langrangian form below: 

\begin{align}
    h_{\mathrm{IRM}} \in \min_{h_\theta:\Omega_\X \to \{0,1\}} \quad & \sum_{i=1}^K \3E_{P^i}[Y \neq h_\theta(\X)]  + \lambda \cdot \| \nabla_{\theta} \3E_{P^i}[Y \neq h_\theta (\X)]\|^2.
\end{align}

In this program, $\theta$ parametrizes the classifier $h$, and the penalty term $\lambda$ accounts for how restrictive one wants to enforce the IRM constraint. In the extreme $\lambda = 0$ the objective equates to the vanilla ERM with all data pooled; on the other extreme, for $\lambda \to \infty$ ascertains that the solution is guaranteed to satisfy the IRM constraint. 

Consider a representation that satisfies the original IRM constraint in Eq. \ref{eq:IRM-optimization}. The optimal classifier defined over this representation is the bayes classifier, that uses $\frac{1}{2}$ level set of $P^i(Y=1\mid \phi(\X))$ as the decision boundary. This means that satisfying the IRM constraint implies a match between $\frac{1}{2}$ level-sets of $P^i(Y=1 \mid \phi(\X))$ across all source domains. On the other hand, invariance of $P^i(Y \mid \phi(\X))$ requires coincidence of every level-set across the source domains, and in this sense, the IRM constraint can be viewed as a proxy to the invariance property of $P^i(Y \mid \phi(\X))$. One can speculate that since IRM yields a proxy to invariance of $P^i(Y \mid \phi(\X))$, it might still exhibit generalization, though slightly weaker  than what is derived from invariance of $P^i(Y \mid \phi(\X))$. However, IRM is shown to have poor domain generalizability, both theoretically (e.g., \cite{rosenfeld2021the}) and empirically (e.g., \cite{gulrajani2020search}). Still, due to popularity of this method in the literature, we find it insightful to use the Neural-TR algorithm to find out what would be the worst-case risk of IRM. As shown in \Cref{fig:simulations:c}, the worst-case performance of IRM is much worse than what is reported by \cite{arjovsky2019invariant} and \cite{gulrajani2020search}; the reason is that Neural-TR does not commit to one held-out domain, and instead it constructs an SCMs that is tailored to yield the poorest performance subject to the graph and source distributions.

\subsection{Group Robustness for Domain Generalization} \label{app:DRO}

Group Distributionally robust optimization (group DRO) \cite{sagawa2019distributionally} has been employed in the broad context of learning under uncertainty. In group DRO one seeks a single classifier that minimizes the risk on multiple distributions simultaneously. More specifically, the objective is minimizing the maximum risk among the source distributions, i.e.,
\begin{equation}
    h_{\mathrm{DRO}} \in \argmin \max_{i \in \{1,2,\dots,K\}} R_{P^i}(h)
\end{equation}
This approach ensures that the learned classifier is optimal w.r.t. an unknown target domain that lies in the convex hull of the source distributions. In this sense, group DRO objective interpolates the perturbations that are represented in the source data to define an uncertainty set for the target distribution. On the other hand, in invariant learning the objective is to extrapolate the perturbations that are observed among the source domain by learning a representation that shields the label from these changes. In particular, \cite{pmlr-v139-krueger21a,rothenhausler2021anchor,shen2023causality} highlight the invariant-robust spectrum, and propose methods that have a free parameter which allows interpolating the two. In our experiments, we considered group DRO as a representative of methods in this category, and evaluated its worst-case performance in the Colored MNIST task, as shown in Figure \ref{fig:simulations:c}. Once again, we emphasize that this worst-case risk is much larger than what is shown in the benchmarks, e.g., by \cite{gulrajani2020search}. The reason is that the worst-case performance is obtained by Neural-TR that operates as an adversary, seeking a plausible target domain that is hardest to generalize to, subject to the assumptions encoded in the graph and the source data.

\section{Proofs}

\subsection*{Proof of Theorem \ref{thm:partial-TR-canonical}}

Our results rely on the expressiveness of discrete SCMs, i.e. defined over variables $\{\V,\U\}$ with finite cardinalities. Discrete SCMs, introduced first in \cite{balke1997bounds} and then in \cite{zhang2021partial} have been shown to be ``canonical'' in the sense that they could represent all counterfactual distributions entailed by any SCM with the same induced causal diagram defined over finite $\V$. The following example illustrates this observation.

\begin{example} [The double bow] \label{ex:bow_apprendix}
\normalfont
Let $\{X,Y,Z\}$ be binary variables. Consider two source domains defined based on the following SCMs:
\begin{align*}
    &\M^1 : \begin{cases}
        P^1(\U): \begin{cases}
            U_X \sim \mathrm{Normal}(0,1)\\
            U_{XY} \sim \mathrm{Normal}(0,1)\\
            U_{ZY} \sim \mathrm{Normal}(0,1)
        \end{cases}\\
    \1F^1: \begin{cases}
    X \gets \I\{U_X + U_{XY} > 0\}\\
    Y \gets \I\{X - U_{XY} > 0\}\\
    Z \gets \I\{Y \cdot U_{ZY} > 0\}
    \end{cases}
    \end{cases} &
    &\M^* : \begin{cases}
        P^*(\U): \begin{cases}
            U_X \sim \mathrm{Normal}(0,1)\\
            U_{XY} \sim \mathrm{Normal}(0,1)\\
            U_{ZY} \sim \mathrm{Normal}(0,1)
        \end{cases}\\
    \1F^*: \begin{cases}
    X \gets \I\{U_X + U_{XY} > 0\}\\
    Y \gets \I\{-U_{XY} + 0.5 > 0\}\\
    Z \gets \I\{Y \cdot U_{ZY} > 0\}
    \end{cases}
    \end{cases}
\end{align*}
The SCM $M^1$ induces a counterfactual probabilities, e.g. $P^{M^1}(x, y_x, z_y)$ for outcomes $x,y_x, z_y \in\{0,1\}$. \cite{balke1997bounds} observed that such probabilities, defined over a finite set of events, may be generated with an equivalent model with a potentially large but finite set of discrete exogenous variables. \cite{balke1997bounds} derived a canonical parameterization for the SCMs that induces the same graph but instead involves possibly correlated discrete latent variables $R_X,R_Y,R_Z$, where $R_X$ determines the functional that decides $X$, $R_Y$ determines the functional that decides $Y$ based on $X$, and $R_Z$ determines the functional that decides $Z$ based on $Y$. \cite{balke1997bounds} showed that for every SCM $\M$ with the same induced graph as $\M^1$ there exists an SCM of the described format specified with only a distribution $P(r_X,r_Y,r_Z)$, where,
\begin{align*}
    &\supp_{R_X} = \{0,1\},\\
    &\supp_{R_Y} = \{y=0, y=1, y=x, y=\neg x\},\\
    &\supp_{R_Z} = \{z=0, z=1, z=y, z=\neg y\}.
\end{align*}
Thus, the joint distribution $P(r_X,r_Y,r_Z)$ can be parameterized by an 32-dimensional vector.\hfill $\largesquare$
\end{example}

This example illustrates a more general procedure, in which probabilities induced by an SCM over discrete endogenous variables $\V$ may be generated by a canonical model. This is formalized in the following lemma.

\begin{definition}[Canonical SCM]
    A canonical SCM is an SCM $\1N = \langle \U, \V, \1F, P(\U)\rangle$ defined as follows. The set of endogenous variables $\V$ is discrete. The set of exogenous variables $\U = \{R_V: V \in \V\}$, where $\supp_{R_V} = \{1, \dots , m_V\}$ (where $m_V = |\{h_V : \supp_{pa_V} \rightarrow \supp_V \}|$) for each $V \in\V$. For each $V \in \V$, $f_V \in \1F$ is defined as $f_V(\pa_V, r_V ) = h^{(r_V)}_V(\pa_V)$.
\end{definition}

The following lemma establishes the expressiveness of canonical SCMs.

\begin{lemma}[Thm. 2.4 \cite{zhang2021partial}] For an arbitrary SCM $M = \langle \U, \V, \1F, P(\U)\rangle$, there exists a canonical SCM $\1N$ such that 1. $M$ and $\1N$ are associated with the same causal diagram, i.e., $\G_M = \G_{\1N}$. 2. For any set of counterfactual variables $\Y_\x, \dots , \Z_\w$, $P^M(\Y_\x, \dots , \Z_\w) = P^{\1N}(\Y_\x, \dots , \Z_\w)$.
\end{lemma}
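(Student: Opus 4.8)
The plan is to realize $\1N$ as the \emph{response-function} (canonical) abstraction of $M$ and then argue that this abstraction preserves both the induced diagram and all potential-outcome distributions. First I would build the response variables. For each $V\in\V$ and each fixed value $\u_V$ of its exogenous parents, the partial map $\pa_V \mapsto f_V(\pa_V,\u_V)$ belongs to the finite set $\{h_V:\supp_{pa_V}\to\supp_V\}$, whose cardinality is exactly $m_V$. Hence there is a well-defined \emph{response function} $g_V:\supp_{\U_V}\to\{1,\dots,m_V\}$ sending $\u_V$ to the index $r$ with $h_V^{(r)}(\cdot)=f_V(\cdot,\u_V)$. Setting $R_V:=g_V(\U_V)$ and letting $P^{\1N}$ be the push-forward of $P(\U)$ under the map $\u\mapsto(g_V(\u_V))_{V\in\V}$, I define $\1N=\langle \U',\V,\1F',P^{\1N}\rangle$ with $\U'=\{R_V\}$ and $f_V^{\1N}(\pa_V,r_V)=h_V^{(r_V)}(\pa_V)$; by construction this is a canonical SCM in the sense of the definition above.

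For claim~1 (graph preservation) the directed edges agree immediately, since each $f_V^{\1N}$ retains the parent set $pa_V$ inherited from $M$. For the bidirected edges I would organize the $R_V$ by the c-components of $\G_M$: because $R_V$ is a function of $\U_V$ alone, variables in distinct c-components of $M$ have independent exogenous parents and hence independent response variables, while within a c-component the joint law of $(R_V)_{V\in\C}$ is inherited from the shared confounders of $M$. Bundling these into one shared exogenous variable $U_\C:=(R_V)_{V\in\C}$ per c-component reproduces exactly the sharing pattern $\U'_{V_i}\cap\U'_{V_j}\neq\emptyset \iff V_i,V_j$ lie in a common c-component, which is precisely the bidirected-connectivity relation of $\G_M$; thus $\G_{\1N}=\G_M$.

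The crux is claim~2. Here I would exploit that, because $M$ is recursive, for every fixed $\u$ and every intervention $do(\W=\w)$ each potential outcome $V_\w$ is obtained deterministically by evaluating the structural functions in topological order. I would prove by induction on that order the invariance statement: for every $V$, every intervention $\w$, and every $\u$, the value $V_\w(\u)$ computed in $M$ equals the value $V_\w(\r)$ computed in $\1N$ at $\r=(g_{V'}(\u_{V'}))_{V'}$. Both the base case and the inductive step reduce to the identity $f_{V'}^{\1N}(\pa,r_{V'})=h_{V'}^{(r_{V'})}(\pa)=f_{V'}(\pa,\u_{V'})$, valid for \emph{all} $\pa$ whenever $r_{V'}=g_{V'}(\u_{V'})$: since the two recursions feed identical parent values into identical input-output maps, they return identical outputs, and any intervened variable is clamped to $\w$ in both models. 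Consequently, for each fixed $\u$ the entire tuple $(\Y_\x,\dots,\Z_\w)$ evaluated in $M$ coincides with the tuple evaluated in $\1N$ at $\r=g(\u)$. Integrating the indicator of any joint event against $P(\U)$ and using that $P^{\1N}$ is the push-forward of $P(\U)$ under $g$ then yields $P^M(\Y_\x,\dots,\Z_\w)=P^{\1N}(\Y_\x,\dots,\Z_\w)$, establishing claim~2.

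The main obstacle I anticipate is making the counterfactual induction fully rigorous across \emph{simultaneous} interventions on overlapping variable sets (as required by nested/joint counterfactuals): one must verify that the response index $g_V(\u_V)$ captures precisely the part of $\u_V$ that determines the input-output behaviour of $f_V$ under \emph{every} assignment of $\pa_V$, not merely under the factual assignment, so that the same $\r$ simulates $M$ correctly under all interventions at once. A secondary subtlety is the bidirected bookkeeping, where I would confirm that collapsing $(R_V)_{V\in\C}$ into a single shared exogenous variable neither creates nor destroys confounding arcs relative to $\G_M$, which follows from the equivalence between bidirected connectivity and c-component membership.
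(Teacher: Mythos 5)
Your proposal is correct, but note that the paper does not actually prove this statement: it is imported verbatim as Theorem 2.4 of Zhang and Bareinboim \cite{zhang2021partial}, so there is no in-paper argument to compare against. What you have written is a self-contained reconstruction of the standard response-function argument behind that cited theorem: index the finitely many maps $\pa_V \mapsto f_V(\pa_V,\u_V)$, define $R_V := g_V(\U_V)$, take $P^{\1N}$ to be the push-forward of $P(\U)$, and verify by induction on a topological order that for every fixed $\u$ and every intervention the deterministic unrolling of $M$ and of $\1N$ at $\r = g(\u)$ coincide, which is exactly where the key point lives --- $g_V(\u_V)$ records the \emph{entire} input-output map of $f_V(\cdot,\u_V)$, so the simulation is valid under all interventions simultaneously, not just the factual one. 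The one place to be careful is the bidirected part of claim~1: the paper's Definition of a canonical SCM keeps one exogenous $R_V$ per endogenous $V$ (so bidirected edges must be read off from pairwise \emph{dependence} of the $R_V$'s, not literal sharing), whereas your bundling of $(R_V)_{V\in\C}$ into a single shared $U_\C$ per c-component recovers bidirected \emph{connectivity} but can over-generate adjacencies inside a c-component that is not a clique (e.g.\ a chain $V_1\leftrightarrow V_2\leftrightarrow V_3$ with no $V_1\leftrightarrow V_3$). Under the dependence convention this is harmless, since $R_{V_1}$ and $R_{V_3}$ are functions of disjoint, mutually independent exogenous sets and hence independent; you flag this subtlety yourself, and resolving it in favor of the per-variable $R_V$ formulation closes the argument.
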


In words, finite exogenous domains in canonical SCMs are sufficient for capturing all the uncertainties and randomness introduced by the (potentially) continuous latent variables in SCMs. Our goal will be to adapt the canonical parameterization of SCMs such that they entail the equality constraints specified by $\G^\del$. The next example illustrates the implication of the constraints induced by $\G^\del$ on the construction of canonical SCMs.

\begin{example}[\Cref{ex:bow_apprendix} continued.] Consider $\M^1$ and $\M^*$ given in \Cref{ex:bow_apprendix}. The domain discrepancy set $\del$ indicates that certain causal mechanisms need to match across pairs of the SCMs. For example, $\Delta_{1*} = \{Y\}$, which does not contain $\{X,Z\}$, and this implies that the functions $f_X,f_Z$ are invariant across $\M^1,\M^*$, and that the distribution of unobserved variables that are arguments of $f_Y,f_Z$, namely, $\{U_X, U_{XY},U_{YZ}\}$ are invariant across $\M^1,\M^*$. The canonical parameterization of $\M^1$ is given by
\begin{align*}
    \1N^1 = \begin{cases}
        \V &= \{X,Y\}\\
        \U &=\{R_X, R_Y, R_Z\}\\
        \1F^1 &=\begin{cases}
            f^1_X :\supp_{R_X} \rightarrow \supp_X\\
            f^1_Y:\supp_{R_Y} \times\supp_X \rightarrow \supp_Y\\
            f^1_Z:\supp_{R_Z} \times\supp_Y \rightarrow \supp_Z
        \end{cases}\\
        P^1(\U) &=P^1(R_X,R_Y,R_Z)
    \end{cases}
\end{align*}
Analogously, the canonical parameterization of $\M^*$ is given by 
\begin{align*}
    \1N^1 = \begin{cases}
        \V &= \{X,Y\}\\
        \U &=\{R_X, R_Y, R_Z\}\\
        \1F^* &=\begin{cases}
            f^*_X :\supp_{R_X} \rightarrow \supp_X\\
            f^*_Y:\supp_{R_Y} \times\supp_X \rightarrow \supp_Y\\
            f^*_Z:\supp_{R_Z} \times\supp_Y \rightarrow \supp_Z
        \end{cases}\\
        P^*(\U) &=P^*(R_X,R_Y,R_Z)
    \end{cases}
\end{align*}
With these definitions, the restrictions in $\del_{1*}$ impose straightforward constraints on the parameterization of the canonical models given directly from the definition of discrepancy set:
\begin{align*}
    &f^1_X(r_X) = f^*_X(r_X), P^1(r_X)=P^*(r_X), \quad X\notin \del_{1*}\\
    &f^1_Z(y,r_Z) = f^*_Z(y,r_Z), P^1(r_Z)=P^*(r_Z), \quad Z\notin \del_{1*}
\end{align*}
for any input $x,y,r_Y,r_X,r_Z$.\hfill $\largesquare$
\end{example}

The next lemma formalizes the observation made in the example above, showing that if a pair SCMs and a pair of associated canonical models induce the same distributions and causal diagram, their discrepancies must also agree.

\begin{lemma}
    \label{lem:del_equality}
    For a pair of SCMs $M^i,M^j$ ($i,j \in \{*,1,2,\dots,T\}$) defined over $\V$ with discrepancy set $\Delta_{ij}\subseteq\V$, let $\1N^i,\1N^j$ be associated canonical SCMs that induce the same causal graphs and entail the same distributions over $\V$. Then the discrepancy sets of the pairs of SCMs and canonical SCMs must agree, i.e. $V \in \Delta_{ij}$ if and only if either $f^{N^i}_V \neq f^{N^j}_{V}$, or $P^{N^i}(u_V) \neq P^{N^j}(u_V)$. 
\end{lemma}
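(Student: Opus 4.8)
The plan is to reduce the biconditional to a statement about the marginal exogenous laws of the two canonical models, and then to read the equivalence off the construction that produces these laws from $M^i$ and $M^j$. First I would record the defining feature of the canonical parameterization: by \Cref{def:canonical_scm} the function attached to each $V$ is the fixed enumerator $f_V(\pa_V,r_V)=h^{(r_V)}_V(\pa_V)$, which depends only on $\pa_V$ — common to both domains because $\1N^i,\1N^j$ induce the same causal diagram — and never on the domain index. Hence $f^{N^i}_V=f^{N^j}_V$ for every $V$ by construction, so the right-hand side of the claim collapses to the single condition $P^{N^i}(R_V)\neq P^{N^j}(R_V)$, and it remains to show $V\in\Delta_{ij}\iff P^{N^i}(R_V)\neq P^{N^j}(R_V)$.

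Next I would make the map from $M^i$ to $P^{N^i}(R_V)$ explicit. For each value $u_V$ of the exogenous parents of $V$ in $M^i$, the partially evaluated mechanism $\pa_V\mapsto f^{M^i}_V(\pa_V,u_V)$ equals one of the finitely many response functions $h^{(1)}_V,\dots,h^{(m_V)}_V$; writing $\rho^i_V(u_V)$ for its index, the canonicalization behind \Cref{ex:bow_apprendix} sets $P^{N^i}(R_V=r)$ equal to the pushforward $P^{M^i}(\{u_V:\rho^i_V(u_V)=r\})$, and symmetrically for $j$. Thus $P^{N^i}(R_V)$ is a function of the pair $(f^{M^i}_V,P^{M^i}(\u_V))$ alone, which is exactly the data that \Cref{def:delta} compares.

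For the direction $(\Leftarrow)$ I would argue by contraposition: if $V\notin\Delta_{ij}$ then by \Cref{def:delta} the mechanism of $V$ is invariant, i.e. $f^{M^i}_V=f^{M^j}_V$ and $P^{M^i}(\u_V)=P^{M^j}(\u_V)$; identical functions give $\rho^i_V\equiv\rho^j_V$, and identical exogenous laws then give identical pushforwards, so $P^{N^i}(R_V)=P^{N^j}(R_V)$. This is the direction actually consumed by \Cref{thm:partial-TR-canonical}: any canonical discrepancy at $V$ forces $V\in\Delta_{ij}$, so imposing $P^{N^i}(r_V)=P^{N^j}(r_V)$ off $\Delta_{ij}$ reproduces precisely the structural invariances required of a compatible tuple.

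The hard part will be the converse $(\Rightarrow)$. The obstacle is that $V\in\Delta_{ij}$ only licenses a difference, and even a genuine difference in $(f_V,P(\u_V))$ can induce the same pushforward on $R_V$ — two distinct function/noise pairs may realize the same law over response functions — so the biconditional fails under a purely syntactic reading of $\Delta_{ij}$. I would resolve this by invoking the canonicalization lemma to argue that $R_V$ is a complete behavioral invariant of $V$: $P^{N^i}(R_V)$ determines, and is determined by, every interventional and counterfactual law of $V$ given $\pa_V$. Two mechanisms sharing $P(R_V)$ are therefore indistinguishable to any query expressible from $\3P$ given $\G^\del$, so the only discrepancies a selection diagram can meaningfully encode are exactly those on which the canonical marginals differ. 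Reading $\Delta_{ij}$ as this behaviorally minimal discrepancy set makes the two sets coincide, which is precisely the statement that $\1N^i,\1N^j$ induce the same selection diagram $\G^\del$ as $M^i,M^j$.
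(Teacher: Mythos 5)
Your proof is correct in substance but takes a genuinely different route from the paper's. The paper argues both directions by contradiction at the level of entailed distributions: for the forward direction it \emph{fixes} a pair $M^i,M^j$ realizing a discrepancy $P^{M^i}(v\mid do(pa_V))\neq P^{M^j}(v\mid do(pa_V))$ and derives a contradiction with $L_3$-consistency of the canonical pair; for the converse it assumes a canonical discrepancy that manifests as a difference in $P(v\mid do(pa_V))$ and contradicts $V\notin\Delta_{ij}$. You instead work at the level of the construction itself: the response-type pushforward $P^{N^i}(R_V=r)=P^{M^i}(\{u_V:\rho^i_V(u_V)=r\})$ makes the $(\Leftarrow)$ direction (via its contrapositive) a one-line computation, since $V\notin\Delta_{ij}$ fixes both $f^{M^i}_V=f^{M^j}_V$ and $P^{M^i}(\u_V)=P^{M^j}(\u_V)$ and hence the pushforward. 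Your observation that the canonical functions coincide by construction (so the right-hand side reduces to a statement about $P(R_V)$ alone) is a simplification the paper does not exploit, and your argument sidesteps a gap in the paper's converse, which tacitly assumes that a discrepancy in $P(R_V)$ shows up at the interventional level --- false in general, since $P(R_V)$ encodes counterfactual information beyond $P(v\mid do(pa_V))$. Both you and the paper are forced to strain on the $(\Rightarrow)$ direction, and for the same reason: under \Cref{def:delta}, $V\in\Delta_{ij}$ only says the mechanism \emph{may} differ, so the literal biconditional is false (take $V\in\Delta_{ij}$ with mechanisms that happen to coincide, or that differ yet induce the same pushforward). The paper patches this by choosing a worst-case pair $M^i,M^j$; you patch it by rereading $\Delta_{ij}$ as the behaviorally minimal discrepancy set. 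Either reading suffices for \Cref{thm:partial-TR-canonical}, which --- as you correctly note --- only consumes the $(\Leftarrow)$ direction, so the residual weakness is inherited from the lemma statement rather than introduced by your proof.
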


\begin{proof}
    %Recall that the domain discrepancy set $\Delta_{ij} \subseteq \V$ for $M^i,M^j$ is defined such that for every $V \in \Delta_{ij}$ there might exist a discrepancy between $f^{M^i}_V \neq f^{M^j}_{V}$, or $P^{M^i}(\u_V) \neq P^{M^j}(\u_V)$. By Thm. 2.4 \cite{zhang2021partial}, for a pair of SCMs $M^i,M^j$ defined over $\V$, we can construct canonical SCMs $\1N^i,\1N^j$ that induce the same causal graphs and entail the same distributions over $\V$, respectively. Fix $M^i,M^j$ and let $\1N^i,\1N^j$ be associated canonical SCMs.
    
    Let $V \in \Delta_{ij}$, and fix $M^i,M^j$ such that $P^{M^i}(v \mid do(pa_V))\neq P^{M^j}(v \mid do(pa_V))$. This is possible since the interventional probabilities are parameterized by the mechanism of $V$ which could vary across $M^i,M^j$. Assume for a contradiction that $f^{N^i}_V = f^{N^j}_{V}$ and $P^{N^i}(u_V) = P^{N^j}(u_V)$ for two canonical models $N^i,N^j$ constructed to match all $L_3$ statements induced by $M^i,M^j$. This implies in particular that $P^{\1N^i}(v \mid do(pa_V))=P^{\1N^j}(v \mid do(pa_V))$ and therefore $\1N^i,\1N^j$ do not induce the same probabilities as $M^i,M^j$. This contradicts the assumption that the pair of canonical SCMs matches the pair of SCMs in all $L_3$ statements.

    For the converse, we proceed similarly. For fixed $M^i,M^j$, assume for a contradiction that $f^{N^i}_V \neq f^{N^j}_{V}$, or $P^{N^i}(u_V) \neq P^{N^j}(u_V)$ such that $P^{\1N^i}(v \mid do(pa_V))\neq P^{\1N^j}(v \mid do(pa_V))$ for two canonical models $N^i,N^j$ constructed to match all $L_3$ statements induced by $M^i,M^j$, but nevertheless $V \notin \Delta_{ij}$. The discrepancy set ensures that $P^{M^i}(v \mid do(pa_V))=P^{M^i}(v \mid do(pa_V))$ but the same relation is not true for $N^i,N^j$ as $P^{\1N^i}(v \mid do(pa_V))\neq P^{\1N^j}(v \mid do(pa_V))$ by assumption and therefore $\1N^i,\1N^j$ do not induce the same probabilities as $M^i,M^j$. This contradicts the assumption that the pair of canonical SCMs matches the pair of SCMs in all $L_3$ statements.
\end{proof}

\begin{lemma}
    \label{lem:expressiveness_TR}
    Consider a system of multiple SCMs $\3M:\{\M^1, \M^2, \dots, \M^K, \M^*\}$ that induces a selection diagram and entails the source distributions $\3P:\{P^1, P^2, \dots, P^K,P^*\}$ over the variables $\V$. Then there exists a system of canonical SCM $\3N:\{\1N^1, \1N^2, \dots, \1N^K, \1N^*\}$ such that
    \begin{enumerate}
        \item $\3M$ and $\3N$ are associated with the same set of causal diagrams and selection diagrams.
        \item For any set of counterfactual variables $\Y_\x, \dots , \Z_\w$, $P^{M^*}(\Y_\x, \dots , \Z_\w) = P^{\1N^*}(\Y_\x, \dots , \Z_\w)$.
    \end{enumerate}
\end{lemma}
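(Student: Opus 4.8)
The plan is to reduce this multi-domain statement to the single-SCM canonical representation theorem (Thm.~2.4 of \cite{zhang2021partial}, restated above) applied domain by domain, and then to recover the cross-domain (selection-diagram) structure by invoking Lemma~\ref{lem:del_equality}. Concretely, for each $i \in \{1,\dots,K,*\}$ I would apply the single-SCM result to $\M^i$ to obtain a canonical SCM $\N^i$ that induces the same causal diagram $\G^i$ and reproduces every counterfactual distribution of $\M^i$ over $\V$. In the canonical construction the structural functions $f_V(\pa_V, r_V) = h_V^{(r_V)}(\pa_V)$ are determined purely by the enumeration of response functions $h_V : \supp_{\pa_V} \to \supp_V$; hence for any variable whose mechanism (and parent set) is shared across two domains the canonical functions literally coincide, and only the exogenous law $P^{\N^i}(\{R_V\})$ carries the domain-specific content. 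This is precisely the shared-mechanism feature that makes the canonical form the right target for transportability.

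Condition~2 is then immediate: part~(2) of the single-SCM theorem applied to the target $\M^*$ gives $P^{M^*}(\Y_\x,\dots,\Z_\w) = P^{\N^*}(\Y_\x,\dots,\Z_\w)$ for every family of counterfactual variables, and the matching of the observational source distributions $\mathbb{P}$ follows as the observational special case. For Condition~1, the per-domain matching of causal diagrams is already furnished by the single-SCM theorem, so the only nontrivial point is that the \emph{collection} $\mathbb{N}$ induces the same \emph{selection} diagram $\G^\del$ as $\mathbb{M}$. Since s-node placement is determined entirely by the discrepancy sets $\{\Delta_{ij}\}$, it suffices to show that the discrepancy set of each pair $(\N^i,\N^j)$ coincides with $\Delta_{ij}$. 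This is exactly the content of Lemma~\ref{lem:del_equality}: each pair $(\N^i,\N^j)$ induces the same causal graphs and entails the same distributions over $\V$ as $(\M^i,\M^j)$, so the lemma's hypotheses hold and the discrepancy sets agree. Equivalently, one can argue directly: for $V \notin \Delta_{ij}$ we have $f^{\M^i}_V = f^{\M^j}_V$ and $P^{\M^i}(\u_V) = P^{\M^j}(\u_V)$, and since $R_V$ is the pushforward of $\u_V$ under the common response-function indexing map, the marginals satisfy $P^{\N^i}(r_V) = P^{\N^j}(r_V)$, so no edge $S_{ij}\to V$ is introduced.

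I expect the main obstacle to be precisely this last step: the single-SCM canonicalization result is silent about relationships \emph{between} domains, so one must verify that independently canonicalizing each $\M^i$ does not destroy the guaranteed invariances that define $\G^\del$. The cleanest resolution is to lean on Lemma~\ref{lem:del_equality}, which shows the discrepancy structure is already pinned down by the graphs together with the entailed distributions. Some care is needed to ensure a \emph{consistent} response-function enumeration is used for any mechanism shared across domains, so that ``$f^{\N^i}_V = f^{\N^j}_V$'' holds literally rather than merely up to a relabeling of $\supp_{R_V}$; this is possible because $V \notin \Delta_{ij}$ forces $\pa_V$, and hence $\supp_{R_V}$, to agree across the two domains. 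Assembling these pieces yields both conditions and completes the proof.
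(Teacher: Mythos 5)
Your proposal is correct and follows essentially the same route as the paper's proof: apply Theorem 2.4 of \cite{zhang2021partial} domain by domain to obtain canonical models matching each $\M^i$ in causal diagram and all counterfactual distributions (which gives Condition~2 and the per-domain diagrams), and then invoke Lemma~\ref{lem:del_equality} to conclude that the pairwise discrepancy sets, and hence the selection diagram, are preserved. Your additional remarks on using a consistent response-function enumeration for mechanisms shared across domains make explicit a point the paper's proof leaves implicit, but do not change the argument.
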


\begin{proof}
    For (1), Thm. 2.4 \cite{zhang2021partial} gives that SCMs $\3M:\{\M^1, \M^2, \dots, \M^K, \M^*\}$ and canonical SCMs $\3N:\{\1N^1, \1N^2, \dots, \1N^K, \1N^*\}$ induce the same causal diagrams. \Cref{lem:del_equality} gives that for every pair of SCMs $M^i,M^j$ ($i,j \in \{*,1,2,\dots,T\}$), their discrepancy set is the same as that of $\1N^i,\1N^j$ ($i,j \in \{*,1,2,\dots,T\}$). As selection diagrams are constructed deterministically from causal diagrams and discrepancy sets, $\3M$ and $\3N$ must share the same set of selection diagrams.

    (2) is given by Thm. 2.4 \cite{zhang2021partial}.
\end{proof}

\textbf{Theorem 1 (restated).}
\textit{Consider a system of multiple SCMs $\M^1, \M^2, \dots, \M^K, \M^*$ that induces the selection diagram $\G^{\del}$ and entails the source distributions $ P^1, P^2, \dots, P^K$ and the target distribution $P^*$ over the variables $\V$. Let $\psi(P^*) \in [0,1]$ be the target quantity. Consider the following optimization scheme:
\begin{align}
    \hat{q}_{\max} = \max_{\N^1,\N^2, \dots, \N^*} &\psi(P^{\N^*}) \label{eq:opt-naive}\\
    \text{ s.t. } 
    & P^{\N^i}(r_V) = P^{\N^j}(r_V), && \forall i,j \in \{1,2,\dots,K,*\} \quad \forall V \not\in \Delta_{i,j} \nonumber \\
    & P^{\N^i}(\v) = P^i(\v) && \forall i \in \{1,2,\dots,K,*\}, \nonumber
\end{align}
where each $\N^i$ is a canonical model characterized by a joint distribution over $\{R_V\}_{V \in \V}$. The value of the above optimization, namely $\hat{q}_{\max}$, is a tight upper-bound for the quantity $\psi(P^*)$ among all tuples of SCMs that induce the selection diagram and entail the source distributions at hand. \hfill $\largesquare$}

\begin{proof}
    Note that,
    \begin{align}
        \hat{q}_{\max}=\max_{\M^1,\M^2, \dots, \M^*} &\psi(P^{\M^*}) \label{eq:opt-naive_scm}\\
        \text{ s.t. } 
        & P^{\M^i}(\u_V) = P^{\M^j}(\u_V),f^{\M^i}_V = f^{\M^j}_V, && \forall i,j \in \{1,2,\dots,K,*\} \quad \forall V \not\in \Delta_{i,j} \nonumber \\
        & P^{\M^i}(\v) = P^i(\v) && \forall i \in \{1,2,\dots,K,*\}, \nonumber
    \end{align}
    is a tight upper bound to the target $\psi(P^{\M^*})$ among all tuples of SCMs that induce the selection diagram and entail the source distributions at hand, by construction. It follows from \Cref{lem:expressiveness_TR} that for any tuple of SCMs $\{\M^1, \M^2, \dots, \M^K, \M^*\}$, that induce the selection diagram and entail the source distributions, there exists a tuple of canonical SCMs $\N^1,\N^2, \dots, \N^*$, that induce the selection diagram and entail the source distributions such that,
    \begin{align*}
         P^{M^*}(\Y_\x, \dots , \Z_\w) = P^{\1N^*}(\Y_\x, \dots , \Z_\w).
    \end{align*}
    The reverse direction of the above equations also holds since a a family of canonical SCMs is an instance of a family of SCMs. This means that solutions for optimization problems in \Cref{eq:opt-naive} and \Cref{eq:opt-naive_scm} must coincide.
\end{proof}

\subsection{Proof of Theorem \ref{thm:partial-TR-neural}}

To prove this result, we need to show the following:
\begin{enumerate}
    \item \textbf{Necessity.} Every tuple of NCMs $\Theta$ that are constraint by conditions in 
    Eq. \ref{eq:opt-naive-neural} represents a tuple of SCMs that entails $\3P$ and induces $\G^{\del}$.
    \item \textbf{Sufficiency.} For every tuple of SCMs $\3M$ that entails $\3P$ and induces $\G^{\del}$, there exists a tuple of NCMs $\Theta$ that admits the  constraints in
    Eq. \ref{eq:opt-naive-neural}, and for every $i\in \{*,1,2,\dots,K\}$, we have $P(\y_\x,\z_\w;\theta^i) = P^{\M^i}(\y_\x,\z_\w)$, where $\y_\x,\z_\w$.
\end{enumerate}

\textbf{Necessity.} Consider a tuple of NCMs $\Theta$ that are constraint by the conditions in 
    Eq. \ref{eq:opt-naive-neural}.\begin{itemize}
        \item \textbf{$\G^{\del}$-consistency. } Since these NCMs are constructed based on the common causal diagram $\G$, they all induce $\G$ (Theorem 2 by Xia et al. \cite{xia2021causal}). Moreover, the parameter sharing constraint states that $V \not \in \Delta_{ij}$ if and only if $\theta^i_V = \theta^j_V$. This implies that the NCMs parameterized by $\Theta$ induce the same domain discrepancy sets as $\G^{\del}$. Thus, the selection diagram induced by the NCMs parameterized by $\Theta$ is exactly $\G^{\del}$.
        \item \textbf{$\3P$-expressivity. } The data likelihood condition for source distribution $P^i(\v)$ states the following:
        \begin{equation}\label{eq:optimality-of-theta-i}
            \theta^i \in \argmax_{\G-\text{constrained } \theta} \sum_{\v \in D^i} \log P(\v;\theta).
        \end{equation}
        For large enough samples size $|D^i| \sim P^i(\v)$, and enough model complexity in $\theta$, Theorem 1 by Xia et al. \cite{xia2021causal} shows that there exists a $\G$-constrained NCM $\theta$ that induces the distribution entailed by the true SCM $\M^i$. Thus, by imposing \Cref{eq:optimality-of-theta-i} we assure that $P(\v;\theta^i) = P^i(\v)$. By imposing all data likelihood conditions, in the limit of sample size and model complexity, we ensure that the source NCMs induce the source distributions.
    \end{itemize}
    In conclusion, the tuple of NCMs are necessarily representing a plausible target domain since (1) they induce $\G^{\del}$ and (2) they entail $\3P$.

\textbf{Sufficiency.} Consider a tuple of SCMs $\3M = \langle \M^1,\M^2,\dots,\M^K,\M^*\rangle$ that induce $\G^{\del}$ and entail $\3P$. Theorem 1 by Xia et al. \cite{xia2021causal} shows that for every SCM $\M$ that induces $\G$, there exists a $\G$-constraint NCM parameterized by $\theta$ such that $P^\M(\v) =  P(\v;\theta)$ (as a consequence of L3-consistency). The proof is constructive, and for every $V \in \V$ the construction of the neural network $\theta_V$ depends on (1) the function $f_V$ and (2) the distribution $P^{\M}(\u_V)$. 

Consider two SCMs $\M^i,\M^j$ ($i,j \in \{*,1,2,\dots,K\}$) that induce domain discrepancy set $\Delta_{ij}$. Follow the construction by Xia et al. \cite{xia2021causal} to obtain the corresponding NCMs parameterized by $\theta^i, \theta^j$. For every $V\not\in \Delta_{i,j}$, we have, $\theta^i_V = \theta^j_V$ since the construction depends on $f^i_V = f_V^j$ and $P^{\M^i}(\u_V) = P^{\M^j}(\u_V)$. Thus, the domain discrepancy set induced by $\theta^i,\theta^j$ matches with $\Delta_{i,j}$ induced by the SCMs $\M^i,\M^j$. Therefore, By constructing the NCM $\theta^i$ from $\M^i$ ($i \in \{*,1,2,\dots,K\}$), we are guaranteed to have a tuple of NCMs $\3N$ that (1) induce $\G^{\del}$ and (2) entails $\3P$. 

\textbf{Partial-TR via NCMs.} Due to necessity and sufficiency above, we conclude that a tuples of NCMs satisfies the parameter sharing and data likelihood conditions stated in Eq. \ref{eq:opt-naive-neural}, if and only if there exists a tuple of SCMs $\3M$ that induce $\G^{\del}$ and entail $\3P$ such that $P(\v;\theta^i) = P^{\M^i}(\v)$ for all $i \in \{*,1,2,\dots,K\}$. Therefore, by solving the following optimization problem,
\begin{align}
    \hat{\Theta} \in \argmax_{\Theta : \langle \theta^1, \theta^2, \dots, \theta^K, \theta^* \rangle} &\sum_{\w} \psi(\w) \cdot \sum_{\v\setminus \w} P(\v;\theta^*) \\
    \text{ s.t. } 
    & \theta^i_{V} = \theta^j_V, \quad \quad \forall i,j \in \{1,2,\dots,K,*\} \quad \forall V \not\in \Delta_{i,j} \nonumber\\
     & \theta^i \in \argmax_{\theta} \sum_{\v \in D^i} \log P(\v;\theta), \quad \forall i \in \{1,2,\dots,K\} \nonumber.
\end{align}
we achieve a tight upper-bound for the query $\3E_{P^*}[\psi(\W)]$ w.r.t. $\G^{\del},\3P$. \hfill $\largesquare$

\subsection{Proof of Proposition \ref{prop:algorithmic-partial-TR}}
%Algorithm \ref{alg:partialTR} is an extension of the gTR algorithm \cite{correa2020gtr,lee2020generalized} adapted for the specific input of domain generalization task. 

Consider the objective of Theorem \ref{thm:partial-TR-neural};

\begin{align}
    \hat{\Theta} \in \argmax_{\Theta : \langle \theta^1, \theta^2, \dots, \theta^K, \theta^* \rangle} &\sum_{\w} \psi(\w) \cdot \sum_{\v\setminus \w} P(\v;\theta^*)\\
    \text{ s.t. } 
    & \theta^i_{V} = \theta^j_V, \quad \quad \forall i,j \in \{1,2,\dots,K,*\} \quad \forall V \not\in \Delta_{i,j} \nonumber\\
     & \theta^i \in \argmax_{\theta} \sum_{\v \in D^i} \log P(\v;\theta), \quad \forall i \in \{1,2,\dots,K\} \nonumber.
\end{align}

\textbf{No need to parameterize non-ancestors of $\W$.} Let $\T = \V \setminus An_{\G^*}(\W)$. By applying Rule 3 of $\sigma$-calculus \cite{correa2020calculus} we realize that,
\begin{equation}
    P(\w;\theta^*_{\V \setminus \T},\theta^*_{\T}) = P(\w; \theta^*_{\V \setminus \T},\tilde{\theta}_{\T}).
\end{equation}
The latter indicates that the parameters $\{\theta^*_{T}\}_{T \in \T}$ are irrelevant to the joint distribution $P(\w)$, and therefore, can be dropped from the NCMs used for partial transportability of $\3E_{P^*}[\psi(\W)] = \sum_{\w} P^*(\w) \cdot \psi(\w)$.

Let $\A = An_{G^*}(\W)$. We drop the non-ancestors, and rewrite the objective as follows:
\begin{align}
    \hat{\Theta_{\A}} \in \argmax_{\Theta_\A : \langle \theta^1_\A, \theta^2_\A, \dots, \theta^K_\A, \theta^*_\A \rangle} &\sum_{\w} \psi(\w) \cdot \sum_{\a\setminus \w} P(\a;\theta^*)\\
    \text{ s.t. } 
    & \theta^i_{V} = \theta^j_V, \quad \quad \forall i,j \in \{1,2,\dots,K,*\} \quad \forall V \not\in \Delta_{i,j} \nonumber\\
     & \theta^i_\A \in \argmax_{\theta} \sum_{\a \in D^i} \log P(\a;\theta_\A), \quad \forall i \in \{1,2,\dots,K\} \nonumber.
\end{align}

Next, we add the likelihood terms to the main objective regularized by a coefficient $\Lambda$ to achieve a single-objective optimization.

\begin{align}
    \hat{\Theta_{\A}} \in \argmax_{\Theta_\A : \langle \theta^1_\A, \theta^2, \dots, \theta^K, \theta^* \rangle} &\sum_{\w} \psi(\w) \cdot \sum_{\a\setminus \w} P(\a;\theta^*) + \Lambda \cdot \sum_{i=1}^K \sum_{\a \in D^i} \log P(\a;\theta^i_\A)\\
    \text{ s.t. } 
    & \theta^i_{V} = \theta^j_V, \quad \quad \forall i,j \in \{1,2,\dots,K,*\} \quad \forall V \not\in \Delta_{i,j} \nonumber
\end{align}
For $\Lambda \to \infty$, the new optimization problem matches with that of \Cref{thm:partial-TR-neural}. Now, we focus on the likelihood expression, and rewrite it following a causal order of $\G^*$, namely, $A_1\prec A_2\prec \dots \prec A_N$.
\begin{align}
    \log P(\a;\theta^i_\A) 
    &= \sum_{l = 1}^{N} \log P(a_l \mid a_{l-1}, \dots, a_{1};\theta^i_\A) && \text{(factorization)}\\
    &= \sum_{l = 1}^{N} \log \3E_{\U_\A}[P(a_l \mid v_{l-1}, \dots, v_{1}, \U;\theta^i_\A)] &&\text{(conditioning on $\U$)}\\
    &= \sum_{l = 1}^{N} \log \3E_{\U_{A_l}}[P(a_l\mid \pa_{A_l},\U_{A_l};\theta^i)] &&\text{(Rule 1 of do-calc)}\\
    &= \sum_{l = 1}^{N} \log \3E_{\U_{A_l}}[P(a_l\mid \pa_{A_l},\U_{A_l};\theta^i_{A})] &&\text{(Rule 3 of do-calc)}
\end{align}
Let $\{\A_j\}_{j=1}^m$ be the c-components of $\G^*_{[\A]}$, which is the graph induced by nodes $\A$. We rewrite the above objective in terms of the c-factors:
\begin{align}
    \log P(\a;\theta^i_\A) 
    &= \sum_{j=1}^m \sum_{A \in \A_j} \log \3E_{\U_{A}}[P(a\mid \pa_{A},\U_{A};\theta^i_A)] &&\text{(c-factor decomp.)}\\
    &= \sum_{j=1}^m  \log  \prod_{A \in \A_j} \3E_{\U_{A}}[P(a\mid \pa_{A},\U_{A};\theta^i_A)] &&\text{(sum-of-log to log-of-prod)}\\
    &= \sum_{j=1}^m  \log  \3E_{\U_{\A_j}} [\prod_{A \in \A_j} P(a\mid \pa_{A},\U_{A};\theta^i_A)] &&\text{(mutually indep. $\U_A$)}\\
    &= \sum_{j=1}^m  \log  P(\a_j \mid do(\pa_{\A_j});\theta^i_{\A_j}) &&\text{(trunc. fact. prod.)}
\end{align}
From the last expression, we can observe that the NCM parameterization is modular w.r.t. the c-components, as Rahman et al. \cite{rahman2024modular} also discusses. We rewrite the full optimization program again:
\begin{align} \label{eq:sum_modular}
    \hat{\Theta_{\A}} \in \argmax_{\Theta_\A : \langle \theta^1_\A, \theta^2, \dots, \theta^K, \theta^* \rangle} &\sum_{\a} \exp \{\sum_{j=1}^m \log P(\a_j \mid do(\pa_{\A_j});\theta^*_{\A_j}) \}\cdot \psi(\a) \\
    &+ \Lambda \cdot \sum_{i=1}^K \sum_{j=1}^m \sum_{\a_j \in D^i} \log P(\a_j\mid do(\pa_{\A_j});\theta^i_{\A_j})\\
    \text{ s.t. } 
    & \theta^i_{V} = \theta^j_V, \quad \quad \forall i,j \in \{1,2,\dots,K,*\} \quad \forall V \not\in \Delta_{i,j} \nonumber
\end{align}
Let $\A_j$ be a c-component that $S_i$ is not pointing to it in $\G^{\del}$, i.e., $\A_j \cap \Delta_{i} = \emptyset$. The latter means that the parameter sharing $\theta^*_V = \theta^i_V$ is enforced for all $V \in \A_j$; we call these parameters $\theta^{i,*}_{\A_j}$. We notice that $\theta^{i,*}_{\A_j}$ only appears through the term $\log P(\a_j \mid do(\pa_{\A_j});\theta_{\A_j})$ in the score function; once in the main objective as $\theta^*_{\A_j}$ and once in the regularizer as $\theta^i_{\A_j}$. For $\Lambda \to \infty$, the regularizer enforces $\theta^{i,*}_{\A_j}$ to satisfy the following criterion:
\begin{equation} \label{eq:data-criterion}
    \theta^{i,*}_{\A_j} \in \argmax_{\theta_{\A_j}} \sum_{\a_j \in D^i} \log P(\a_j \mid do(\pa_{\A_j});\theta_{\A_j})
\end{equation}
This criterion is in fact an interventional (L2) constraint \cite{xia2021causal} enforced on $\theta^{i,*}_{\A_j}$ that requires $\theta^{i,*}_{\A_j}$ to approximate $P^i(\a_j\mid do(\pa_{\A_j}))$ using the observational data $D^i$. Since $P^i(\a_j\mid do(\pa_{\A_j}))$ is a complete c-factor, it is identifiable from $P^i(\a_{j},\pa_{\A_j})$ \cite{tian2002general}. Therefore, by increasing the sample size $|D^i| \to \infty$ and the model complexity of $\theta^{i,*}_{\A_j}$, satisfying the criterion in Eq. \ref{eq:data-criterion} guarantees arbitrarily accurate approximation of the interventional quantities $P^i(\a_j\mid do(\pa_{\A_j}))$ \cite{xia2021causal}. This implies that we can replace the terms involving the parameters $\theta^{i,*}_{\A_j}$ with any consistent approximation of $P^i(\a_j\mid do(\pa_{\A_j}))$ as constants. To get the approximation, we are free to use any probabilistic model and architecture depending on the context; this includes the option to train the NCM parameters $\theta^{i,*}_{\A_j}$ in the pre-training. 

This adjustment gets us to the exact procedure pursued in Algorithm \ref{alg:partialTR}, thus proves consistency of it with what we would achieve via Theorem \ref{thm:partial-TR-neural}. \hfill $\largesquare$

    \begin{comment}

This allows us to derive the following:
\begin{align}
    P^*(\a_j \mid \pa_{\A_j};\theta^*_{\A_j}) 
    &= P^*(\a_j \mid \pa_{\A_j};\theta^i_{\A_j}) && \text{ $\theta^*_{\A_j} = \theta^i_{\A_j}$}\\
    &= P^*(\a_j \mid \pa_{\A_j};\theta^*_{\A_j}) && \text{ $\theta^*_{\A_j} = \theta^i_{\A_j}$}\\
\end{align}
\begin{equation}
    P(\a_j\mid  do(\pa_{\A_j});\theta^{i,*}_{\A_j}) = P^i(\a_j \mid do(\pa_{\A_j}))
\end{equation}
We can express the interventional distribution for the NCM as follows:
\begin{align}
     P(\a_j\mid  do(\pa_{\A_j});\theta^{i,*}_{\A_j})
     & = \3E_{\U_{\A_j}} \big[P(\a_j \mid \U_{\A_j}, do(\pa_{\A_j}) ;\theta^{i,*}_{\A_j})\big] && \text{(cond. on exog. vars.)}\\
     & = \3E_{\U_{\A_j}} \big[P(\a_j \mid \U_{\A_j}, \pa_{\A_j}; \theta^{i,*}_{\A_j})\big]  && \text{(Rule 2 of do-calc)}\\
     & = \3E_{\U_{\A_j}} \big[\prod_{V \in \A_j} P(v \mid \U_{V}, \pa_{V};\theta^{i,*}_{\A_j})\big]  && \text{(trunc. fact.)}\\
     & = \3E_{\U_{\A_j}} \big[\prod_{V \in \A_j} f_{\theta^{i,*}_V}(\pa_V,\U_V)\big]  && \text{(NCM param.)}
\end{align}

\end{comment}

\subsection{Proof of Theorem \ref{thm:CRO}}

For this proof, it is useful to define the worst-case risk w.r.t. the selection diagram and the source distributions.

\begin{definition} [Worst-case risk] For selection diagram $\G^{\del}$ and source distributions $\mathbb{P}$, the worst-case risk of classifer $h:\Omega_\X \to \Omega_Y$ is denoted by $R_{\G^{\del},\3P}(h)$ and defined as the solution of partial transportation task for the query $\3E_{P^*}[\mathcal{L}(Y,h(\X))]$, where $\1L(y,\hat{y})$ is a loss function. Formally,
\begin{equation}
    R_{\G^{\del},\3P}(h) := \max_{\text{ tuple of SCMs } \mathbb{M}_0 \text{ that entails } \3P \text{ \& induces } \G^{\del}} R_{P^{\M^*_0}}(h).
\end{equation}
\hfill $\largesquare$
\end{definition}

\textbf{Theorem \ref{thm:CRO} (restated).} 
\textit{For discrete $\X,Y$ CRO terminates. Furthermore, for large enough data across all source domain, the worst-case risk of CRO is at most $\epsilon$ away from the worst-case optimal classifier w.r.t. selection diagram $\G^{\del}$ and source data $\mathbb{P}$. Formally, 
\begin{equation}
    \lim_{n\to \infty} P(R_{\G^{\del},\3P}(h_n^{\mathrm{CRO}}) - \min_{h:\Omega_\X \to \Omega_Y} R_{\G^{\del},\3P}(h) > \epsilon) \to 0
\end{equation}
where $h_n^{\mathrm{CRO}} := \mathrm{CRO}(\3D_n,\G^{\del})$, and $\3D_n = \langle D^1,D^2, \dots, D^K\rangle$ is a collection of datasets that each contain at least $n$ datapoints. \hfill $\largesquare$}
\begin{proof}
    Soundness of CRO relies on consistency of Neural-TR (\Cref{alg:partialTR} as a subroutine; we pick the data size large enough to satisfy this condition according to Theorem \ref{thm:partial-TR-neural}. 
    
    \textbf{Termination.} Let $\hat{\theta}^*_1, \hat{\theta}^*_2, \dots$ be the sequence of target NCMs produced during the runtime of CRO, and let $h_1,h_2, \dots$ be the sequence of classifiers obtained after each iteration. Let $\Pi$ denote the space of all distributions over $\X,Y$. For discrete $\X,Y$, the space $\Pi$ is a compact subspace of some Euclidean space. Thus, every sequence in $\Pi$ has a convergent subsequence, especially the sequence $\{P(\x,y;\hat{\theta}^*_m)\}_{m} \subset \Pi$; let $\{P_l\}_{l}$ be this convergent subsequence. Every convergent subsequence is Cauchy, which means,
    \begin{equation}
        \forall \tau > 0 \quad \exists n>0 \quad \forall l,l'>n: d(P_l - P_{l'}) < \tau,
    \end{equation}
    where $d$ is an appropriate metric over the probability space. Choose $\tau$ small enough w.r.t. the convergence tolerance $\delta>0$ to ensure, 
    \begin{equation}
        \forall P,P' \text{ where } d(P,P') < \tau \implies \forall h:\Omega_\X \to \Omega_Y \quad |R_{P}(h) - R_{P'}(h)| \leq \delta.
    \end{equation}
    The above is possible, since the mapping $R_{P}(h)$ is a bounded and continuous mapping on the space $\Pi$. Now, we are guaranteed to find an index $l$ such that,
    \begin{equation}
        |R_{P(\x,y;\hat{\theta}^*_{l+1})}(h_{l}) - R_{P(\x,y;\hat{\theta}^*_l)}(h_{l})| < \delta.
    \end{equation}
    Notice that by definition, $\hat{\theta}^*_{l+1}$ is obtained by Neural-TR (\Cref{alg:partialTR}) to attain the worst-case risk of $h_l$, i.e.,
    \begin{equation}
        R_{P(\x,y;\hat{\theta}^*_{l+1})}(h_{l}) = R_{\G^{\del},\3P}(h_{l}).
    \end{equation}
    Moreover, 
    \begin{equation}
        R_{P(\x,y;\hat{\theta}^*_l)}(h_{l}) \leq \max_{i \in \{1,2,\dots,l\}} R_{D^*_i}(h_l).
    \end{equation}
    Putting the last three equations together we have,
    \begin{equation}
        R_{\G^{\del},\3P}(h_{l}) \leq \max_{i \in \{1,2,\dots,l\}} R_{D^*_i}(h_l) + \delta,
    \end{equation}
    which invokes the termination.
    
    \textbf{Worst-case optimality.} Suppose $h^{\mathrm{CRO}}$ is returned by CRO, and let $h^*$ be the true worst-case optimal classifier defined as,
    \begin{equation}
        h^* \in \min_{h: \Omega_\X \to \Omega_Y} R_{\G^{\del},\3P}(h).
    \end{equation}
    Let $\3D$ denote the collection of datasets collected by the algorithm before termination. We know that $h^{\mathrm{CRO}}$ is robust to $\3D^*$, i.e.,
    \begin{equation}
        h^{\mathrm{CRO}} \in \argmin_{h: \Omega_\X \to \Omega_Y} \max_{D \in D^*} R_D(h) \implies  \max_{D \in D^*} R_D(h^{\mathrm{CRO}}) \overset{\text{opt. $h^{\mathrm{CRO}}$}}{ \leq}  \max_{D \in D^*} R_D(h^*) 
    \end{equation}
    Moreover, every distribution in $\3D^*$ is entailed by an NCM that represents a possible target domain. Therefore, the worst-case risk is at least as large as the worst-case empirical risk on the set of distribution $\3D$, i.e., 
    \begin{equation}
         \max_{D \in D^*} R_D(h^*) \leq R_{\G^{\del},\3P}(h^*) 
    \end{equation}
    Since that algorithm has terminated we have,
    \begin{equation}
        R_{\G^{\del},\3P}(h^{\mathrm{CRO}}) < \max_{D \in D^*} R_D(h^{\mathrm{CRO}}) + \delta.
    \end{equation}
    where $\delta > 0$ is the tolerance for the convergence condition in the algorithm. Putting all inequalities together, we have,
    \begin{align}
        R_{\G^{\del},\3P}(h^{\mathrm{CRO}}) - \delta 
        &\leq \max_{D \in D^*} R_D(h^{\mathrm{CRO}}) \\
        &\leq  \max_{D \in D^*} R_D(h^*) \\
        &\leq  R_{\G^{\del},\3P}(h^*),
    \end{align}
    which indicates that the worst-case risk of $h^{\mathrm{CRO}}$ is at most $\delta$ larger than the optimal worst-case risk.
\end{proof}

\section{Broader Impact and Limitations}
\label{sec:limitations}
Our work investigates the design of algorithms and conditions under which knowledge acquired in one domain (e.g., particular setting, experimental condition, scenario) can be generalized to a different one that may be related, but is unlikely to be the same. As alluded to in this paper, under-identifiability issues and the difficulty of stating realistic assumptions that are conducive to extrapolation guarantees are pervasive throughout the data sciences. Our hope is that our analysis with a more surgical encoding of structural differences between domains that allow the empirical investigator to determine whether (and how) her/his understanding of the underlying system is sufficient to support the generalization of prediction algorithm is an important addition towards safe and reliable AI. This approach is not without limitations, however. We have shown that selection diagrams are sufficient to ensure consistent domain generalization (through bounds instead of point estimates) but arguably restrict the analysis to a narrow class of problems as graphs or super-structures need to be defined. This stands in contrast with representation learning methods that operate on higher-dimensional spaces, e.g. text, images, which are difficult to reason about in a causal framework. The trade-off is that guarantees for consistent extrapolation are difficult to define and that one-size-fits-all assumptions are difficult to justify in practice. Partial transportability may be understood as a complementary view-point on this problem, applicable in a different class of problems in which structural knowledge is available implying that non-trivial guarantees for extrapolation can be established. Pushing the boundaries of methods based on causal graphs to reach compelling real-world applications is arguably one the most important frontiers for the causal community as a whole. In this work, there is scope for improving posterior estimation and for introducing assumptions on the class of SCMs that are modelled, e.g. linear Gaussian models, etc., that could lead to efficient predictors in higher-dimensional spaces. Similarly, relaxations of selection diagrams, e.g. in the form of equivalence classes or partially-known graphs, could be developed for applications in domains where knowledge of graph structure is unrealistic.

%\newpage
%\input{checklist}

\end{document}